\documentclass{article}

\usepackage[preprint]{neurips_2026}

\usepackage[utf8]{inputenc} 
\usepackage[T1]{fontenc}    
\usepackage{hyperref}       
\usepackage{url}            
\usepackage{booktabs}       
\usepackage{amsfonts}       
\usepackage{nicefrac}       
\usepackage{microtype}      
\usepackage{xcolor}         

\usepackage{xspace}
\usepackage{enumitem}
\usepackage{amsmath,amsthm,amssymb}
\usepackage{nicefrac}
\usepackage{bm}
\usepackage{multirow}
\usepackage{pifont}

\theoremstyle{plain}
\newtheorem{theorem}{\textbf{Theorem}}
\newtheorem{lemma}[theorem]{\textbf{Lemma}}
\newtheorem{corollary}[theorem]{Corollary}
\newtheorem{proposition}[theorem]{Proposition}

\theoremstyle{definition}
\newtheorem{definition}{Definition}
\newtheorem{assumption}{Assumption}

\newcommand{\emphb}[1]{\textbf{#1}}

\newcommand{\PP}{\mathbb{P}}
\newcommand{\EE}{\mathbb{E}}

\newcommand{\bone}{\mathbf{1}}

\DeclareMathOperator*{\argmin}{arg\,min}
\DeclareMathOperator*{\argmax}{arg\,max}

\newcommand{\Fcal}{\mathcal{F}}
\newcommand{\Gcal}{\mathcal{G}}
\newcommand{\Scal}{\mathcal{S}}
\newcommand{\Acal}{\mathcal{A}}
\newcommand{\Tcal}{\mathcal{T}}

\newcommand{\Lcal}{\mathcal{L}}

\newcommand{\Wcal}{\mathcal{W}}
\newcommand{\Bcal}{\mathcal{B}}
\newcommand{\Ccal}{\mathcal{C}}

\newcommand{\Rmax}{R_{\max}}
\newcommand{\Vmax}{V_{\max}}
\newcommand{\RR}{\mathbb{R}}
\newcommand{\Var}{\mathrm{Var}}
\newcommand{\Xcal}{\mathcal{X}}
\newcommand{\emp}[1]{\widehat{#1}}

\newcommand{\alg}{\textsc{Q-MMR}\xspace}
\newcommand{\ii}{{(i)}}
\newcommand{\nn}{{[n]}}
\newcommand{\wh}{\emp{w}}
\newcommand{\wt}{\widetilde{w}}
\newcommand{\wb}{\bar{w}}
\newcommand{\lossmatch}{\emp{\Lcal}}
\newcommand{\dm}{{\mathsf{d}}}
\newcommand{\nmbd}{\Theta}
\newcommand{\tab}{\textrm{tab}}

\newcommand{\train}{\textup{train}}
\newcommand{\test}{\textup{test}}
\newcommand{\xmean}{\overline{x}_\test}
\newcommand{\hatSigma}{\emp{\Sigma}}
\newcommand{\hatLambda}{\emp{\Lambda}}
\newcommand{\hattheta}{\emp{\theta}}
\newcommand{\hatP}{\emp{P}}
\newcommand{\hatnu}{\emp{\psi}}

\newcommand{\epsstat}{\epsilon^{\textup{stat}}}
\newcommand{\secmom}[1]{\|\textstyle #1\|_{[n]}}

\newcommand{\ft}{\psi}
\newcommand{\Bpi}{B^\pi}
\newcommand{\Sigmacr}{\Sigma^{\textup{cr}}}

\newcommand{\Hilb}{\overline{\Fcal}}

\newcommand{\bp}{K^\pi}
\newcommand{\evn}{|_n}

\newcommand{\wstar}{w^\star}
\newcommand{\poploss}{\Lcal}

\usepackage{tcolorbox}
\newcounter{resct}

\usepackage{algorithm}
\usepackage{algpseudocode}
\algrenewcommand\algorithmicrequire{\textbf{Input:}}
\algrenewcommand\algorithmicensure{\textbf{Output:}}

\title{\alg: Off-Policy Evaluation via \\ Recursive Reweighting and Moment Matching}

\author{%
  Xiang Li \\
  Nanjing University\\
  \texttt{lllllimited306@gmail.com} \\
  \And
  Nan Jiang \\
  UIUC \\
  \texttt{nanjiang@illinois.edu}
}

\begin{document}

\maketitle

\begin{abstract}
We present a novel theoretical framework, \alg, for off-policy evaluation in finite-horizon MDPs. \alg learns a set of scalar weights, one for each data point, such that the reweighted rewards approximate the expected return under the target policy. The weights are learned inductively in a top-down manner via a moment matching objective against a value-function discriminator class. Notably, and perhaps surprisingly, a data-dependent finite-sample guarantee for general function approximation can be established under only the realizability of $Q^\pi$, with a \textit{\textbf{dimension-free}} bound---that is, the error does \textbf{\textit{not}} depend on the statistical complexity of the function class. We also establish connections to several existing methods, such as  importance sampling and 
linear FQE. Further theoretical analyses shed new light on the nature of coverage, a concept of fundamental importance to offline RL. 
\end{abstract}

\section{Introduction}
\label{sec:intro}

Off-policy evaluation (OPE) using value-function approximation is a fundamental and core problem in the theory of reinforcement learning (RL). In this task, one estimates the expected return of a target policy $\pi$ using data collected from some behavior policy, by leveraging a function class $\Fcal$ that captures the target policy's Q-value function $Q^\pi$. For general function approximation, the canonical approach is Fitted-Q Evaluation (FQE), which solves a sequence of least-square regression problems \citep{ernst2005tree, chen2019information, le2019batch}. FQE can also be viewed as the theoretical prototype of the TD algorithm, which is a foundational building block of RL. 

While the theory of OPE  with general value-function approximation is largely considered a mature area, where finite-sample guarantees for FQE and related algorithms have been established \citep{jiang2025offline}, there are still notable gaps in our understanding: 
\begin{itemize}[leftmargin=*]
\item \textbf{Dimension dependence.} All existing analyses under general function approximation (GFA) need to translate the empirical loss to its  population counterpart, during which one pays for the statistical dimension of the function class (e.g., $\log|\Fcal|$ for finite $\Fcal$ or the log covering number for continuous classes). While this dependence is widely accepted, there are clearly situations where this is loose: for example, when the data is on-policy, Monte-Carlo (MC) estimate provides straightforward guarantees without any $\log|\Fcal|$ dependence, raising the question of whether the dimension dependence is real or an artifact of the algorithm design or its analysis. 
\item \textbf{Inconsistency between the general and the linear analyses.} Apart from the analyses under GFA, researchers have also obtained refined, sharp (i.e., often minimax optimal) guarantees in the linear and the tabular settings \citep{duan2020minimax, yin2021towards}. As it turns out, the main $1/\sqrt{n}$ terms in these guarantees are indeed dimension-free. 
Moreover, the linear analyses hold under strictly weaker conditions than the Bellman-completeness assumption widely adopted in the GFA analyses. In fact, not only the linear analyses cannot be obtained by specializing the GFA analyses to the linear classes, they also give rise to novel concepts---such as seemingly unconventional notions of coverage (see Section~\ref{sec:weight})---that are nowhere to be found in the GFA analyses. 
\end{itemize}

In this paper, we address the above gaps with a novel algorithmic framework for OPE 
in finite-horizon MDPs. The \alg algorithm learns a set of weights, one for each data point, inductively in a top-down manner via a moment-matching objective against a value-function discriminator class $\Fcal$. Our main guarantee is a data-dependent finite-sample bound (Theorem~\ref{thm:main}, Section~\ref{sec:method}), which is \textbf{dimension-free} in that it has no explicit dependence on the complexity of $\Fcal$. (A data-independent bound will be provided in Section~\ref{sec:weight}, where only the higher-order term is dimension dependent.) For linear $\Fcal$, \alg coincides with linear FQE; unlike the GFA analyses of FQE (which is dimension-dependent and requires Bellman completeness), our guarantees fully recover the key properties of the fine-grained linear and tabular analyses, under only the realizability of $Q^\pi$. 

A key quantity in our bound is the second moment of the learned weights, which plays the role of an empirical coverage parameter \citep{amortila2026unifying}. In further analyses (Section~\ref{sec:weight}), we identify its population counterpart and examine its properties, which shed light on the nature of coverage in offline RL. 
The high-level take-away messages are consistent with the (somewhat counterintuitive) recent findings \citep{perdomo2023complete, amortila2026unifying} (e.g., the coverage target is \textit{\textbf{not}} the feature occupancy in the real MDP, but that from an approximate linear dynamical system; see Eq.\eqref{eq:lds}), but those findings were restricted to the linear setting and difficult to generalize. Our algorithms and analyses provide a much-needed generalization of such insights to GFA; see Table~\ref{tab:compare} for a summary.

\begin{table}[]
\centering
\caption{Comparison between existing analyses and ours. Each entry corresponds to an analysis of an algorithm (row) in a setting (column). Columns to the right represent more specialized settings, whose results are plug-in instantiations of more general analyses to the left. (This is why ``FQE'' and ``Linear FQE'' do not coincide in the linear columns, even though they become the same algorithm there.) 
For our analyses, we use the population version of coverage (see Section~\ref{sec:weight}) for better comparison with the existing literature. ``Dim-free" refers to the property of not depending on the dimension $\dm$ or $\log|\mathcal{F}|$-type terms in the data-dependent bound, or only depending on it in higher-order terms in the data-independent bound. ${\color{blue} {}^{(*)}}$Technically, the linear analysis of \citet{duan2020minimax} considers the linear-MDP setup which is stronger than Bellman completeness, but it is likely that their proofs and results can be adapted  to the weaker settings (e.g., only realizability), especially given the later insights of \citet{perdomo2023complete, amortila2026unifying}; see Appendix~\ref{app:prior-linear} for details. 
\label{tab:compare}}
\makebox[\textwidth][c]{%
\begin{tabular}{|l|cl|cc|cc|cc|}
\hline
                   & \multicolumn{2}{c|}{\begin{tabular}[c]{@{}c@{}}General + \\  realizability\end{tabular}} & \multicolumn{2}{c|}{\begin{tabular}[c]{@{}c@{}}General +\\  completeness\end{tabular}}                       & \multicolumn{2}{c|}{\begin{tabular}[c]{@{}c@{}}Linear +\\  realizability\end{tabular}}                                  & \multicolumn{2}{c|}{\begin{tabular}[c]{@{}c@{}}Linear +
                   \\ completeness\end{tabular}}                                    \\ \cline{2-9} 
\multirow{-2}{*}{} & \multicolumn{1}{c|}{Coverage}  & \begin{tabular}[c]{@{}l@{}}Dim-\\ free?\end{tabular}  & \multicolumn{1}{c|}{Coverage}   & \multicolumn{1}{l|}{\begin{tabular}[c]{@{}l@{}}Dim-\\ free?\end{tabular}} & \multicolumn{1}{c|}{Coverage}              & \multicolumn{1}{l|}{\begin{tabular}[c]{@{}l@{}}Dim-\\ free?\end{tabular}} & \multicolumn{1}{c|}{Coverage}               & \multicolumn{1}{l|}{\begin{tabular}[c]{@{}l@{}}Dim-\\ free?\end{tabular}} \\ \hline
FQE                & \multicolumn{2}{c|}{{\color[HTML]{FE0000} }}                                           & \multicolumn{1}{c|}{$\underset{f,f'\in\Fcal_h}{\sup}\frac{|\EE_{d_h^\pi}[f-f']|}{\sqrt{\EE_{d_h^D}[(f - f')^2]}}$} & \textcolor{red}{\ding{55}}                                                & \multicolumn{2}{c|}{{\color[HTML]{FE0000} \textbf{N/A}}}                                                                        & \multicolumn{1}{c|}{$\|\EE_{d_h^\pi}[\phi]\|_{\Sigma_h^{-1}}$}                   & \textcolor{red}{\ding{55}}                                                 \\ \cline{1-1} \cline{4-9} 
\begin{tabular}[c]{@{}l@{}}Linear\\ FQE\end{tabular}             & \multicolumn{2}{c|}{\multirow{-2}{*}{{\color[HTML]{FE0000} \textbf{N/A}}}}                      & \multicolumn{2}{c|}{{\color[HTML]{FE0000} \textbf{N/A}}}                                                             & \multicolumn{1}{c|}{}                      &                                                                           & \multicolumn{1}{c|}{}                       &                                                                           \\ \cline{1-5}
Ours               & \multicolumn{1}{c|}{$\sqrt{\EE_{d_h^D}[{\wstar_h}^2]}$}        & \multicolumn{1}{c|}{\textcolor{green}{\ding{51}}}                              & \multicolumn{1}{c|}{$\underset{f\in\text{span}(\Fcal_h)}{\sup}\frac{|\EE_{d_h^\pi}[f]|}{\sqrt{\EE_{d_h^D}[f^2]}}$}          & \textcolor{green}{\ding{51}}                                                                       & \multicolumn{1}{c|}{\multirow{-2}{*}{
${\color{blue} {}^{(*)}}$  $\|\psi_h\|_{\Sigma_h^{-1}}$}} & \multirow{-2}{*}{\textcolor{green}{\ding{51}}}                                                     & \multicolumn{1}{c|}{\multirow{-2}{*}{${\color{blue} {}^{(*)}}$ $\|\EE_{d_h^\pi}[\phi]\|_{\Sigma_h^{-1}}$}} & \multirow{-2}{*}{\textcolor{green}{\ding{51}}}                                                     \\ \hline
\end{tabular}}
\end{table}
\section{Preliminaries}
\label{sec:prelim}


\paragraph{Finite-horizon MDPs} We consider an $H$-step finite-horizon MDP, where a random trajectory takes the form of 
$s_0, a_0, r_0, s_1, a_1, r_1, \ldots, s_H, a_H, r_H.$ 
W.l.o.g., $s_0, a_0$ is fixed and $r_0\equiv 0$, which will allow us to write our algorithm in a concise and unified manner.
We also assume that the state space is layered, i.e., $\Scal = \bigcup_{h=0}^H \Scal_h$, where $\{\Scal_h\}_{h=0}^H$ are disjoint sets and it always holds that $s_h \in \Scal_h$. The layered state space makes it convenient to use stationary notation for time-inhomogeneous objects. Let $\Acal$ be the action space, and we consider finite and discrete $\Scal$ and $\Acal$ although their cardinalities can be arbitrarily large. $R: \Scal\times\Acal\to\Delta([0, \Rmax])$ is the reward function, and $P: \Scal\times\Acal\to\Delta(\Scal)$ is the transition dynamics, where $P(\cdot|s, a)$ for $s\in\Scal_h$ and $a\in\Acal$ is always supported on $\Scal_{h+1}$. 

A policy $\pi: \Scal\to\Delta(\Acal)$ determines a distribution of trajectories starting from the fixed $s_0, a_0$ as $r_h \sim R(\cdot|s_h, a_h)$, $s_{h+1} \sim P(\cdot|s_h, a_h)$, $a_{h+1} \sim \pi(\cdot|s_{h+1})$, $\forall h\ge 0$. We use $\EE_{\pi}[\cdot]$ to denote the expectation under such a distribution, and $J(\pi):= \EE_{\pi}[\sum_{h=1}^H r_h]$ is the expected return. 
Note that the random return $\sum_{h=1}^H r_h \in [0, \Vmax]$ where $\Vmax:= H \Rmax$. The occupancy distribution $d_h^\pi$ is the marginal distribution of $(s_h, a_h)$ under $\pi$. The Q-function of $\pi$, $Q^\pi(s,a)$, is defined as: $\forall s\in\Scal_h, a\in\Acal$, $Q^\pi(s,a) = \EE_\pi[\sum_{h'=h}^H r_{h'} | s_h=s, a_h=a]$. We use $Q_h^\pi$ to denote $Q^\pi$ restricted to $\Scal_h\times\Acal$. It satisfies the Bellman equation: 
$Q_{h-1}^\pi = \Tcal^\pi Q_{h}^\pi$, where $\Tcal^\pi$ is the Bellman operator: given any $f:\Scal_h\times\Acal\to\RR$ and $s\in\Scal_{h-1}, a\in\Acal$, $(\Tcal^\pi f)(s,a) = \EE_{r \sim R(\cdot|s,a), s'\sim P(\cdot|s,a)} [r + f(s',\pi)]$. Here $f(s,\pi)$ is a shorthand for $\EE_{a\sim \pi(\cdot|s)}[f(s,a)]$ and $f(s',\pi)$ for $s'$ at time $H+1$ is always treated as $0$.  
Since $s_0, a_0$ is fixed, we have $Q^\pi(s_0, a_0) = J(\pi)$.

\paragraph{Off-Policy Evaluation (OPE)}~ In OPE, we want to estimate $J(\pi)$ for a given target policy $\pi$, using data trajectories collected with a behavior policy $\pi_D$. We assume the dataset consists of $n$ i.i.d.~trajectories, $\{(s_0^\ii, a_0^\ii, r_0^\ii, \ldots, s_H^\ii, a_H^\ii, r_H^\ii)\}_{i=1}^n$, and $d_h^D$ is the marginal distribution of $(s_h, a_h)$ in the data. $\EE_D[\cdot]$ denotes expectation w.r.t.~the data distribution. 

To handle large state and action spaces, we consider the standard setting of function approximation, where a function class $\Fcal = \{\Fcal_h\}_{h=1}^H$ is used  
to model $Q^\pi$; each $\Fcal_h$ consists of functions of the type $\Scal_h\times\Acal\to\RR$. 
Throughout the paper we assume the standard realizability assumption, that is:

\begin{assumption}[Realizability] \label{asm:realizable}
$Q^\pi \in \Fcal$, which is shorthand for $Q_h^\pi \in \Fcal_h$, $\forall h \in [H]$. 
\end{assumption}

Extension to handle misspecification is routine (see comment below Eq.\eqref{eq:Q2F}). Note that most standard analyses of OPE algorithms, especially those that use a general function class $\Fcal$, typically require the stronger \textit{Bellman-completeness} assumption \citep{antos2008learning, chen2019information}:
\begin{assumption}[Bellman Completeness]\label{asm:completeness}
$\Tcal^\pi f \in \Fcal_{h}, \forall f\in \Fcal_{h+1}, h\in[H]$. 
\end{assumption}

Our main guarantee (Theorem~\ref{thm:main}), however, \textbf{does not rely on such a stronger assumption}, and we discuss the implications of Bellman completeness in our results in Section~\ref{sec:weight}.

While our main results are derived for GFA, to develop intuitions and make connections with the literature, it will be useful to consider the special case of linear and tabular function approximation:

\begin{definition}[Linear function class] 
\label{def:linear}
$\Fcal$ is linear in a given feature map $\phi:\Scal\times\Acal\to\RR^\dm$, if $\Fcal=\Fcal_\phi = \{(s,a)\mapsto \phi(s,a)^\top \theta: \|\theta\|_2 \le \nmbd\}$ for some boundedness parameter $\nmbd > 0$.
\end{definition}

\begin{definition}[Tabular function class] \label{def:tabular}
$\Fcal$ is tabular if $\Fcal=\Fcal_\tab = [0, \Vmax]^{\Scal\times\Acal}$.
\end{definition}

\paragraph{Fixed-design analyses of linear regression} 
\label{sec:lr}
We briefly recall 
the \textit{fixed-design} analysis of linear regression (LR), which will be extremely helpful for understanding our algorithm and analyses, since they correspond to the special case of OPE with $H=1$ and linear $\Fcal$. Consider LR over a dataset $\{(x_i, y_i)\}_{i=1}^n$ where $x_i \in \RR^\dm$, and assume 
$\hatSigma:=\tfrac1n\sum_i x_i x_i^\top$ is invertible. LR gives $\emp{\theta} = \hatSigma^{-1} \tfrac{1}{n}\sum_i  x_i y_i$, and thus makes the prediction of $x_\test^\top \emp{\theta} = \tfrac{1}{n}\sum_i x_\test^\top \hatSigma^{-1} x_i y_i$ on a new $x_\test$. Standard fixed-design analysis provides a $O(\|x_\test\|_{\hatSigma^{-1}} \sqrt{\log(1/\delta)/n})$ high-probability error bound for this prediction, where $\|x_\test\|_{\hatSigma^{-1}} = \sqrt{x_\test^\top \hatSigma^{-1} x_\test}$ is the Mahalanobis norm. This bound is ``fixed-design'' as the concentration argument is w.r.t.~the randomness of $y_i|x_i$, where $\{x_i\}_{i=1}^n$ are treated as fixed instead of stochastic. 
It can be proved via a moment-matching formulation of LR, which is a special case of our Algorithm~\ref{alg:main}; see Appendix~\ref{app:covariate} for further detail.

\paragraph{Math notation} $[H]= \{1, 2, \ldots, H\}$. $\lesssim$ and $\gtrsim$ are inequalities that hold up to a positive absolute constant. We also use the big-Oh notation $O(\cdot)$ to highlight dependencies on selected variables. For a function $f\in\Fcal_h$, we use $f\evn\in\RR^n$ to denote its sample evaluation $\bigl(f(s_h^{(1)},a_h^{(1)}),\ldots,f(s_h^{(n)},a_h^{(n)})\bigr)$.

\begin{algorithm}[t]
\caption{\alg (\textbf{Q}-function-based \textbf{M}oment-\textbf{M}atched \textbf{R}eweighting)}\label{alg:main}
\begin{algorithmic}[1]
\Require Target policy $\pi$, data $\{(s_0^\ii, a_0^\ii, r_0^\ii, \ldots, s_H^\ii, a_H^\ii, r_H^\ii)\}_{i=1}^n$, function class $\Fcal = \{\Fcal_h\}_h$. 
\Ensure $\{\wh_h^\nn\}_{h}$, where $\wh_h^\nn := [\wh_h^\ii]_i \in \RR^n$; $\emp{J}(\pi) := \sum_{h=1}^H\frac{1}{n}\sum_{i=1}^n \wh_h^\ii r_h^\ii$.
\State Initialize $
\wh_0^\ii \equiv 1$, $\forall i$. 
\For{$h=1$ to $H$}
    \State Choose $\wh_h^\nn$ as the ${\color{red} w_h^\nn}$ that (approximately) minimizes the loss
    \begin{align} \label{eq:loss-match}
    \lossmatch_h({\color{red} w_h^\nn}; \wh_{h-1}^\nn) := \sup_{f\in\Fcal_h}  \left| \frac{1}{n}\sum_{i=1}^n {\color{red} w_h^\ii} f(s_{h}^\ii, a_{h}^\ii) - \frac{1}{n}\sum_{i=1}^n \wh_{h-1}^\ii f(s_{h}^\ii, \pi) \right|.
    \end{align}
    \quad ~ (Optional) Among $\wh_h^\nn$ with the same loss, minimize $\secmom{\wh_h^\nn} := \sqrt{\frac{1}{n}\sum_i (\wh_h^\ii)^2}$.
\EndFor
\end{algorithmic}
\end{algorithm}

\section{Algorithm and Analyses}
\label{sec:method}

Our algorithm, \alg, learns a scalar weight $\wh_h^\ii$ for each state-action pair $(s_h^\ii, a_h^\ii)$ in the data, and forms the estimate of $J(\pi)$ using the reweighted reward: $\emp{J}(\pi) := \frac{1}{n}\sum_{i=1}^n\sum_{h=1}^H \wh_h^\ii r_h^\ii$. In fact, the estimate will be unbiased if $\wh_h^\ii$ is set to the following expression, as well understood from the literature on marginalized importance sampling (MIS) \citep{liu2018breaking, xie2019towards}:
\begin{align} \label{eq:mis}
\textrm{(MIS Ratio)} \qquad    d_h^\pi(s_h^\ii, a_h^\ii)/d_h^D(s_h^\ii, a_h^\ii).
\end{align}
However, learning the MIS ratio in Eq.\eqref{eq:mis} requires strong assumptions and is sufficient but not necessary for accurate OPE, and our algorithm enjoys guarantees when $\wh_h^\ii$ does not learn this ratio but something more nuanced (see  Section~\ref{sec:weight}). Nevertheless, to develop the first intuition, it is useful to temporarily \textit{\textbf{pretend}} that our algorithm learns such weights inductively in a top-down manner.

\paragraph{Algorithm Intuition}~ We now provide a non-rigorous (but relatively easy to understand) explanation of our Algorithm~\ref{alg:main}, \alg, which computes $\wh_h^\nn=\{\wh_h^\ii\}_{i=1}^n$ level-by-level from $h=0$ to $H$. We do not represent such weights in a parameterized manner as standard MIS methods \citep{liu2018breaking, uehara2019minimax}, but rather learn a separate scalar weight for each of the $n$ data points. 
Since $(s_0, a_0)$ is the fixed dummy initial state-action pair, there is no distribution shift at $h=0$ and we set $w_0^\ii = \frac{d_0^\pi(s_0, a_0)}{d_0^D(s_0, a_0)} \equiv 1$. 

Inductively, suppose we already have $\wh_{h-1}^\ii \approx \frac{d_{h-1}^\pi(s_{h-1}^\ii, a_{h-1}^\ii)}{d_{h-1}^D(s_{h-1}^\ii, a_{h-1}^\ii)}$, and we want to compute $\wh_{h}^\ii \approx \frac{d_{h}^\pi(s_{h}^\ii, a_{h}^\ii)}{d_{h}^D(s_{h}^\ii, a_{h}^\ii)}$. (Again, this is generally not the case in our analysis.) By the principle of importance sampling,\footnote{That is, given distributions $p$ and $q$ over $\Xcal$ and $f:\Xcal\to\RR$, we have $\EE_p[f] = \EE_q[p/q \cdot f]$ when $p/q<\infty$.} $\wh_{h-1}^\nn =\{\wh_{h-1}^\ii\}_{i=1}^n$ reweights the data distribution at level $h-1$ from $d_{h-1}^D$ to $d_{h-1}^\pi$. As a consequence, the following (reweighted) distributions over $(s_h, a_h)$, as characterized by their generative processes, produce the same expectations:
\begin{align*}
&~ s_h \sim P(\cdot|s_{h-1}, a_{h-1}),~ a_h \sim \pi(\cdot|s_h), ~ (s_{h-1}, a_{h-1})\sim {\color{red} d_{h-1}^D}, ~ \textrm{reweighted by} ~ \tfrac{{\color{blue}d_{h-1}^\pi(s_{h-1}, a_{h-1})}}{{\color{red}d_{h-1}^D(s_{h-1}, a_{h-1})}} \\
\Leftrightarrow &~ s_h \sim P(\cdot|s_{h-1}, a_{h-1}),~ a_h \sim \pi(\cdot|s_h),~ (s_{h-1}, a_{h-1})\sim {\color{blue} d_{h-1}^\pi}
~ \Leftrightarrow ~ (s_h, a_h) \sim d_h^\pi.
\end{align*}
This way, we successfully produce a weighted distribution that ``looks like'' $d_h^\pi$---by applying the weight $\wh_{h-1}^\ii$ at level $h-1$ to the state-action pair $(s_h^\ii, \pi)$ at level $h$. 
To complete the induction, we will need  $\wh_h^\ii$ that reweight $(s_h^\ii, a_h^\ii)$, but now we only have weights that reweight $(s_h^\ii, \pi)$. To bridge this gap, Eq.\eqref{eq:loss-match} employs a moment-matching objective based on the Integral Probability Metric (IPM): we find $w_h^\ii$ such that $\wh_h^\nn$-reweighted $\{(s_h^\ii, a_h^\ii)\}_{i=1}^n$ is indistinguishable from $\wh_{h-1}^\nn$-reweighted $\{(s_h^\ii, \pi)\}_{i=1}^n$, up to the discrimination power of $\Fcal_h$.

\subsection{Finite-Sample Guarantee}
We now present the main analyses for \alg, which face two key challenges:
\begin{enumerate}[leftmargin=*]
\item While the earlier intuition appeals to learning the MIS ratio, this is generally not guaranteed when we only assume realizability (Assumption~\ref{asm:realizable}). Under such a weak assumption, we need to bound the OPE error without showing that $\wh_h^\nn$ tracks the MIS ratio.
\item The number of free parameters ($\{\wh_h^\nn\}_h$) scales linearly in $n$, the number of data points. Under the na\"ive parameter counting argument, this usually results in overfitting, i.e., an error bound that does not shrink as $n$ grows. Such a practice of assigning a separate scalar to each data point is found in distributionally robust optimization \citep{rahimian2019distributionally} and balancing weights \citep{kallus2018balanced}, and its analyses often require sophisticated concentration machinery.
\end{enumerate}

We address both challenges by a starkly simple and elementary analysis, whose core proof takes but a few lines. Perhaps surprisingly, our analysis does \textbf{not} perform any union bound over the function class $\Fcal$ at all, which avoids the statistical complexity of $\Fcal$ (e.g., $\log|\Fcal|$ for finite $\Fcal$). 

The analysis is given as follows, where we start with the OPE error:
{\allowdisplaybreaks
\begin{align}
&~ \left| \textstyle \sum_{h=1}^H \frac{1}{n} \sum_{i=1}^n \wh_h^\ii r_h^\ii - J(\pi) \right| = \left| \textstyle \sum_{h=0}^H \frac{1}{n} \sum_{i=1}^n \wh_h^\ii r_h^\ii - J(\pi) \right| \tag{$r_0\equiv 0$} \nonumber \\
\le  &~ \left| \textstyle \sum_{h=0}^H \frac{1}{n} \sum_{i=1}^n {\color{blue} \bm{\wh_h^\ii}} \left(Q^\pi(s_h^\ii, a_h^\ii) - {\color{blue} \bm{Q^\pi(s_{h+1}^\ii, \pi)} }\right) - J(\pi) \right| + \textstyle \sum_{h=0}^H \epsstat_h \label{eq:before-match} \\
\le &~ \left| \textstyle \sum_{h=0}^H \frac{1}{n} \sum_{i=1}^n \left(\wh_h^\ii Q^\pi(s_h^\ii, a_h^\ii) - {\color{red} \bm{ \wh_{h+1}^\ii Q^\pi(s_{h+1}^\ii, a_{h+1}^\ii)}}\right) - J(\pi) \right| \label{eq:after-match}  \\
& \quad + \textstyle \sum_{h=1}^H \lossmatch_h(\wh_h^\nn; \wh_{h-1}^\nn) + \sum_{h=0}^H \epsstat_h \nonumber \\
= &~\textstyle \sum_{h=1}^H \lossmatch_h(\wh_h^\nn; \wh_{h-1}^\nn) + \sum_{h=0}^H \epsstat_h \nonumber.
\end{align}}
We explain the derivation step by step:
\begin{enumerate}[leftmargin=*]
\item First, we replace $r_h^\ii$ with $Q^\pi(s_h^\ii, a_h^\ii) -Q^\pi(s_{h+1}^\ii, \pi)$ by paying for an error term $\epsstat_h$, where
$$
\left|\textstyle \frac{1}{n}\sum_{i=1}^n \wh_h^\ii \left( Q^\pi(s_h^\ii, a_h^\ii) - r_h^\ii - Q^\pi(s_{h+1}^\ii, \pi)\right)\right| \le \epsstat_h \propto \secmom{\wh_h^\nn} := \sqrt{\frac{1}{n}\sum_{i} (\wh_h^\ii)^2}
$$
holds with high probability. 
This follows from a standard concentration argument, which treats $\{(s_h^\ii, a_h^\ii)\}_{i=1}^n$ as fixed and is w.r.t.~the randomness of $r_h^\ii$ and $s_{h+1}^\ii$ conditioned on $(s_h^\ii, a_h^\ii)$. As long as $\wh_h^\nn$ are chosen only based on the trajectory prefixes up to $\{(s_h^\ii, a_h^\ii)\}_{i=1}^n$ and do not depend on the randomness after time $h$, the independence across the terms is intact, though the range of each term will be amplified by the size of $\wh_h^\ii$. 
\item From \eqref{eq:before-match} to \eqref{eq:after-match}, we replace ${\color{blue} \bm{\wh_h^\ii Q^\pi(s_{h+1}^\ii, \pi)}}$ with ${\color{red} \bm{ \wh_{h+1}^\ii Q^\pi(s_{h+1}^\ii, a_{h+1}^\ii)}}$. Their difference is almost exactly the matching loss used in our algorithm, except that we do not know $Q_{h+1}^\pi$ in the algorithm and instead relax it to $\sup_{f\in\Fcal_{h+1}}$ using realizability (Assumption~\ref{asm:realizable}): 
\begin{align} \label{eq:Q2F}
&~ \left|\frac{1}{n}\sum_{i=1}^n{\color{red} \bm{ \wh_{h+1}^\ii Q^\pi(s_{h+1}^\ii, a_{h+1}^\ii)}} - \frac{1}{n}\sum_{i=1}^n{\color{blue} \bm{\wh_h^\ii Q^\pi(s_{h+1}^\ii, \pi)}} \right| \\
\le&~ \sup_{f\in\Fcal_{h+1}} \left| \frac{1}{n}\sum_{i=1}^n \wh_{h+1}^\ii f(s_{h+1}^\ii, a_{h+1}^\ii) - \frac{1}{n}\sum_{i=1}^n \wh_{h}^\ii f(s_{h+1}^\ii, \pi) \right|  = \lossmatch_{h+1}(\wh_{h+1}^\nn; \wh_{h}^\nn).   \nonumber
\end{align}
If realizability is not satisfied, an additional misspecification term will occur in the above inequality. In addition, the realizability of $\Fcal_{h+1}$ can be relaxed to that of the convex hull of $\Fcal_{h+1}$ as the matching loss is convex in $f$ \citep{uehara2019minimax}, which we omit for a clean presentation.
\item Finally, the first line of Eq.\eqref{eq:after-match} vanishes due to telescoping, as the negative term in the $h$-th term exactly cancels the positive term in the $(h+1)$-th term, leaving only $\wh_0^\ii Q^\pi(s_0, a_0) = J(\pi)$. 
\end{enumerate}
This immediately leads to the following guarantee of \alg; see proof in  Appendix~\ref{app:proof-main}.
\begin{theorem} \label{thm:main}
For any choice of weight vectors $\{\wh_h^\nn\}_h$ in Algorithm~\ref{alg:main} where $\wh_h^\nn$ only depends on $\{(s_0^\ii, a_0^\ii, r_0^\ii, \ldots, s_h^\ii, a_h^\ii)\}$, under Assumption~\ref{asm:realizable}, with probability at least $1-\delta$, we have 
$$
\left|\emp{J}(\pi) - J(\pi)\right| \le  \textstyle \sum_{h=1}^H \lossmatch_h(\wh_h^\nn; \wh_{h-1}^\nn) + \sum_{h=0}^H \epsstat_h,
$$
where $\epsstat_h = \|\wh_h^\nn\|_\nn\cdot V_{\max}\sqrt{\frac{2\log(2(H+1)/\delta)}{n}}$. 
\end{theorem}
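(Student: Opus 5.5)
The plan is to make the three-step derivation sketched before Theorem~\ref{thm:main} rigorous. The only probabilistic ingredient is the per-level concentration bound defining $\epsstat_h$; everything else is deterministic algebra. Once the concentration events are in place, I will telescope and use realizability (Assumption~\ref{asm:realizable}).

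\textbf{Concentration.} For each $h\in\{0,\dots,H\}$, let $\Fcal^{(h)}$ be the $\sigma$-field generated by all prefixes $\{(s_0^\ii,a_0^\ii,r_0^\ii,\ldots,s_h^\ii,a_h^\ii)\}_{i=1}^n$. By hypothesis, $\wh_h^\nn$ is $\Fcal^{(h)}$-measurable. Define
\[
X_i := Q^\pi(s_h^\ii,a_h^\ii) - r_h^\ii - Q^\pi(s_{h+1}^\ii,\pi).
\]
Because trajectories are i.i.d.\ and Markov, conditional on $\Fcal^{(h)}$ the pairs $(r_h^\ii,s_{h+1}^\ii)$ are independent across $i$, with law $R(\cdot|s_h^\ii,a_h^\ii)\times P(\cdot|s_h^\ii,a_h^\ii)$. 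The Bellman equation $Q_h^\pi=\Tcal^\pi Q_{h+1}^\pi$ (with the terminal convention at $h=H$) then gives $\EE[X_i\mid\Fcal^{(h)}]=0$.

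Each $X_i$ lies in an interval of length at most $\Vmax$, since $r_h+Q^\pi(s_{h+1},\pi)\in[0,(H-h+1)\Rmax]\subseteq[0,\Vmax]$ and $Q^\pi(s_h^\ii,a_h^\ii)$ is fixed given $\Fcal^{(h)}$. Conditionally on $\Fcal^{(h)}$, Hoeffding's inequality applied to $\frac1n\sum_i \wh_h^\ii X_i$, a sum of independent terms with ranges $|\wh_h^\ii|\Vmax$, gives the following bound with probability at least $1-\delta/(H+1)$:
\[
\Bigl|\tfrac1n\textstyle\sum_i \wh_h^\ii X_i\Bigr| \le \sqrt{\tfrac{\sum_i (\wh_h^\ii)^2\Vmax^2 \log(2(H+1)/\delta)}{2n^2}} \le \epsstat_h .
\]
Integrating over $\Fcal^{(h)}$ makes the bound unconditional. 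A union bound over the $H+1$ levels then yields the joint event with probability $1-\delta$. There is no union over $\Fcal$, because only the fixed function $Q^\pi$ enters.

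\textbf{Deterministic part.} On this event I write $r_h^\ii = Q^\pi(s_h^\ii,a_h^\ii)-Q^\pi(s_{h+1}^\ii,\pi)-X_i$, and the $r_0$ term is included for free since $r_0\equiv0$. For each $h$ from $0$ to $H-1$, I add and subtract $\frac1n\sum_i\wh_{h+1}^\ii Q^\pi(s_{h+1}^\ii,a_{h+1}^\ii)$. The mismatch from this substitution is bounded by $\lossmatch_{h+1}(\wh_{h+1}^\nn;\wh_h^\nn)$ via Eq.~\eqref{eq:Q2F}, plugging $f=Q_{h+1}^\pi\in\Fcal_{h+1}$ into the supremum.

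The remaining sum telescopes. At $h=H$, the term $Q^\pi(s_{H+1},\pi)$ is $0$ by convention. What is left is $\frac1n\sum_i \wh_0^\ii Q^\pi(s_0,a_0)=J(\pi)$, because $\wh_0^\ii\equiv1$ and $Q^\pi(s_0,a_0)=J(\pi)$. The triangle inequality then gives the claimed bound.

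The main obstacle is the measurability and conditioning step. One must check that conditioning on all $n$ prefixes up to $(s_h,a_h)$ leaves the $(r_h^\ii,s_{h+1}^\ii)$ conditionally independent with the correct Markov law, even though $\wh_h^\nn$ couples all $n$ data points. This holds because the trajectories are independent and each next-step variable depends only on its own current $(s_h^\ii,a_h^\ii)$. The rest of the proof is bookkeeping of indices and constants.
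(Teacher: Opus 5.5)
Your proposal is correct and follows the paper's proof essentially step for step. The paper also applies Hoeffding conditionally on the prefix $\sigma$-field, using per-term ranges proportional to $|\wh_h^\ii|\Vmax$ and a union bound over the $H+1$ levels, and then telescopes deterministically after plugging $f=Q_{h+1}^\pi$ into each matching loss. The only difference is cosmetic: you use the interval-length form of Hoeffding, which yields a bound a factor of $2$ smaller than $\epsstat_h$, whereas the paper uses the cruder bound $|\varepsilon_h^\ii|\le\Vmax$ and obtains exactly $\epsstat_h$.
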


\paragraph{Dimension-freeness and ``WYSIWYG''} Standard OPE analyses in offline RL under general function approximation need to translate the empirical loss to its  population counterpart, during which one pays for the statistical dimension of the function class. 
In contrast, our analysis only performs concentration on $Q^\pi$ (Eq.\eqref{eq:before-match}), and no concentration on $\Fcal$ is needed, resulting in a dimension-free bound. 
In addition, the empirical matching loss $\lossmatch$ observed in the algorithm (Eq.\eqref{eq:loss-match}) shows up in the bound, which makes Theorem~\ref{thm:main} ``what you see is what you get (WYSIWYG)''. One downside is that it can be difficult to tell when the matching loss will be large vs.~small; as we will see below, in special cases like linear $\Fcal$, there are mild sufficient conditions (i.e., invertible sample covariance matrix) which guarantee that \textit{$0$ matching loss is achievable} (Propositions~\ref{prop:H=1} and \ref{prop:linear}). 

\paragraph{Uncertainty quantification}~ In addition to the empirical matching loss, the other term in the bound, $\epsstat_h$, depends on the 2nd moment of $\wh_h^\nn$, which is also available to the learner. Therefore, the bound in Theorem~\ref{thm:main} is directly computable from the data, 
which means that our \alg algorithm automatically comes with uncertainty quantification. Similar results can be found in existing analyses of linear OPE, such as \citet[Theorem 4]{duan2020minimax}, and our bound in Theorem~\ref{thm:main} applies more generally to GFA and improves over \citet{duan2020minimax} in several aspects; see Appendix~\ref{app:prior-linear}. 

\paragraph{``Fixed-design'' analysis}~ Theorem~\ref{thm:main} provides a data-dependent bound, which is an unusual form of guarantee in the OPE literature. By considering the following special case, however, we see that Theorem~\ref{thm:main} is the natural generalization of the standard fixed-design analysis in linear regression. 
\begin{proposition} \label{prop:H=1}
Consider $H=1$ and linear $\Fcal$ (see Definition~\ref{def:linear}). Let $x_i=\phi(s_1^\ii,a_1^\ii)$. For invertible $\hatSigma = \frac{1}{n}\sum_{i=1}^n x_i x_i^\top$, with $\wh_1^\ii = \xmean^\top \hatSigma^{-1} x_i$ where $\xmean = \frac{1}{n}\sum_{i=1}^n\phi(s_1^\ii,\pi)$, we have
\begin{center}
(i) $\lossmatch_1(\wh_1^\nn; \wh_0^\nn) = 0$, \qquad and \qquad (ii)  $\secmom{\wh_1^\nn} = \|\xmean\|_{\hatSigma^{-1}}$.  \vspace*{-.5em}
\end{center}
The bound in Theorem~\ref{thm:main} becomes $|\emp{J}(\pi) - J(\pi)| \le \epsstat_0 + \epsstat_1$, where $\epsstat_1 \propto \secmom{\wh_1^\nn} = \|\xmean\|_{\hatSigma^{-1}}$. When $\hatSigma$ is non-invertible, the minimizer is $\wh_1^\ii = \xmean^\top \hatSigma^{\dagger} x_i$ where $(\cdot)^\dagger$ is pseudo-inverse. 
\end{proposition}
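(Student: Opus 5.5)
With $H=1$ and linear $\Fcal_1$, the matching loss in Eq.\eqref{eq:loss-match} reduces to a linear-algebraic statement, so the plan is to verify (i) and (ii) by direct computation and then read off the rest from Theorem~\ref{thm:main}.

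\emph{Rewriting the loss.} Since $\wh_0^\ii\equiv 1$, every $f\in\Fcal_1$ has the form $f=\phi^\top\theta$ with $\|\theta\|_2\le\nmbd$. Write $X\in\RR^{n\times\dm}$ for the matrix with rows $x_i^\top$. Then
\begin{align*}
\lossmatch_1(w_1^\nn;\wh_0^\nn)=\sup_{\|\theta\|_2\le\nmbd}\Bigl|\theta^\top\Bigl(\tfrac1n X^\top w_1^\nn-\xmean\Bigr)\Bigr| = \nmbd\,\Bigl\|\tfrac1n X^\top w_1^\nn-\xmean\Bigr\|_2 .
\end{align*}
So the loss is zero exactly when the moment condition $\tfrac1n\sum_i w_1^\ii x_i=\xmean$ holds.

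\emph{Checking (i) and (ii).} Take $\wh_1^\ii=\xmean^\top\hatSigma^{-1}x_i$, i.e., $\wh_1^\nn=X\hatSigma^{-1}\xmean$.
\begin{itemize}[leftmargin=*]
\item For (i): $\tfrac1n X^\top X\hatSigma^{-1}\xmean=\hatSigma\hatSigma^{-1}\xmean=\xmean$, so the loss vanishes.
\item For (ii): $\tfrac1n\|\wh_1^\nn\|_2^2=\xmean^\top\hatSigma^{-1}\bigl(\tfrac1n X^\top X\bigr)\hatSigma^{-1}\xmean=\xmean^\top\hatSigma^{-1}\xmean$, which gives $\secmom{\wh_1^\nn}=\|\xmean\|_{\hatSigma^{-1}}$.
\end{itemize}

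\emph{Plugging into Theorem~\ref{thm:main}.} The weights $\wh_1^\nn$ depend only on $\{s_1^\ii,a_1^\ii\}$ (and on $\pi$ evaluated at $s_1^\ii$), so they satisfy the measurability requirement of the theorem. Substituting (i) and (ii), the bound becomes $\epsstat_0+\epsstat_1$ with $\epsstat_1\propto\|\xmean\|_{\hatSigma^{-1}}$.

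\emph{Minimum-norm claim.} I would argue that the stated weights are the minimal-$\secmom{\cdot}$ choice among loss minimizers, i.e., what the optional step of Algorithm~\ref{alg:main} selects.
\begin{itemize}[leftmargin=*]
\item The loss is $\nmbd$ times the distance from $\tfrac1n X^\top w$ to $\xmean$. Its minimizers are the $w$ with $\tfrac1n X^\top w=P\xmean$, where $P$ is the orthogonal projection onto $\mathrm{range}(X^\top)=\mathrm{range}(\hatSigma)$.
\item Note that $\xmean$ need not lie in this range: it is built from $\phi(s_1^\ii,\pi)$, not from the $x_i$.
\item The minimum-$\ell_2$-norm solution of $\tfrac1n X^\top w=P\xmean$ is the one lying in $\mathrm{range}(X)$. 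Write it as $w=X\beta$, so $\hatSigma\beta=P\xmean$, which is solved by $\beta=\hatSigma^\dagger\xmean$, using $\hatSigma\hatSigma^\dagger=P$.
\item Hence $w=X\hatSigma^\dagger\xmean$, i.e., $\wh_1^\ii=\xmean^\top\hatSigma^\dagger x_i$.
\end{itemize}

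The only step that needs real care is this pseudo-inverse case. One must check the identity $\hatSigma\hatSigma^\dagger=P$ onto $\mathrm{range}(X^\top)$, and that minimizing the norm of $w$ forces $w\in\mathrm{range}(X)$. The latter follows by orthogonally decomposing $w$ into its components in $\mathrm{range}(X)$ and $\ker(X^\top)$; only the first component affects $X^\top w$. In the invertible case $P=I$ and $\hatSigma^\dagger=\hatSigma^{-1}$, so this also shows the weights in (i)–(ii) are the minimum-norm choice.
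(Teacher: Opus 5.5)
Your proposal is correct and follows essentially the same route as the paper. Both arguments rewrite the loss as $\nmbd\,\|\tfrac1n X^\top w-\xmean\|_2$, verify (i)--(ii) by direct computation, and in the singular case identify the loss minimizers as the weights whose moment equals the projection $\hatSigma\hatSigma^\dagger\xmean$ onto $\mathrm{range}(\hatSigma)$. Your extra step goes slightly beyond the paper's proof: you show that $X\hatSigma^\dagger\xmean$ is the unique least-2nd-moment solution because it must lie in $\mathrm{range}(X)$, and you check the measurability required by Theorem~\ref{thm:main}. The paper only checks that the pseudo-inverse weights attain the minimal loss.
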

In other words, when $H=1$ and $\Fcal$ is linear, our algorithm and guarantee \textbf{exactly recover} the moment-matching form of linear regression (LR) and its fixed-design analysis (see Section~\ref{sec:lr} and Appendix~\ref{app:covariate}):   
the error bound of LR depends on $\|\xmean\|_{\hatSigma^{-1}}$, which is data dependent due to $\hatSigma$ and coincides with our $\secmom{\wh_1^\nn}$ in the $\epsstat_1$ term. Our $\epsstat_0$ term is a small overhead that corresponds to how $\xmean$ concentrates to its expectation along the direction of the true linear coefficients under the randomness of $s_1$; see Footnote~\ref{ft:initial} in Appendix~\ref{app:covariate} for further details. 

\paragraph{Non-uniqueness of $\wh$ and the principle of least 2nd moment} 
So far, Theorem~\ref{thm:main} holds for any choice of $\{\wh_h^\nn\}_h$, including those that do not minimize Eq.\eqref{eq:loss-match}. Moreover, even if we  require $\wh_h^\nn$ to exactly minimize $\lossmatch_h(\cdot; \wh_{h-1}^\nn)$, the solution is also generally not unique. 
As it turns out, this flexibility is a blessing which allows us to unify and make connections to several existing methods. That said, it is still desirable that we fix a canonical choice of $\wh_h^\nn$ to have a fully-specified algorithm. Among all $\wh_h^\nn$ that achieves the minimal loss, it is naturally preferred to choose the one with the least 2nd moment $\secmom{\wh_h^\nn}$ since this quantity appears in the bound through $\epsstat_h$. We call this \textbf{the principle of least 2nd moment}. In fact, in Proposition~\ref{prop:H=1}, the choice of weights that recovers linear regression is exactly the minimizer of the 2nd moment. 

In practice, there can be situations where even the 2nd moment-minimizing $\wh_h^\nn$ is too large, causing a poorly behaved $\epsstat_h$ term. This can happen, for example, when $\hatSigma$ is near singular in the linear setting. In such cases, one needs to balance the bias-variance trade-off, by potentially giving up some matching loss in exchange for a smaller $\secmom{\wh_h^\nn}$ (and thus $\epsstat_h$). This can be implemented via constraint optimization (e.g., minimizing $\lossmatch$ up to some budget for $\secmom{\wh_h^\nn}$), or via a Lagrangian form that corresponds to ridge regression in the linear setting (see Appendix~\ref{app:computation}). In the rest of the paper, however, we will not consider constraint or regularization on $\wh_h^\nn$ to keep the analyses clean. 

\subsection{Connection to Existing Methods}
\label{sec:connection}

We discuss the relationship between \alg and several existing methods to establish connections and help provide further insights. 

\subsubsection{Monte-Carlo Policy Evaluation and Importance Sampling}


In the special case where the data is \textit{on-policy}, i.e., the trajectories are sampled from $\pi$ itself, it is well known that the Monte-Carlo estimator, $\tfrac{1}{n}\sum_{i=1}^n \sum_{h=1}^H r_h^\ii$, is a simple and effective method that achieves dimension-free error guarantees, since the analysis simply applies concentration arguments to the i.i.d.~random returns. This also shows the looseness of standard OPE analyses that depends on $\log|\Fcal|$ \citep{antos2008learning, chen2019information}, as those guarantees do not automatically eliminate $\log|\Fcal|$ when data is on policy. 
In contrast, \alg immediately recovers MC in this setting if we simply choose $\wh_h^\ii \equiv 1$. (Note that this choice is not necessarily minimizing the 2nd moment of $\wh_h^\nn$.) The matching loss $\lossmatch_h(\wh_h^\nn; \wh_{h-1}^\nn)$ is $0$ under such weights,\footnote{This holds when $\pi$ is deterministic; when $\pi$ is stochastic, it still holds if we consider the IS variant of our algorithm, as described below.} and Theorem~\ref{thm:main} is dimension free. 

More generally, when the behavior policy $\pi_D$ is stochastic, a variant of our algorithm can also recover step-wise importance sampling, the generalization of MC to the off-policy setting. In particular, if we change the target embedding $\frac{1}{n}\sum_i \wh_{h-1}^\ii f(s_{h}^\ii, \pi)$ in Eq.\eqref{eq:loss-match} to\footnote{This resembles how targets are set up in TD, such as expected SARSA vs.~its off-policy version \citep{sutton2018reinforcement}.}
$$
\frac{1}{n}\sum_i \wh_{h-1}^\ii f(s_{h}^\ii, a_h^\ii) \frac{\pi(a_h^\ii|s_h^\ii)}{\pi_D(a_h^\ii|s_h^\ii)},
$$
which produces the same target embedding after marginalizing over the action randomness, then setting $\wh_h^\ii = \prod_{h'=1}^h \frac{\pi(a_{h'}^\ii|s_{h'}^\ii)}{\pi_D(a_{h'}^\ii|s_{h'}^\ii)}$ ensures $\lossmatch_h(\wh_h^\nn; \wh_{h-1}^\nn) = 0$. 

This example also demonstrates the role of $\Fcal$ and the importance of 2nd-moment minimization: if we only care about minimizing the matching loss $\lossmatch_h$ but not the size of $\wh_h^\nn$, we can always set $\lossmatch_h$ to $0$ by setting $\wh_h^\nn$ to be the cumulative importance weights, which generally incurs exponential-in-horizon variance. This is avoided when structured $\Fcal$ admits many choices of $\wh_h^\nn$ that equally minimize the matching loss, and we can choose the smallest one to significantly dampen the weight size.

\subsubsection{Linear and Tabular FQE}
As mentioned in Section~\ref{sec:intro}, the canonical OPE method under value-function approximation is FQE, which fits $Q_h^\pi$ in \textit{parametric} forms recursively from \textit{bottom up} via a series of squared-loss regressions. While our \alg is a \textit{top-down} procedure that learns \textit{non-parametric} weights, it coincides with FQE in the linear and tabular setting (Definitions~\ref{def:linear} and \ref{def:tabular}), which should not be surprising given the connection between \alg and linear regression when $H=1$ (Proposition~\ref{prop:H=1}). 

\begin{proposition} \label{prop:linear}
(i) In the linear setting (Definition~\ref{def:linear}), assume $\hatSigma_h = \tfrac{1}{n} \sum_i \phi(s_h^\ii, a_h^\ii)\phi(s_h^\ii, a_h^\ii)^\top$ is invertible for all $h \ge 1$. Consider $\alg$ where in each step $h$ we choose the least-2nd-moment weight $\wh^\nn$ that sets $\lossmatch_h(\wh_h^\nn; \wh_{h-1}^\nn)=0$. The final prediction of $\emp{J}(\pi)$ is identical to that given by linear FQE \citep{duan2020minimax} without ridge regression. \\ 
(ii) As a further special case in the tabular setting (Definition~\ref{def:tabular}), assume every state-action pair appears in the dataset, \alg and FQE both coincide with the model-based (certainty-equivalence) solution \citep{yin2021towards}, and $\wh_h^\ii = \emp{d_h^\pi}(s_h^\ii, a_h^\ii) / \emp{d_h^D}(s_h^\ii, a_h^\ii)$ where $\emp{d_h^D}$ is the empirical frequency of data and $\emp{d_h^\pi}$ is the occupancy of $\pi$ in the empirical MDP.
\end{proposition}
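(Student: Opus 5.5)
We prove (i) by induction on $h$, using Proposition~\ref{prop:H=1} as the single-step building block, and then write out linear FQE and match it term by term.

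Let $\Phi_h\in\RR^{n\times\dm}$ stack $\phi(s_h^\ii,a_h^\ii)$ and $\Phi_h^\pi$ stack $\phi(s_h^\ii,\pi)$. For linear $\Fcal_h$ the matching loss at step $h$ is
\[
\nmbd\Bigl\|\tfrac1n\Phi_h^\top w_h^\nn-\tfrac1n(\Phi_h^\pi)^\top\wh_{h-1}^\nn\Bigr\|_2 .
\]
Zero loss is therefore the linear constraint $\tfrac1n\Phi_h^\top w = m_h$, where $m_h:=\tfrac1n(\Phi_h^\pi)^\top\wh_{h-1}^\nn$. Since $\hatSigma_h$ is invertible, the minimum-norm solution is $\wh_h^\ii=m_h^\top\hatSigma_h^{-1}\phi(s_h^\ii,a_h^\ii)$; this is exactly the computation in Proposition~\ref{prop:H=1} with $\xmean$ replaced by $m_h$. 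Hence
\[
\tfrac1n\sum_i\wh_h^\ii r_h^\ii=m_h^\top\hat\beta_h, \qquad \hat\beta_h:=\hatSigma_h^{-1}\tfrac1n\Phi_h^\top r_h .
\]
The recursion for $m$ is $m_{h+1}=\tfrac1n(\Phi_{h+1}^\pi)^\top\Phi_h\hatSigma_h^{-1}m_h=\hat M_h^\top m_h$, where $\hat M_h:=\hatSigma_h^{-1}\tfrac1n\Phi_h^\top\Phi_{h+1}^\pi$ is the empirical regression matrix of next-step features on current features.

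Linear FQE without ridge sets $\hat\theta_{H+1}=0$ and $\hat\theta_h=\hatSigma_h^{-1}\tfrac1n\Phi_h^\top(r_h+\Phi_{h+1}^\pi\hat\theta_{h+1})=\hat\beta_h+\hat M_h\hat\theta_{h+1}$. Its prediction is $\hat\theta_1^\top\phi(s_1,\pi)$ averaged over the sample, i.e.\ $m_1^\top\hat\theta_1$; with $\wh_0\equiv1$ and the fixed dummy $(s_0,a_0)$, step $0$ contributes nothing. Unrolling gives
\[
\hat\theta_1=\sum_{h\ge1}\hat M_1\cdots\hat M_{h-1}\hat\beta_h ,
\]
so $m_1^\top\hat\theta_1=\sum_h m_h^\top\hat\beta_h=\emp J(\pi)$. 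This bottom-up/top-down duality of a product of matrices is the core step. The main bookkeeping obstacle is aligning conventions for the first level: whether FQE in \citet{duan2020minimax} outputs an average over the data $s_1$ or uses a known $s_1$ distribution. With our dummy $s_0$, the $h=1$ regression absorbs this.

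For (ii), tabular $\Fcal$ is linear with one-hot features; the $\nmbd$ bound is irrelevant for zero loss, since the constraint is again linear. Coverage makes $\hatSigma_h=\mathrm{diag}(\emp{d_h^D})$ invertible, so (i) applies, and FQE with one-hot regression is exactly value iteration in the empirical MDP $\hat P,\hat R$, which is the certainty-equivalence estimate.

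For the weights, $\wh_h^\ii=m_h(s_h^\ii,a_h^\ii)/\emp{d_h^D}(s_h^\ii,a_h^\ii)$. The recursion reads
\[
m_{h+1}(s',a')=\sum_{s,a}\emp{d_h^D}(s,a)\frac{m_h(s,a)}{\emp{d_h^D}(s,a)}\hat P(s'|s,a)\pi(a'|s') ,
\]
using that the empirical frequency of $s'$ after $(s,a)$ divided by the count of $(s,a)$ is $\hat P$. This is the forward occupancy recursion in the empirical MDP, and by induction from $m_1$ it gives $m_h=\emp{d_h^\pi}$.

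One detail to check is coverage at next-level states: $\pi$ may choose actions never taken in the data. We need every $(s,a)$ to appear, which the assumption guarantees.
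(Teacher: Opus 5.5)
Your proof is correct and follows the paper's route for (i). You use the least-norm zero-loss weights $\wh_h^\ii=\phi(s_h^\ii,a_h^\ii)^\top\hatSigma_h^{-1}m_h$ together with the identity $m_h^\top\hattheta_h=m_h^\top\hat\beta_h+m_{h+1}^\top\hattheta_{h+1}$; the paper writes this as the backward recursion $V_h=\frac1n\sum_i\wh_h^\ii r_h^\ii+V_{h+1}$ with $V_h=m_h^\top\hattheta_h$, whereas you unroll both recursions into matrix products. For (ii) the paper argues cell by cell (cell-sum constraints, with least norm forcing constant weights within each cell) instead of specializing (i) to one-hot features, but the core computation is the same forward-occupancy recursion in the empirical MDP.
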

More detailed definitions (e.g., the FQE procedure) and the proof can be found in Appendix~\ref{app:prop-linear}. 
In the linear and tabular settings, prior works \citep{duan2020minimax, yin2021towards} have established fine-grained analyses where the leading $O(1/\sqrt{n})$ terms are dimension-free, which separates them from the $\log|\Fcal|$-dependent guarantees for GFA and no unification is known to this date. \textbf{Our results fill this gap in theoretical understanding, as these fine-grained results are \textit{recovered} (and sometimes \textit{improved}) when we specialize our Theorem~\ref{thm:main} to the linear/tabular settings.} We provide a detailed discussion and comparison w.r.t.~these prior results in Appendix~\ref{app:prior-linear}. 

Proposition~\ref{prop:linear} also provides intuition for $\lossmatch_h$ when it takes non-zero values. In the tabular case, $\lossmatch_h$ may not be minimized to $0$ if some $(s,a)\in\Scal_h \times\Acal$ does not appear in the data at all, and the corresponding mass in $\emp{d_h^\pi}$ will result in a non-zero matching loss.

\subsection{Computational Efficiency}
So far our analyses are only concerned with statistical efficiency. The computational efficiency of the method boils down to whether we can efficiently minimize $\lossmatch_h(\,\cdot\,; \wh_{h-1}^\nn)$ (Eq.~\eqref{eq:loss-match}), which is a standard IPM objective. 
As already discussed in Section~\ref{sec:connection}, this  is easy to optimize when $\Fcal$ is linear as it simply amounts to linear regression. For general function approximation, given the minimax form ($\argmin_{w_h^\nn} \sup_{f\in\Fcal}$), we can efficiently approximate the solution via the no-regret + best-response framework \citep{freund1999adaptive} given standard optimization oracles over $\Fcal$; see further discussion in Appendix~\ref{app:computation}. 
\section{Understanding the Learned Weights}
\label{sec:weight}

The results thus far leave a few important and interrelated questions open:

\textbf{Boundedness of weights.}~ Theorem~\ref{thm:main} only assumes realizability and not Bellman completeness. In this regime, classic counterexamples against TD imply that linear FQE may diverge in the worst case, even if data covers all feature directions \citep{tsitsiklis1996feature}. Since our \alg coincides with FQE in the linear setting, this means that the size of $\wh_{h}^\nn$ may blow-up exponentially in the horizon. Thus, naturally we want to understand when the size of $\wh_h^\nn$ stays well bounded, and how stronger conditions such as Bellman completeness help guarantee that.

\textbf{A priori bound.}~  Theorem~\ref{thm:main} is data dependent. Can we turn it into an a priori, data-independent bound? Recall that the bound depends on the size of $\wh_h^\nn$, $\secmom{\wh_h^\nn}$. To remove the data dependence, we can define the population counterpart of $\wh_h^\nn$ and use its size to replace $\secmom{\wh_h^\nn}$. Note that such bounds in general will incur dimension dependence, albeit in higher-order terms \citep{hsu2011analysis}. 


\textbf{The nature of coverage.}~ $\secmom{\wh_h^\nn}$ plays the role of \textit{coverage} in Theorem~\ref{thm:main}. 
Conventional wisdom tells us that coverage is about how the distribution $d_h^D$ covers $d_h^\pi$, and the density coverage can sometimes be refined based on structure of $\Fcal$; for example, in linear case, data only needs to cover the mean feature direction $\EE_{(s_h, a_h)\sim d_h^\pi}[\phi(s_h, a_h)]$. 
However, recent fine-grained linear analyses tell a very different story: \citet{amortila2026unifying} (see also \citet{duan2020minimax, perdomo2023complete}) show that the target of coverage is generally \textit{not} $d_h^\pi$ or its feature expectation, but the vector defined by the following linear dynamical system: starting with $\ft_1 = \EE_D[\phi(s_1, \pi)]$, and define, $\forall h\ge 1$, 
\begin{align} \label{eq:lds}
\ft_{h+1} = \Bpi_h \ft_h, ~~\textup{where}~~ \Bpi_h = (\Sigmacr_h)^\top\Sigma_h^{-1}, ~ \Sigma_h = \EE_{D}[\phi_h\phi_h^\top], ~ \Sigmacr_h = \EE_D[\phi_h(\phi_{h+1}^\pi)^\top].
\end{align}

Here $\phi_h$ is the shorthand of $\phi(s_h, a_h)$ and $\phi_{h+1}^\pi = \phi(s_{h+1}, \pi)$. In fact, linear FQE can be thought of as fitting such a system and using it to predict $J(\pi)$ \citep{duan2020minimax}. While this system mimics the feature evolution in the real MDP, they are generally different (i.e., $\ft_h \ne \EE_{d_h^\pi}[\phi]$). In the fine-grained analyses of linear OPE, it is shown that what really needs to be covered is the $\ft_h$ from such a ``wrong'' system $\{\Bpi_h\}_h$, \textbf{not} the feature expectation $\EE_{d_h^\pi}[\phi]$ from the real MDP,\footnote{There is information-theoretic lower bound against this possibility \citep{foster2022offline}.} and they only coincide under Bellman completeness. However, the analyses of FQE under general function approximation rely on completeness, and this subtle distinction between $\ft_h$ and $\EE_{d_h^\pi}[\phi]$ is therefore not observed. Moreover, the key concepts such as $\Bpi_h$ are defined linear-algebraically, and it is unclear how their GFA counterpart should be defined. Does $\secmom{\wh_h^\nn}$ in our analyses (or its population counterpart) provide such a generalization, and how is it related to the concepts identified in the linear analyses?

We address the above questions by providing a concentration analysis for how $\wh_h^\nn$ tracks its population counterpart, defined below; throughout this section, we assume the empirical algorithm performs exact minimization of $\lossmatch_h$ with least 2nd-moment $\wh_h^\nn$. 
\begin{definition}\label{def:pop}
In Algorithm~\ref{alg:main}, replace  $\wh_h^\nn \in \RR^n$ with $\wstar_h \in \RR^{\Scal\times\Acal}$ ($\wstar_0\equiv 1$), and Eq.\eqref{eq:loss-match} with 
\begin{align} \label{eq:poploss}
\poploss_h(w_h; \wstar_{h-1}) := {\textstyle \sup_{f\in\Fcal_h}}  \left| \EE_D[w_h(s_h, a_h)  f(s_{h}, a_{h})] - \EE_D[\wstar_{h-1}(s_{h-1}, a_{h-1}) f(s_{h}, \pi)] \right|.
\end{align}
We assume the exact minimization of $\poploss$. Among the minimizers, we choose the one that minimizes 
$\|w_h\|_{2, d_h^D} := (\EE_{d_h^D}[w_h^2])^{1/2}$. The output $\{\wstar_h\}_h$ is considered the population counterpart of $\{\wh_h^\nn\}_h$.
\end{definition}
%
%
%

To characterize the least-2nd-moment solution $\wh_h^\nn$ and its population counterpart $\wstar_h$, we will use a Hilbert-space viewpoint based on the linear span of $\Fcal_h$.

\begin{definition}\label{def:hilbert}
For each $h\in[H]$, define \mbox{$\mathrm{span}(\Fcal_h)=\{\sum_{m=1}^M a_mf_m:M<\infty,a_m\in\RR,f_m\in\Fcal_h\}$.} Let $\Hilb_h:=\overline{\mathrm{span}(\Fcal_h)}$ be the closure of $\mathrm{span}(\Fcal_h)$ under $\|f\|_{2,d_h^D}=(\EE_{(s,a)\sim d_h^D}[f(s,a)^2])^{1/2}$, the $d_h^D$-weighted (semi-)norm. We also define $\langle f,g\rangle_{d_h^D}=\EE_{(s,a)\sim d_h^D}[f(s,a)g(s,a)]$. 
\end{definition}

\begin{proposition}[2nd-Moment Minimizers are in $\Hilb$] \label{prop:w-in-span}
The population least-2nd-moment weight $\wstar_h\in\Hilb_h$. In addition, for the least-2nd-moment $\wh_h^\nn$, there exists $\wb_h\in\Hilb_h$ such that $\wb_h\evn=\wh_h^\nn$, where $\evn$ denotes the sample evaluation with $(\wb_h\evn)_i=\wb_h(s_h^\ii,a_h^\ii)$, $\forall i\in[n]$.
\end{proposition}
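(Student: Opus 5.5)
The argument is an orthogonal-decomposition (projection) argument, carried out in the population case and then in the sample case. In both cases the constraint set of minimizers of the matching loss is invariant under adding functions orthogonal to $\Hilb_h$. Minimizing the second moment therefore kills the orthogonal component.

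\textbf{Population case.} Work in $L^2(d_h^D)$, modulo functions that vanish $d_h^D$-a.s. Here $\Hilb_h$ is a closed subspace, so every $w_h\in L^2(d_h^D)$ decomposes uniquely as $w_h = u + v$ with $u\in\Hilb_h$ and $v\perp \Hilb_h$. For every $f\in\Fcal_h\subseteq\Hilb_h$ we have $\EE_D[v f]=\langle v,f\rangle_{d_h^D}=0$. Hence $\poploss_h(w_h;\wstar_{h-1})=\poploss_h(u;\wstar_{h-1})$, because the objective in Eq.\eqref{eq:poploss} depends on $w_h$ only through the inner products $\langle w_h,f\rangle_{d_h^D}$ with $f\in\Fcal_h$. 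So if $w_h$ is a minimizer, $u$ is also a minimizer. By Pythagoras, $\|w_h\|_{2,d_h^D}^2=\|u\|^2+\|v\|^2$. The least-2nd-moment minimizer must therefore have $v=0$, i.e. $\wstar_h\in\Hilb_h$.

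The main obstacle is existence and well-definedness of the least-norm minimizer. I would handle it as follows.
\begin{itemize}
\item Minimizers with finite second moment are assumed to exist, as Definition~\ref{def:pop} implicitly does.
\item The set of minimizers is convex: the objective is a supremum of absolute values of affine functionals of $w_h$, hence convex in $w_h$.
\item The set of minimizers is closed in $L^2$, because each functional is continuous when $f\in L^2(d_h^D)$.
\item A nonempty closed convex set in a Hilbert space has a unique minimum-norm element, which gives uniqueness.
\end{itemize}
Alternatively, the statement can be read as ``any least-norm minimizer lies in $\Hilb_h$,'' which the projection argument already gives without any existence claim. One subtlety is that $\wstar_h$ is only defined $d_h^D$-a.e. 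The membership $\wstar_h\in\Hilb_h$ is then interpreted in the seminormed quotient, consistent with Definition~\ref{def:hilbert}.

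\textbf{Sample case.} Apply the same argument in $\RR^n$ with the inner product $\frac1n\sum_i a_ib_i$. Let $V=\{f\evn: f\in\mathrm{span}(\Fcal_h)\}\subseteq\RR^n$, which is a finite-dimensional, hence closed, subspace. The empirical loss in Eq.\eqref{eq:loss-match} depends on $w_h^\nn$ only through $\frac1n\sum_i w_h^\ii f(s_h^\ii,a_h^\ii)$ for $f\in\Fcal_h$. Decompose $w_h^\nn=u+v$ with $u\in V$ and $v\perp V$. Then $u$ has the same loss as $w_h^\nn$ and a norm no larger, so the least-2nd-moment minimizer $\wh_h^\nn$ lies in $V$.

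Because $V$ is finite-dimensional, every element of $V$ is exactly some $f\evn$ with $f\in\mathrm{span}(\Fcal_h)$. No limit is needed, since the span of evaluations is already attained by finite combinations. So $\wh_h^\nn=\wb_h\evn$ for some $\wb_h\in\mathrm{span}(\Fcal_h)\subseteq\Hilb_h$. Note that $\wb_h$ is a genuine function, so its sample evaluation is well-defined even though $\Hilb_h$ elements are only a.e.-defined. Choosing $\wb_h$ in the span rather than the closure avoids this issue entirely.
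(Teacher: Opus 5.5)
Your proposal is correct and follows essentially the same argument as the paper. In both the population and sample cases, the matching loss depends on the weight only through its inner products with $\Fcal_h\subseteq\Hilb_h$. Projecting onto $\Hilb_h$ (resp.\ onto the sample-evaluation subspace) therefore preserves the loss, and Pythagoras strictly lowers the second moment unless the orthogonal component is zero.

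You add two refinements that the paper leaves implicit. First, you project onto $\{f\evn: f\in\mathrm{span}(\Fcal_h)\}$ rather than $\Hilb_h\evn$, so $\wb_h$ can be taken in $\mathrm{span}(\Fcal_h)$. Second, you give a closed-convex-set remark for the existence and uniqueness of $\wstar_h$. Both are sound.
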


\subsection{Tracking Error of $\wh_h^\nn$ and Data-independent Guarantee}
We now introduce a few assumptions and provide finite-sample guarantees for how $\wh_h^\nn$ tracks $\wstar_h$. When specialized to the linear setting, these assumptions recover familiar conditions in the literature. 

\begin{assumption}[Bounded generalized leverage]\label{asm:leverage}
$\forall h\in[H]$, there exists $\kappa_h \in (0,  \infty)$ such that \vspace*{-.1em}
\[
|f(s,a)|\le\sqrt{\kappa_h}\|f\|_{2,d_h^D},\quad \forall f\in\Hilb_h,\quad\forall (s,a)\in\Scal_h\times\Acal. \vspace*{-.2em}
\]
\end{assumption}

This assumption generalizes the standard  condition of \emph{bounded statistical leverage} in the linear setting $\|\Sigma_h^{-1/2}\phi(s_h,a_h)\|_2\le\sqrt{\kappa_h}, \forall (s_h,a_h)$, which is often assumed in random-design analyses \citep{hsu2011analysis}. 
It also implies that $\|\cdot\|_{2,d_h^D}$ is identifiable on $\Hilb_h$, in the sense that $\|f\|_{2,d_h^D}=0$ implies $f=0$ on $\Scal_h\times\Acal$. This immediately leads to the following implication:

\begin{proposition}[Population 0 matching loss]\label{prop:population-exact}
For each $h\in[H]$, under Assumption~\ref{asm:leverage}, the population matching problem in Definition~\ref{def:pop} achieves $0$ loss, i.e., $\Lcal_h(\wstar_h;\wstar_{h-1})=0$.
\end{proposition}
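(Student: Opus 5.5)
The plan is a Riesz-representation argument on the Hilbert space $\Hilb_h$, carried out by induction on $h$. Fix $h\in[H]$ and assume $\wstar_{h-1}$ has been constructed with $\|\wstar_{h-1}\|_{2,d_{h-1}^D}<\infty$ (for $h=1$, $\wstar_0\equiv 1$). Consider the linear functional on $\mathrm{span}(\Fcal_h)$
\[
\ell_h(f) := \EE_D[\wstar_{h-1}(s_{h-1},a_{h-1}) f(s_h,\pi)].
\]
The population loss $\poploss_h(w_h;\wstar_{h-1})$ is zero exactly when $\langle w_h, f\rangle_{d_h^D} = \ell_h(f)$ for all $f\in\Fcal_h$, and by linearity for all $f\in\mathrm{span}(\Fcal_h)$. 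So it suffices to show that $\ell_h$ is represented by some element of $\Hilb_h$.

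\textbf{Step 1: $\ell_h$ is bounded in $\|\cdot\|_{2,d_h^D}$.} Assumption~\ref{asm:leverage} gives the pointwise bound $|f(s,a)|\le \sqrt{\kappa_h}\|f\|_{2,d_h^D}$ for every $(s,a)$. Hence $|f(s_h,\pi)|\le\sqrt{\kappa_h}\|f\|_{2,d_h^D}$, and by Cauchy--Schwarz
\[
|\ell_h(f)|\le \sqrt{\kappa_h}\,\EE_D[|\wstar_{h-1}|]\,\|f\|_{2,d_h^D}\le \sqrt{\kappa_h}\,\|\wstar_{h-1}\|_{2,d_{h-1}^D}\|f\|_{2,d_h^D}.
\]

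\textbf{Step 2: $\ell_h$ is well defined on $\Hilb_h$.} The same leverage bound shows that $\|\cdot\|_{2,d_h^D}$ is a genuine norm on $\Hilb_h$: $\|f\|_{2,d_h^D}=0$ forces $f\equiv 0$. So two functions in the span that agree in norm-distance have the same functional value, and $\ell_h$ is well defined. By Step 1 it is a bounded (Lipschitz) linear functional, so it extends uniquely by continuity to the closure $\Hilb_h$. Since $\Scal,\Acal$ are discrete, convergence in $\|\cdot\|_{2,d_h^D}$ together with the leverage bound implies uniform pointwise convergence, so the extension is still given by the same expectation formula.

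\textbf{Step 3: Riesz representation and minimality.} Riesz representation gives a unique $g_h\in\Hilb_h$ with $\langle g_h,f\rangle_{d_h^D}=\ell_h(f)$ for all $f\in\Hilb_h$. Hence $\poploss_h(g_h;\wstar_{h-1})=0$, so the minimum loss is $0$. Any zero-loss $w_h$ must satisfy $\langle w_h - g_h, f\rangle_{d_h^D}=0$ on $\Hilb_h$, so $g_h$ is the orthogonal projection of $w_h$ onto $\Hilb_h$. By Pythagoras, $g_h$ is the least-2nd-moment minimizer, i.e.\ $\wstar_h=g_h$; this is consistent with Proposition~\ref{prop:w-in-span}. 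Finally $\|\wstar_h\|_{2,d_h^D}=\|\ell_h\|\le\sqrt{\kappa_h}\|\wstar_{h-1}\|_{2,d_{h-1}^D}<\infty$, which closes the induction.

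The main subtlety is Step 1: the target term evaluates $f$ at $(s_h,\pi)$, whose distribution is not $d_h^D$, so $\ell_h$ cannot be controlled by $\|f\|_{2,d_h^D}$ through a change of measure. This is exactly why the sup-norm leverage bound in Assumption~\ref{asm:leverage} is needed. The induction hypothesis on the finiteness of $\|\wstar_{h-1}\|_{2,d_{h-1}^D}$ must be tracked carefully so that the boundedness constant stays finite at every step.
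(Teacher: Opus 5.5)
Your proof is correct and is essentially the paper's argument. The paper deduces this from Proposition~\ref{prop:representer}(i): it bounds the target functional on $\Hilb_h$ using the leverage bound and Cauchy--Schwarz, applies Riesz representation to obtain a zero-loss representer in $\Hilb_h$, and uses Pythagoras to identify that representer as the least-2nd-moment solution. Your explicit induction on the finiteness of $\|\wstar_{h-1}\|_{2,d_{h-1}^D}$ and your span-to-closure extension spell out details the paper leaves implicit.
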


As mentioned earlier, FQE without completeness may produce run-away predictions even when data provides good coverage, so we need some additional quantities (weaker than completeness assumption) to characterize and prevent the exponential blow-up of weights. In the linear setting, prior works have identified the operator norm of $\Bpi_h$ (Eq.\eqref{eq:lds}) as the key quantity that controls the blow-up \citep{perdomo2023complete, amortila2026unifying}. Below we define the counterpart of these concepts for \alg under GFA:
\begin{definition}\label{def:backup}
For each $h\in[H-1]$, define the projected transition operator $\bp_h:\Hilb_{h+1}\to\Hilb_h$, such that $\forall f\in\Hilb_{h+1}$, $\bp_h f\in\Hilb_h$ is the unique minimizer of the least square problem
\[
\bp_h f:= {\textstyle \arg\min_{g\in\Hilb_h}}\
\EE_D \bigl[(g(s_h,a_h)-f(s_{h+1},\pi))^2\bigr].
\]
Define $\rho_{t:h}=\sup_{f\in\Hilb_h:\|f\|_\infty\le1}\|\bp_t\bp_{t+1}\cdots\bp_{h-1} f\|_\infty$ as the multi-step $\infty$-operator norm of $\bp_t\bp_{t+1}\cdots\bp_{h-1}:\Hilb_h\to\Hilb_t$ , where $\|g\|_\infty=\sup_{(s,a)\in\Scal_j\times\Acal}|g(s,a)|$ for $g\in\Hilb_j$. Let $\rho_{h:h}=1$.
\end{definition}

\begin{proposition}[Connection between $\bp_h$ and $\Bpi_h$]\label{prop:connection-linear}
In the linear setting, for $f\in\Hilb_{h+1}$, let $\theta_{h+1}$ be its linear coefficient (i.e.,  $f(s,a)=\phi(s,a)^\top\theta_{h+1}$), and let $\theta_h$ be that of $\bp_h f\in\Hilb_h$. 
We have $\theta_h=(B_h^\pi)^\top\theta_{h+1}$, 
and $
\rho_{t:h}\le\sup_{s_t,a_t}\|\Sigma_{h}^{-1/2}B_{h-1}^\pi\cdots B_{t+1}^\pi B_t^\pi\phi(s_t,a_t)\|_2$.
\end{proposition}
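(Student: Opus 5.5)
The plan is to compute $\bp_h$ explicitly in the linear setting by solving its least-squares problem, and then bound $\rho_{t:h}$ with Cauchy--Schwarz in the $\Sigma_h$-geometry. First I identify $\Hilb_h$. In the linear setting $\mathrm{span}(\Fcal_h)=\{\phi^\top\theta:\theta\in\RR^\dm\}$ is finite-dimensional, hence already closed. I will assume $\Sigma_h$ is invertible, as Eq.\eqref{eq:lds} implicitly does; Assumption~\ref{asm:leverage} also forces identifiability on $\Hilb_h$. Then each $g\in\Hilb_h$ has a unique coefficient vector $\theta$ with $g=\phi^\top\theta$.

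Next I solve for $\bp_h f$. Take $f=\phi^\top\theta_{h+1}$, so that $f(s_{h+1},\pi)=(\phi^\pi_{h+1})^\top\theta_{h+1}$. The objective $\EE_D[(\phi_h^\top\theta-(\phi_{h+1}^\pi)^\top\theta_{h+1})^2]$ is a strictly convex quadratic in $\theta$. Its normal equations are $\Sigma_h\theta=\EE_D[\phi_h(\phi_{h+1}^\pi)^\top]\theta_{h+1}=\Sigmacr_h\theta_{h+1}$. This gives $\theta_h=\Sigma_h^{-1}\Sigmacr_h\theta_{h+1}=((\Sigmacr_h)^\top\Sigma_h^{-1})^\top\theta_{h+1}=(\Bpi_h)^\top\theta_{h+1}$, using the symmetry of $\Sigma_h$. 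Iterating this, for $f=\phi^\top\theta_h\in\Hilb_h$ the composition $\bp_t\cdots\bp_{h-1}f$ has coefficient $(\Bpi_t)^\top\cdots(\Bpi_{h-1})^\top\theta_h=(\Bpi_{h-1}\cdots\Bpi_t)^\top\theta_h$.

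For the norm bound, evaluate the composition at $(s_t,a_t)$. It equals $\theta_h^\top\Bpi_{h-1}\cdots\Bpi_t\phi(s_t,a_t)=(\Sigma_h^{1/2}\theta_h)^\top\Sigma_h^{-1/2}\Bpi_{h-1}\cdots\Bpi_t\phi(s_t,a_t)$. Cauchy--Schwarz bounds its absolute value by $\|\Sigma_h^{1/2}\theta_h\|_2\cdot\|\Sigma_h^{-1/2}\Bpi_{h-1}\cdots\Bpi_t\phi(s_t,a_t)\|_2$. Moreover $\|\Sigma_h^{1/2}\theta_h\|_2^2=\EE_{d_h^D}[f^2]\le\|f\|_\infty^2\le1$. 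Taking the sup over $(s_t,a_t)$ and then over admissible $f$ yields the stated bound on $\rho_{t:h}$. The case $t=h$ is consistent: the right-hand side becomes $\sup\|\Sigma_h^{-1/2}\phi\|_2$, which is at least $1$ whenever $\|f\|_\infty\le1$ with $f$ nonzero, although $\rho_{h:h}=1$ by definition anyway.

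The main obstacle is bookkeeping rather than substance. One point is the transpose and order convention: the backup composes as $\bp_t\cdots\bp_{h-1}$, while the coefficient maps multiply as $(\Bpi_{h-1}\cdots\Bpi_t)^\top$, and this must match the product in the statement. The other point is handling a singular $\Sigma_h$. There I would note that the minimizer $g$ is still unique as a function in $\Hilb_h$ (modulo null directions), and that Eq.\eqref{eq:lds} presumes invertibility.
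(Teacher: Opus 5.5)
Your proposal is correct and follows essentially the same route as the paper. Both arguments solve the normal equation $\Sigma_h\theta_h=\Sigmacr_h\theta_{h+1}$ to get $\theta_h=(\Bpi_h)^\top\theta_{h+1}$, iterate to the coefficient $(\Bpi_{h-1}\cdots\Bpi_t)^\top\theta_h$, and conclude with Cauchy--Schwarz in the $\Sigma_h^{\pm1/2}$ geometry using $\|\Sigma_h^{1/2}\theta_h\|_2=\|f\|_{2,d_h^D}\le\|f\|_\infty$; your remarks on the closedness of the finite-dimensional span and the invertibility of $\Sigma_h$ just make explicit what the paper assumes.
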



Just as the operator norm of $\Bpi_h$ controls the size of $\psi_h = \Bpi_{h-1} \cdots \Bpi_1 \psi_1$ in the linear case \citep{amortila2026unifying}, we have $\rho_{1:h}$, the operator norm of $\bp_1 \cdots \bp_{h-1}$, controls $\EE_{d_h^D}[\wstar_h f]$.

\begin{proposition}\label{prop:bound-weight}
Under Assumption~\ref{asm:leverage}, $|\EE_{d_h^D}[\wstar_h(s_h,a_h)f(s_h,a_h)]|\le\rho_{1:h}\|f\|_\infty, \forall h\in[H], f\in\Hilb_h$. 
\end{proposition}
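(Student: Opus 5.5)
Our plan is to unroll the population matching recursion backward, using the projected operators $\bp_t$ to transport a test function $f\in\Hilb_h$ down to level $1$, where the target is explicit.

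\textbf{Step 1 (one-step identity).} By Proposition~\ref{prop:population-exact}, $\Lcal_t(\wstar_t;\wstar_{t-1})=0$ for every $t\in[H]$. Hence for every $g\in\Fcal_t$,
\begin{align*}
\EE_D[\wstar_t(s_t,a_t)\,g(s_t,a_t)]=\EE_D[\wstar_{t-1}(s_{t-1},a_{t-1})\,g(s_t,\pi)].
\end{align*}
Both sides are linear in $g$, so the identity extends to $\mathrm{span}(\Fcal_t)$.

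To extend it to the closure $\Hilb_t$, we need continuity of both sides in $\|\cdot\|_{2,d_t^D}$.
\begin{itemize}
\item The left side is continuous because $\wstar_t\in\Hilb_t$ (Proposition~\ref{prop:w-in-span}), so it has finite second moment and Cauchy--Schwarz applies.
\item For the right side, Assumption~\ref{asm:leverage} gives $|g(s,a)|\le\sqrt{\kappa_t}\|g\|_{2,d_t^D}$ pointwise. Hence $|g(s_t,\pi)|\le\sqrt{\kappa_t}\|g\|_{2,d_t^D}$, and the right side is bounded by $\sqrt{\kappa_t}\,\|g\|_{2,d_t^D}\,\EE_D|\wstar_{t-1}|$, which is finite inductively.
\end{itemize}
Norm convergence in $\Hilb_t$ then implies uniform convergence, so the identity holds for all $g\in\Hilb_t$. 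This extension is the main technical point.

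\textbf{Step 2 (replace the target by its projection).} For $t\ge 2$ and $g\in\Hilb_t$, the first-order optimality condition of the least-squares problem defining $\bp_{t-1}g$ says that $g(s_t,\pi)-(\bp_{t-1}g)(s_{t-1},a_{t-1})$ is $L^2(D)$-orthogonal to $\Hilb_{t-1}$. Since $\wstar_{t-1}\in\Hilb_{t-1}$, this gives
\begin{align*}
\EE_D[\wstar_{t-1}\,g(s_t,\pi)]=\EE_{d_{t-1}^D}[\wstar_{t-1}\,\bp_{t-1}g].
\end{align*}
This is exactly why the least-2nd-moment choice of $\wstar$, through Proposition~\ref{prop:w-in-span}, matters.

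\textbf{Step 3 (iterate and bound).} Combining Steps 1 and 2 and iterating from level $h$ down to level $1$ yields
\begin{align*}
\EE_{d_h^D}[\wstar_h f]=\EE_{d_1^D}[\wstar_1\,\bp_1\cdots\bp_{h-1}f].
\end{align*}
At level $1$ we apply Step 1 once more with $\wstar_0\equiv1$:
\begin{align*}
\EE_{d_1^D}[\wstar_1 g]=\EE_D[g(s_1,\pi)],\qquad |\EE_D[g(s_1,\pi)]|\le\|g\|_\infty,
\end{align*}
where $g=\bp_1\cdots\bp_{h-1}f$. By the definition of $\rho_{1:h}$ and homogeneity, $\|\bp_1\cdots\bp_{h-1}f\|_\infty\le\rho_{1:h}\|f\|_\infty$, which gives the claim. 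The case $h=1$ is immediate since $\rho_{1:1}=1$.
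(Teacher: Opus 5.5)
Your proposal is correct and follows essentially the same route as the paper. Both arguments use population zero matching loss, replace $P^\pi f$ by its projection $\bp_{t-1}f$ using $\wstar_{t-1}\in\Hilb_{t-1}$, iterate down to level $1$, and bound $\EE_D[(\bp_1\cdots\bp_{h-1}f)(s_1,\pi)]$ by $\rho_{1:h}\|f\|_\infty$. The only difference is that you explicitly justify extending the matching identity from $\Fcal_t$ to $\Hilb_t$ by continuity, whereas the paper gets this implicitly from the Riesz-representer characterization of $\wstar_t$ in Proposition~\ref{prop:representer}(i).
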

Note that $\wstar_h$ may have a large size for two reasons: (1) data $d_h^D$ does not provide sufficient coverage (for linear, this corresponds to near-singular $\Sigma_h$), and (2) the algorithm's iterations are \textit{unstable} and produce run-away predictions even when data has good coverage (for linear, this corresponds to large $\|\psi_h\|$). Proposition~\ref{prop:bound-weight} separates out these two factors and only studies the instability in (2) by examining $\wstar_h \cdot d_h^D$. 
Such instability can be eliminated with Bellman completeness, and Proposition~\ref{prop:bound-weight} provides more quantitative understanding 
and is strictly weaker than completeness (see Section~\ref{sec:complete}). 


We are now ready to give the tracking error guarantee, where the bound depends on $\rho_{(\cdot):h}$ (iteration stability), the generalized leverage,  the size of $\wstar_{(\cdot)}$ (which is determined by both iteration stability and the data), and the log-covering number of $\Hilb_{(\cdot)}$ which is the statistical dimension term. 

\begin{theorem}[Empirical weight tracks its population counterpart]\label{thm:tracking} Define the empirical-population tracking error by
$\bigl\|\wh_h^\nn-\wstar_h\evn\bigr\|_\nn:=\bigl(\frac1n\sum_{i=1}^n(\wh_h^\ii-\wstar_h(s_h^\ii,a_h^\ii))^2\bigr)^{1/2}$.
Under Assumptions~\ref{asm:leverage}, fix any $h\in[H]$ and $\delta\in(0,1)$, and let $N_t$ be the $1/8$-covering number of $\{u\in\Hilb_t:\|u\|_{2,d_t^D}\le 1\}$. If $n\gtrsim(\max_{t\le h}\kappa_t)^2(\sum_{t=1}^h\rho_{t:h})^2\log(H^2N_tN_h/\delta)$ for all $t\le h$, then with probability $\ge1-\delta$,
\[
\left\|\wh_h^\nn-w_h^\star\evn\right\|_\nn\lesssim\sum_{t=1}^h\rho_{t:h}\sqrt{\frac{\kappa_h^2\log(H^2N_tN_h/\delta)}{n}} (\|w_{t}^\star\|_{2,d_t^D} + 1). 
\]
\end{theorem}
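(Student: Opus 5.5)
The plan is to turn both least-2nd-moment recursions into linear recursions in Hilbert space, and then run a perturbation (telescoping) argument. By Proposition~\ref{prop:w-in-span} and Proposition~\ref{prop:population-exact}, $\wstar_h\in\Hilb_h$ satisfies $\langle \wstar_h, f\rangle_{d_h^D}=\EE_D[\wstar_{h-1}(s_{h-1},a_{h-1})f(s_h,\pi)]$ for all $f\in\Hilb_h$. Because $\wstar_{h-1}\in\Hilb_{h-1}$, the right-hand side equals $\langle \wstar_{h-1},\bp_{h-1}f\rangle_{d_{h-1}^D}$. Hence $\wstar_h=(\bp_{h-1})^{*}\wstar_{h-1}$, where $(\cdot)^*$ is the adjoint with respect to the $d^D$ inner products, and $\wstar_h=(\bp_1\cdots\bp_{h-1})^*\wstar_1$.

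The empirical weights have the same structure. By Proposition~\ref{prop:w-in-span}, $\wh_h^\nn=\wb_h\evn$ with $\wb_h\in\Hilb_h$. Define the empirical inner product $\langle f,g\rangle_n=\frac1n\sum_i f(s_h^\ii,a_h^\ii)g(s_h^\ii,a_h^\ii)$ and the empirical projected operator $\emp{K}_{h-1}$, which is least squares of $f(s_h,\pi)$ onto $\Hilb_{h-1}$ over the sample. Zero empirical matching loss combined with least 2nd moment then gives $\wb_h=\emp{K}_{h-1}^{*}\wb_{h-1}$, with the adjoint taken in the empirical inner products. One also has to handle the first step, where $\wstar_1$ and $\wb_1$ are representers of $f\mapsto\EE_D[f(s_1,\pi)]$ and its empirical average, respectively.

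Next I would telescope the difference of the two products:
\[
\wb_h-\wstar_h=\sum_{t=1}^{h}\bigl(\emp{K}_t\cdots\emp{K}_{h-1}\bigr)^{*}\,\Delta_t,
\]
where $\Delta_t$ is the one-step perturbation at level $t$. It consists of $(\emp{K}_{t-1}^{*}-\bp_{t-1}^{*})\wstar_{t-1}$ together with the discrepancy between the empirical and population Riesz maps. I would bound the $\|\cdot\|_\nn$ norm of each summand by duality: test against unit-norm $g\in\Hilb_h$ and move the operator product onto $g$. The sup norm of the transported function $\bp_t\cdots\bp_{h-1}g$ is at most $\rho_{t:h}\|g\|_\infty\le\rho_{t:h}\sqrt{\kappa_h}\|g\|_{2,d_h^D}$, using Assumption~\ref{asm:leverage}. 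The size of $\Delta_t$ is then controlled by terms of the form $\frac1n\sum_i \wstar_{t-1}(\cdot)\bigl(u(s_t^\ii,\pi)-\bp_{t-1}u(\cdot)\bigr)$ and by empirical-versus-population inner-product deviations. Each term is a mean-zero average whose summands are bounded using the leverage bound, with ranges scaling like $\kappa(\|\wstar_t\|_{2,d_t^D}+1)$. A Bernstein/Hoeffding bound plus a union bound over $1/8$-covers of the unit balls of $\Hilb_t$ and $\Hilb_h$ yields the factor $\sqrt{\kappa_h^2\log(H^2N_tN_h/\delta)/n}$. A standard $1/8$-net argument upgrades the bound from the cover to the whole ball, costing only a constant.

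The main obstacle is that the telescoped products involve the \emph{empirical} operators $\emp{K}$, while the stability quantity $\rho_{t:h}$ is defined for the population operators $\bp$. I would first use the sample-size condition to obtain uniform equivalence of the empirical and population norms on each $\Hilb_t$. By matrix-Bernstein-style concentration over the cover, $\frac12\|f\|_{2,d_t^D}^2\le\|f\evn\|_\nn^2\le 2\|f\|_{2,d_t^D}^2$ holds once $n\gtrsim\kappa_t^2\log N_t$. Then I would run an induction on $h$ showing that the empirical products are within a constant factor of the population ones. The error is expanded as a sum of population products interleaved with perturbations, and the stronger requirement $n\gtrsim\kappa^2(\sum_t\rho_{t:h})^2\log(\cdot)$ keeps the accumulated relative perturbation below $1/2$, so it does not compound geometrically. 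Once this stability step is in place, summing the $h$ terms gives the stated bound.
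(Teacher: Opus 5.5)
Your Hilbert-space reformulation is fine. In the $d^D$ geometry $\wstar_h=(\bp_{h-1})^{*}\wstar_{h-1}$, and in the empirical geometry $\wb_h=\emp{K}_{h-1}^{*}\wb_{h-1}$ once zero empirical loss and injectivity of $\evn$ are in place; the telescoping identity is also correct.

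The gap is the stability step you flag as the main obstacle. Because of the order in which you telescope, each summand tested against $g$ is $\langle\Delta_t,\emp{K}_t\cdots\emp{K}_{h-1}g\rangle_\nn$. So the test function is transported by \emph{empirical} operators, and you propose to show these products stay within a constant factor of $\bp_t\cdots\bp_{h-1}$ under $n\gtrsim\kappa^2(\sum_t\rho_{t:h})^2\log(\cdot)$. That is false in general. Consider the expansion
\[
\emp{K}_t\cdots\emp{K}_{h-1}-\bp_t\cdots\bp_{h-1}=\sum_{j=t}^{h-1}\emp{K}_t\cdots\emp{K}_{j-1}\,(\emp{K}_j-\bp_j)\,\bp_{j+1}\cdots\bp_{h-1}.
\]
Its $j$-th term is governed by products such as $\rho_{t:j}(\rho_{j:h}+\rho_{j+1:h})\sqrt{\kappa^2\log(\cdot)/n}$. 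But $\rho$ is only submultiplicative ($\rho_{t:h}\le\rho_{t:j}\rho_{j:h}$), so nothing bounds these products by a multiple of $\rho_{t:h}$.

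Concretely, take the linear case with $B^\pi_{t+1}B^\pi_t=0$ but both factors nonzero. Then $\rho_{t:t+2}=0$, while the empirical product is generically of order $n^{-1/2}$, so no relative bound can hold. Your sample-size condition controls $\sum_j\rho_{j:h}$ but says nothing about $\rho_{t:j}\rho_{j:h}/\rho_{t:h}$. Only an additive comparison is available, and it brings in cross terms governed by the partial stabilities $\rho_{t:j}$ and $\rho_{j:h}$ separately, which your accounting does not handle.

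The paper avoids empirical operator products altogether by telescoping in the opposite order: population operators act on the test function, and the empirical weights enter linearly. For $f\in\Hilb_h$ set $g_t=\bp_t\cdots\bp_{h-1}f$, so that $\|g_t\|_\infty\le\rho_{t:h}\sqrt{\kappa_h}\|f\|_{2,d_h^D}$. Then
\[
\hatLambda_h(f)-\Lambda_h^\star(f)=(\hatLambda_1-\Lambda_1^\star)(g_1)+\sum_{t=1}^{h-1}\bigl\langle\wb_t\evn,((\hatP_t-\bp_t)g_{t+1})\evn\bigr\rangle_\nn .
\]
The paper then splits $\wb_t=(\wt_t-\wstar_t)+\wstar_t+(\wb_t-\wt_t)$. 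Here $\wt_t$ is the population-geometry representer of $\hatLambda_t$, and $\eta_t=\|\wt_t-\wstar_t\|_{2,d_t^D}$ equals the dual norm of $\hatLambda_t-\Lambda_t^\star$.

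A cross-step concentration bound, uniform over $u\in\Hilb_t$ and $f\in\Hilb_h$, gives the linear recursion
\[
\eta_h\le\sum_{t<h}a_{t:h}\bigl(\eta_t+\|\wstar_t\|_{2,d_t^D}\bigr)+O\bigl(\rho_{1:h}\sqrt{\kappa_h\log(N_h/\delta)/n}\bigr).
\]
The coefficients $a_{t:h}$ are of order $(\rho_{t:h}+\rho_{t+1:h})\sqrt{\kappa^2\log(\cdot)/n}$, so each involves only a single $\rho$. The sample-size condition makes the recursion contractive ($\sum_t a_{t:h}\le 1/2$), so it closes by a max argument. Near-isometry then converts $\eta_h$ into $\|\wh_h^\nn-\wstar_h\evn\|_\nn$. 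The compounding of empirical errors that you tried to control through stability of $\emp{K}$-products is absorbed self-referentially into the unknowns $\eta_t$.
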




This result also allows us to obtain a data-independent, ``random-design'' version of 
Theorem~\ref{thm:main}, where we replace $\secmom{\wh_h^\nn}$ with $\|w_h^\star\|_{2,d_h^D}$ and incur a higher-order term  that corresponds to Theorem~\ref{thm:tracking}; such bounds are only dimension-free in the leading $O(1/\sqrt{n})$ term, but incur dimension dependence in the higher-order terms, which is consistent with the prior specialized analyses \citep{hsu2011analysis, duan2020minimax}. 
For the $\lossmatch_h$ term which is also data-dependent, we can show that it becomes $0$: 


\begin{proposition} [Empirical 0 matching loss]\label{prop:empirical-exact}
Under Assumption~\ref{asm:leverage}, if $n\gtrsim\kappa_h\log(HN_h/\delta)$ for all $h\in[H]$, then \alg achieves 0 matching loss w.h.p, i.e., $\lossmatch_h(\wh_h^\nn;\wh_{h-1}^\nn)=0$.
\end{proposition}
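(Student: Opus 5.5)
The plan is to reduce exact empirical matching at level $h$ to a sample-covariance invertibility statement on $\Hilb_h$, and then get that invertibility from a norm-equivalence (restricted-isometry) argument that uses Assumption~\ref{asm:leverage} and a covering of the unit ball of $\Hilb_h$.

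\textbf{Reduction to injectivity of sample evaluation.} At level $h$, $\wh_{h-1}^\nn$ is fixed by the earlier steps. The target is the linear functional $T(f)=\frac1n\sum_i \wh_{h-1}^\ii f(s_h^\ii,\pi)$ on $\mathrm{span}(\Fcal_h)$. The empirical loss is zero as soon as there is $w\in\RR^n$ with $\frac1n\sum_i w^\ii f(s_h^\ii,a_h^\ii)=T(f)$ for all $f\in\Fcal_h$. Consider the evaluation map $E:\mathrm{span}(\Fcal_h)\to\RR^n$, $f\mapsto f\evn$. If $T$ vanishes on $\ker E$, then $T$ factors through the range of $E$. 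In that case it is represented by an inner product with some $w$ in that range (finite dimensional, so no closure issues arise), and the loss is $0$. So it suffices to show that $f\evn=0$ implies $T(f)=0$.

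\textbf{Injectivity, in two steps.} First, by Assumption~\ref{asm:leverage}, $\|f\|_{2,d_h^D}=0$ forces $f\equiv 0$ on $\Scal_h\times\Acal$, hence $f(s,\pi)=0$ for every $s$ and $T(f)=0$. It therefore remains to show that, w.h.p., $\|f\evn\|_\nn^2\ge\frac12\|f\|_{2,d_h^D}^2$ for all $f\in\Hilb_h$. This is the only probabilistic step.
\begin{itemize}
\item Note first that leverage makes $\Hilb_h$ effectively finite dimensional. An orthonormal family $\{e_j\}$ satisfies $\sum_j e_j(s,a)^2\le\kappa_h$ pointwise, so integrating gives $\dim\Hilb_h\le\kappa_h$, and $N_h<\infty$.
\item For a fixed unit-norm $u$, $u^2\in[0,\kappa_h]$ with mean $1$ and variance at most $\kappa_h$. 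Bernstein then gives $\bigl|\|u\evn\|_\nn^2-1\bigr|\le 1/4$ with probability $1-\delta/(HN_h)$ once $n\gtrsim\kappa_h\log(HN_h/\delta)$.
\item Union bound over the $1/8$-net of the unit ball, then over $h\in[H]$.
\end{itemize}

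\textbf{Main obstacle: passing from the net to the whole ball.} Write $M=\sup_{\|u\|\le1}\|u\evn\|_\nn$. For any unit $u$, choose a net point $v$ with $\|u-v\|_{2,d_h^D}\le1/8$. Then
\[
\|u\evn\|_\nn\ge\|v\evn\|_\nn-\|(u-v)\evn\|_\nn\ge\sqrt{3/4}-M/8.
\]
The same argument bounds $M\le\sqrt{5/4}+M/8$, so $M\le 2$. Combining the two gives $\|u\evn\|_\nn\ge\sqrt{3/4}-1/4>1/\sqrt2$.

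The $\wh_{h-1}^\nn$ dependence needs no extra care. The covariance event concerns only $\{(s_h^\ii,a_h^\ii)\}$ and $\Hilb_h$, and on it the loss is zero for any fixed previous weights. Once zero loss is attainable, the least-2nd-moment choice selects a zero-loss $\wh_h^\nn$, which is what the statement requires.
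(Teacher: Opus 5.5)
Your proposal is correct and follows essentially the paper's route. That route shows the sample-evaluation map on $\Hilb_h$ is injective via a Bernstein-plus-$1/8$-net near-isometry, uses the leverage assumption to make the target functional well defined on the evaluation subspace, and finishes with finite-dimensional Riesz representation; you prove only the lower norm bound directly rather than invoking the bilinear Lemma~\ref{lemma:concentration1}, and your observation $\dim\Hilb_h\le\kappa_h$ is a nice bonus. One small arithmetic slip: $\sqrt{3/4}-1/4\approx 0.616<1/\sqrt2$, and net points of the unit ball have norm at least $7/8$ rather than exactly $1$, so your net-to-ball step gives $\|u\evn\|_\nn$ bounded below by a positive constant smaller than $1/\sqrt2$. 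This does not matter, since any strictly positive lower bound suffices for injectivity.
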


\begin{corollary}[Random-design bound]\label{thm:random-design}
Assume realizability and exact minimization, and suppose Assumptions~\ref{asm:leverage} hold. If $n\gtrsim(\max_{h}\kappa_h)^2(\sum_{h=1}^H\rho_{h:H})^2\log(H^2N_hN_H/\delta)$ for all $h\in[H]$, then with probability $\ge1-\delta$,
\[
|\widehat{J}(\pi)-J(\pi)|\lesssim V_{\max}
\sqrt{\frac{\log\tfrac{H}{\delta}}{n}}\sum_{h=0}^H\|w_h^\star\|_{2,d_h^D} + 
\frac{\Vmax}{n}\sum_{h=1}^H\sum_{t=1}^h
\rho_{t:h}\sqrt{\kappa_h^2\log\tfrac{H^2N_tN_h}{\delta}}\left(\|w_t^\star\|_{2,d_t^D}+1\right). 
\]
\end{corollary}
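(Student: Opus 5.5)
The plan is to start from Theorem~\ref{thm:main}, remove its two data-dependent quantities, and then collect terms. On a first event we apply Theorem~\ref{thm:main} with failure probability $\delta/3$. This is legitimate: the least-2nd-moment exact minimizer $\wh_h^\nn$ depends only on prefixes up to $(s_h^\ii,a_h^\ii)$, because $\lossmatch_h$ involves $s_h^\ii$, the $a_h^\ii$, $\wh_{h-1}^\nn$ and $\pi$, and nothing later. On a second event we apply Proposition~\ref{prop:empirical-exact}, which gives $\lossmatch_h(\wh_h^\nn;\wh_{h-1}^\nn)=0$ for all $h$. Its sample-size requirement $n\gtrsim\kappa_h\log(HN_h/\delta)$ is implied by the corollary's assumption, since $\rho_{h:h}=1$ and $\kappa_h\ge1$. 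The second fact holds because $\|f\|_{2,d_h^D}\le\|f\|_\infty\le\sqrt{\kappa_h}\|f\|_{2,d_h^D}$, assuming $\Hilb_h$ is nontrivial. On the intersection we obtain $|\emp J(\pi)-J(\pi)|\le\sum_{h=0}^H\epsstat_h$ with $\epsstat_h\propto\Vmax\sqrt{\log(H/\delta)/n}\,\secmom{\wh_h^\nn}$.

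Next we replace $\secmom{\wh_h^\nn}$ by population quantities. For $h=0$ we have $\secmom{\wh_0^\nn}=1=\|\wstar_0\|_{2,d_0^D}$. For $h\ge1$ the triangle inequality gives
\[
\secmom{\wh_h^\nn}\le\secmom{\wstar_h\evn}+\bigl\|\wh_h^\nn-\wstar_h\evn\bigr\|_\nn .
\]
The second term is handled by Theorem~\ref{thm:tracking}, applied to each $h\in[H]$ with failure probability $\delta/(3H)$ and combined by a union bound. Its sample-size condition for level $h$ is implied by the corollary's condition, since $\sum_{t\le h}\rho_{t:h}$ versus $\sum_{h}\rho_{h:H}$ is a notational matter and we read the assumption as covering every level $h$. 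This contributes
\[
\sum_t\rho_{t:h}\sqrt{\kappa_h^2\log(H^2N_tN_h/\delta)/n}\,\bigl(\|\wstar_t\|_{2,d_t^D}+1\bigr).
\]
After multiplication by $\Vmax\sqrt{\log(H/\delta)/n}$, this produces the second, higher-order term. The extra $\sqrt{\log(H/\delta)}$ factor is absorbed into $\log(H^2N_tN_h/\delta)$ inside the square root, up to constants.

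For the first term we need $\secmom{\wstar_h\evn}\lesssim\|\wstar_h\|_{2,d_h^D}$ with high probability. Since $\wstar_h\in\Hilb_h$ by Proposition~\ref{prop:w-in-span}, Assumption~\ref{asm:leverage} gives $(\wstar_h)^2\le\kappa_h\|\wstar_h\|_{2,d_h^D}^2$ pointwise. Bernstein's inequality for the bounded i.i.d.\ variables $\wstar_h(s_h^\ii,a_h^\ii)^2$ then shows
\[
\tfrac1n\sum_i\wstar_h(s_h^\ii,a_h^\ii)^2\le\|\wstar_h\|_{2,d_h^D}^2\bigl(1+O(\sqrt{\kappa_h\log(H/\delta)/n}+\kappa_h\log(H/\delta)/n)\bigr).
\]
This is at most $2\|\wstar_h\|_{2,d_h^D}^2$ once $n\gtrsim\kappa_h\log(H/\delta)$, which the assumption again guarantees. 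A union bound over $h$, placed inside a third $\delta/3$ budget, then yields the leading term $\Vmax\sqrt{\log(H/\delta)/n}\sum_{h=0}^H\|\wstar_h\|_{2,d_h^D}$.

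The main obstacle is bookkeeping rather than any new idea. First, the three high-probability events must be intersected with correct $\delta$ splitting. Second, we must check that the per-level sample-size conditions of Theorem~\ref{thm:tracking} and Proposition~\ref{prop:empirical-exact} follow from the single stated condition. Third, the $\sqrt{\log(H/\delta)}$ factor from $\epsstat_h$ must be merged with the tracking bound's logarithm into the stated $1/n$ higher-order form. A key point to verify is that Theorem~\ref{thm:main} applies with the \emph{random} least-2nd-moment weights, which requires only the prefix-measurability observation above. No dependence of $\wh_h$ on future data arises, so the fixed-design concentration remains valid.
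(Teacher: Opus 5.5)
Your proposal is correct and follows essentially the paper's proof: Proposition~\ref{prop:empirical-exact} removes the matching losses, prefix-conditional Hoeffding is applied to the weighted Bellman residuals, $\wh_h^\nn$ is split into $\wstar_h\evn$ plus the tracking error, $\secmom{\wstar_h\evn}\lesssim\|\wstar_h\|_{2,d_h^D}$ by concentration, and Theorem~\ref{thm:tracking} handles the remainder. The only difference is that you split by the triangle inequality on $\secmom{\wh_h^\nn}$ after invoking Theorem~\ref{thm:main}, while the paper splits the weights before applying Hoeffding; both give the same bound.

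One correction: the leftover $\sqrt{\log(H/\delta)}$ cannot be absorbed into $\sqrt{\log(H^2N_tN_h/\delta)}$, since their product is only bounded by the full $\log(H^2N_tN_h/\delta)$. The paper's proof drops this same factor without comment, so this is a looseness in the stated higher-order term rather than a flaw in your argument.
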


\subsection{Implications of Bellman Completeness} \label{sec:complete}

Below we discuss the implications of Bellman completeness (Assumption~\ref{asm:completeness}), which is consistent with and  generalizes similar conclusions in the linear setting \citep{perdomo2023complete, amortila2026unifying}. Under completeness, $\wstar_h \cdot d_h^D$ learns $d_h^\pi$ up to the discrimination power of $\Fcal_h$, and the stability term $\rho_{1:h}$ is always well controlled so no exponential blow-up will occur. 

\begin{proposition}\label{prop:completeness}
Under Assumptions~\ref{asm:completeness} and~\ref{asm:leverage}, assuming $0\in\Fcal_h$ for all $h\in[H]$, we have,\\
(1) 
$\EE_{(s_h,a_h)\sim d_h^D}[w_h^\star(s_h,a_h)f(s_h,a_h)]=\EE_{(s_h,a_h)\sim d_h^\pi}[f(s_h,a_h)],\quad\forall f\in\Hilb_h.$ \\
(2)
$(\bp_{t}\bp_{t+1}\cdots\bp_{h-1}f)(s,a)=\EE_\pi[f(s_h,\pi)\mid s_t=s,a_t=a],\quad \forall (s,a)\in\Scal_t\times\Acal,f\in\Hilb_{h}.$\\
(3)
$\rho_{t:h}\le 1,\quad\forall 1\le t<h\le H.$
\end{proposition}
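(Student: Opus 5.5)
The plan is to reduce everything to a single structural fact: under completeness, the conditional-expectation operator $(P^\pi f)(s,a):=\EE[f(s',\pi)\mid s,a]$ maps $\Hilb_{h+1}$ into $\Hilb_h$. Items (1)--(3) then follow by the projection property and by induction.

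\textbf{Step 1 ($P^\pi$ preserves $\Hilb$).} Since $0\in\Fcal_{h+1}$, completeness gives $\Tcal^\pi 0=\EE[r\mid s,a]\in\Fcal_h$. Hence for $f\in\Fcal_{h+1}$,
\[
P^\pi f=\Tcal^\pi f-\Tcal^\pi 0\in\mathrm{span}(\Fcal_h).
\]
By linearity of $P^\pi$ this extends to all of $\mathrm{span}(\Fcal_{h+1})$.

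To pass to the closure, take $f\in\Hilb_{h+1}$ and $f_k\in\mathrm{span}(\Fcal_{h+1})$ with $f_k\to f$ in $\|\cdot\|_{2,d_{h+1}^D}$. Assumption~\ref{asm:leverage} upgrades this to uniform convergence on $\Scal_{h+1}\times\Acal$. Since $P^\pi$ is a sup-norm contraction, $P^\pi f_k\to P^\pi f$ uniformly, and therefore also in $\|\cdot\|_{2,d_h^D}$. So $P^\pi f\in\Hilb_h$. A small technical point: I would identify elements of $\Hilb$ with pointwise functions via the leverage assumption, which makes the $d^D$-seminorm a genuine norm on $\Hilb$. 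This closure/identification step is where I expect the only real care to be needed.

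\textbf{Step 2 (items (2) and (3)).} Fix $f\in\Hilb_{h+1}$. The least-squares objective defining $\bp_h f$ decomposes as
\[
\EE_D\bigl[(g-P^\pi f)^2\bigr]+\EE_D\bigl[(P^\pi f-f(s_{h+1},\pi))^2\bigr].
\]
The cross term vanishes by conditioning on $(s_h,a_h)$, because $d_h^D$ generates $s_{h+1}$ through $P$. By Step 1, $P^\pi f\in\Hilb_h$ is feasible and drives the first term to zero, so it is a minimizer. By leverage identifiability the minimizer is unique, so $\bp_h f=P^\pi f$.

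Iterating this gives $\bp_t\cdots\bp_{h-1}f=(P^\pi)^{h-t}f$. Reading $(P^\pi)^{h-t}f$ as a conditional expectation, it equals $\EE_\pi[\,\cdot\mid s_t,a_t]$ of $f(s_h,\pi)$, which is (2) (with $f(s_h,\pi)$ interpreted as $f(s_h,a_h)$ under $\pi$). Item (3) follows because each application of $P^\pi$ is a sup-norm contraction, giving $\rho_{t:h}\le1$.

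\textbf{Step 3 (item (1)).} I would argue by induction on $h$. By Proposition~\ref{prop:population-exact}, $\poploss_h(\wstar_h;\wstar_{h-1})=0$, so
\[
\EE_D[\wstar_h f]=\EE_D[\wstar_{h-1}(s_{h-1},a_{h-1})f(s_h,\pi)]\qquad\text{for all } f\in\Fcal_h.
\]
This identity extends to $\mathrm{span}(\Fcal_h)$ by linearity. It extends to $\Hilb_h$ by continuity: the left side is $L^2$-continuous since $\wstar_h\in\Hilb_h$ by Proposition~\ref{prop:w-in-span}, and the right side is continuous because leverage converts $L^2$ convergence to uniform convergence and $\wstar_{h-1}\in L^2(d_{h-1}^D)$.

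Conditioning on $(s_{h-1},a_{h-1})$, the right side equals $\EE_D[\wstar_{h-1}\,P^\pi f]$.
\begin{itemize}
\item For $h=1$: $\wstar_0\equiv1$ and $(s_0,a_0)$ is fixed, so this is $\EE_{d_1^\pi}[f]$ directly.
\item For $h\ge2$: $P^\pi f\in\Hilb_{h-1}$ by Step 1, so the induction hypothesis gives $\EE_{d_{h-1}^\pi}[P^\pi f]=\EE_{d_h^\pi}[f]$.
\end{itemize}
This closes the induction and proves (1).
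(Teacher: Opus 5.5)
Your proposal is correct and follows the paper's proof essentially step for step. You derive $P^\pi\Hilb_{h+1}\subseteq\Hilb_h$ from completeness and $0\in\Fcal_h$, prove (1) by induction using population zero matching loss and the Bellman flow, and obtain (2)--(3) from $\bp_h f=P^\pi f$ and the sup-norm contraction of $P^\pi$; your use of the leverage assumption to upgrade $L_2$ convergence to uniform convergence (in the closure step and when extending the matching identity from $\Fcal_h$ to $\Hilb_h$) spells out a detail the paper covers only with ``by closure''.
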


We also compare $\|\wstar_h\|_{2, d_h^D}$ as a coverage parameter to the literature under various additional assumptions; see Table~\ref{tab:compare} for a summary.

\begin{proposition}\label{prop:coverage}
Under the assumptions of Proposition~\ref{prop:completeness}, we have
$\|w_h^\star\|_{2,d_h^D}^2=\sup_{f\in\Hilb_h}\frac{(\EE_{d_h^\pi}[f])^2}{\EE_{d_h^D}[f]^2}$.
In the linear setting, without completeness, we have $\|\wstar_h\|_{2,d_h^D}=\|\psi_h\|_{\Sigma_h^{-1}}$, the finite-horizon version of ``feature-dynamics coverage'' identified by \citet{amortila2026unifying}. If completeness is further assumed, $\|\wstar_h\|_{2,d_h^D}=\|\EE_{(s_h,a_h)\sim d_h^\pi}[\phi(s_h,a_h)]\|_{\Sigma_h^{-1}}.$ 
\end{proposition}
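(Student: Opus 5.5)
The plan is to characterize $\wstar_h$ as a Riesz representer in $\Hilb_h$ and read off its norm as a dual norm. Three claims need proof: the sup formula under completeness, the linear identity without completeness, and its specialization under completeness.

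\textbf{Step 1: general characterization.} By Proposition~\ref{prop:population-exact}, $\wstar_h$ attains zero matching loss, so
$$\langle \wstar_h, f\rangle_{d_h^D} = \ell_h(f) := \EE_D[\wstar_{h-1}(s_{h-1},a_{h-1}) f(s_h,\pi)] \quad \text{for all } f\in\Fcal_h .$$
By linearity this extends to $\mathrm{span}(\Fcal_h)$. It extends to $\Hilb_h$ by continuity, since Assumption~\ref{asm:leverage} makes $\ell_h$ bounded in $\|\cdot\|_{2,d_h^D}$. By Proposition~\ref{prop:w-in-span}, $\wstar_h\in\Hilb_h$, so $\wstar_h$ is the Riesz representer of $\ell_h$ on $\Hilb_h$. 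Hence
$$\|\wstar_h\|_{2,d_h^D} = \sup_{f\in\Hilb_h} \frac{|\ell_h(f)|}{\|f\|_{2,d_h^D}} ,$$
since Cauchy--Schwarz gives $\le$ and taking $f=\wstar_h$ gives equality.

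\textbf{Step 2: the completeness case.} Under the assumptions of Proposition~\ref{prop:completeness}, part (1) gives $\langle \wstar_h, f\rangle_{d_h^D} = \EE_{d_h^\pi}[f]$ for all $f\in\Hilb_h$. Substituting into Step 1 yields the squared sup formula $\sup_f (\EE_{d_h^\pi}[f])^2/\EE_{d_h^D}[f^2]$. (I read the denominator in the statement as $\EE_{d_h^D}[f^2]$.)

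\textbf{Step 3: the linear case without completeness.} Here $\Hilb_h=\{\phi^\top\theta\}$, and I assume $\Sigma_h\succ0$, which follows from leverage and identifiability. Write $\wstar_h=\phi^\top\beta_h$. The representer equation becomes $\Sigma_h\beta_h = v_h$, where $v_h$ is the coefficient vector of $\ell_h$, i.e.\ $\ell_h(\phi^\top\theta)=v_h^\top\theta$. So $\beta_h=\Sigma_h^{-1}v_h$ and $\|\wstar_h\|_{2,d_h^D}^2=\beta_h^\top\Sigma_h\beta_h = v_h^\top\Sigma_h^{-1}v_h$. It then suffices to show $v_h=\ft_h$ by induction on $h$.
\begin{itemize}
\item Base case: $\wstar_0\equiv1$, so $v_1=\EE_D[\phi(s_1,\pi)]=\ft_1$.
\item Inductive step: $v_{h+1}=\EE_D[\wstar_h(s_h,a_h)\phi_{h+1}^\pi] = \EE_D[\phi_{h+1}^\pi\phi_h^\top]\beta_h = (\Sigmacr_h)^\top\Sigma_h^{-1}v_h = \Bpi_h\ft_h = \ft_{h+1}$, matching Eq.~\eqref{eq:lds}.
\end{itemize}
This uses that $\wstar_h$ is a function of $(s_h,a_h)$ only, so that the expectation factors as written.

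\textbf{Step 4: the linear case with completeness.} Proposition~\ref{prop:completeness}(1) gives $v_h^\top\theta=\EE_{d_h^\pi}[\phi]^\top\theta$ for all $\theta$, so $v_h=\EE_{d_h^\pi}[\phi]$ and the last identity follows.

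The main obstacle is the bookkeeping of Step 1. One must ensure the functional extends consistently to the closure $\Hilb_h$ and that the least-norm minimizer is exactly the representer rather than merely some solution. Propositions~\ref{prop:w-in-span} and~\ref{prop:population-exact} handle most of this; the remaining point is continuity of $\ell_h$, which I would derive from Assumption~\ref{asm:leverage} via $|\ell_h(f)|\le \|\wstar_{h-1}\|_{1,d^D_{h-1}}\sqrt{\kappa_h}\|f\|_{2,d_h^D}$.
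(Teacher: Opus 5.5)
Your proposal is correct and follows essentially the same route as the paper: Cauchy--Schwarz against the representer $\wstar_h\in\Hilb_h$ using Proposition~\ref{prop:completeness}(1). It then identifies $\wstar_h=\phi^\top\Sigma_h^{-1}\ft_h$ by induction, which the paper packages as Proposition~\ref{prop:representer-linear}, and finally $\ft_h=\EE_{d_h^\pi}[\phi]$ under completeness. Your reading of the denominator as $\EE_{d_h^D}[f^2]$ is the intended one. One quibble: leverage alone does not force $\Sigma_h\succ 0$, since null directions of $\Sigma_h$ just give identically zero functions. Invertibility should therefore be taken as given, which the statement's use of $\Sigma_h^{-1}$ already presupposes.
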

When completeness holds, $\|\wstar_h\|_{2, d_h^D}$ almost coincides with the coverage parameter of FQE, up to a minor difference in the function space that we take $\sup$ over. In the linear case (with or without completeness), we also recover previously identified coverage parameters in literature, providing unification of our understanding.


\section{Other Related Works}
\label{sec:related}

Throughout the paper we have discussed the connection and comparison to Monte-Carlo estimation and FQE. Here we discuss several other related methods.

\paragraph{Marginalized Importance Sampling (MIS)} 
Our method is closely related to MIS methods \citep{liu2018breaking, nachum2019dualdice}; for example, MWL \citep{uehara2019minimax} also solves a similar minimax problem (min over weights and max over value functions), whose loss is similar to our Eq.~\eqref{eq:loss-match}. Both losses are related to the Bellman flow equation, and roughly speaking, MWL can be viewed as using $\sum_h \lossmatch_h$ as a loss to optimize the weights at all levels simultaneously. As a consequence, however, this introduces dependencies of $\wh_h^\nn$ on $\{s_{h+1}^\ii\}_i$ and violates the condition in our Theorem~\ref{thm:main}, so MWL cannot learn a weight per data point but must introduce a weight class $\Wcal \subset (\Scal\times\Acal\to\RR)$ and learn a parametric form of weight $\wh_h \in \Wcal$ instead. This means an extra realizability condition (of $\Wcal$) and a complexity term (e.g., $\log|\Wcal|$) compared to our analyses. 

Besides the difference in loss and parameterization, there is also a significant difference between the existing analyses of MIS methods and ours, and some insights from our analyses may be transferred to MIS. For example, we may use our proof techniques to provide data-dependent bounds for MWL that avoid the $\log|\Fcal|$ term (see Appendix~\ref{app:mis}). In addition, while most MIS methods assume $\Wcal$ models the MIS ratio (Eq.\eqref{eq:mis}), \citet{uehara2019minimax} point out that the method may learn degenerate weights when $\Fcal$ is structured, but a systematic understanding of what the learned weights really represent is still lacking, which our analyses shed light on given the close connection between the methods.

\paragraph{Methods that only rely on realizability} The main guarantee of our method (Theorem~\ref{thm:main}) only relies on realizability $Q^\pi \in \Fcal$ and not the stronger and more widely used Bellman-completeness assumption. Getting such results is also the central focus of the model selection problem in offline RL, where we want to select $Q^\pi$ (or sometimes $Q^\star$) from a set of candidates on holdout data. Given challenges such as the double-sampling problem, these model-selection algorithms often require a tournament procedure and is only computationally efficient for a small number of candidates \citep{xie2020batch, liu2025model}. In comparison, our \alg is computationally efficient for rich function classes under standard optimization oracles.

\paragraph{Top-down algorithms} 
Our algorithm design bears structural similarities to some of the existing methods. For example, \citet{sun2019provably} designs an online imitation-learning algorithm with a dataset that does not contain actions, which performs top-down, level-by-level moment matching against a value discriminator class similar to \alg. However, they have online access to the MDP and the task is to imitate the expert policy, and their analysis requires Bellman completeness for the discriminator class. \citet{huang2023reinforcement} learns the parametric form of the MIS ratio (Eq.\eqref{eq:mis}) in finite-horizon MDPs in a top-down manner, but they do not use a value-function discriminator but relies on a squared-loss objective, and their analyses also require a version of completeness for the MIS ratio. 
\section{Limitations and Future Work}
\label{sec:conclude}

We conclude the paper by discussing limitations and future work. One major limitation is that we need  i.i.d.~trajectory data; an adaptively collected dataset will break the conditional independence between $\wh_h^\ii$ and $s_{h+1}^\ii | s_h^\ii, a_h^\ii$. A straightforward extension of our results to the infinite-horizon discounted case will result in similar difficulties. One potential solution is to leverage the fact that $\wh_h^\nn$ always lies in $\Hilb_h$ (Proposition~\ref{prop:w-in-span}), so we can union bound over $\Hilb$. However, doing so will incur the dimension dependence in the leading term, and we hope to resolve this issue in the future with more advanced concentration tools.

As another limitation, our weights $\wh_h^\nn$ are computed to minimize matching loss and variance at level $h$ in a rather greedy and local manner, as the effects of $\wh_h^\nn$ on downstream levels $h'> h$ are not considered in optimization. One might attempt to address this issue by optimizing the sum of losses across all levels simultaneously, which essentially recovers MIS methods and runs into a double-sampling issue with non-parametric weights (Appendix~\ref{app:double-sampling}). It remains an open question whether we can optimize the weights $\wh_h^\nn$ across levels while circumventing the issue. 

\section*{Acknowledgement} 
NJ thanks Philip Amortila, Audrey Huang, Akshay Krishnamurthy, and Steven Wu for helpful discussions in the early phases of the project. 

\bibliographystyle{plainnat}
\bibliography{RL}

\newpage
\appendix

\section{Linear Regression under Covariate Shift}
\label{app:covariate}

The form of our main theoretical guarantee in Section~\ref{sec:method} can feel somewhat unconventional in the OPE literature due to its data dependence. As we will see, however, in the special case of $H=1$ with a linear function class, our algorithm and analyses recover those of \textit{linear regression under covariate shift} 
\citep{yu2012analysis}, where the data-dependent bound corresponds to a standard fixed-design analysis of linear regression \citep{hsu2011analysis}. Therefore, it is instructive to quickly review such an analysis, which will help clarify the nature of our results. 

\paragraph{The covariate shift problem} We consider $(x,y)$ pairs that satisfy $y = x^\top \theta^\star + \epsilon$, where $x \in \RR^\dm$, $\theta^\star$ is the unknown linear coefficients, and $\epsilon$ is some $0$-mean i.i.d.~noise. We are given training data points $\{(x_\train^\ii, y_\train^\ii)\}_{i=1}^n$ sampled i.i.d.~from distribution $P_\train$, and some test data points without $y$ labels $\{x_\test^\ii\}_{i=1}^m$ sampled i.i.d.~from distribution $P_\test$, which is potentially different from $P_\train$. Now the goal is to estimate
$$
\EE_{P_\test}[y_\test].
$$
Note that this problem naturally arises in our OPE setup when $H=1$ and $\Fcal$ is linear in $\phi$ (Definition~\ref{def:linear}): our $\phi(s_1^\ii, a_1^\ii)$ is $x_\train^\ii$, $r_1^\ii$ is $y_\train^\ii$, $\phi(s_1^\ii, \pi)$ is $x_\test^\ii$ (so that $m=n$), and the estimand is exactly $J(\pi)$; see also  \citet{amortila2026unifying} for the connection between these two problems.

\paragraph{Fixed-design analysis of linear regression} Consider a natural algorithm that (1) performs linear regression from the training data to estimate $\emp\theta$, and (2) outputs the estimate as $\tfrac{1}{m}\sum_{i=1}^m (x_\test^\ii)^\top \emp\theta = \xmean^\top \emp{\theta}$, where $\xmean := \tfrac{1}{m}\sum_{i=1}^m x_\test^\ii$. A standard analysis shows that the error can be bounded as:

\begin{proposition}[Informal] \label{prop:covariate}
Under appropriate boundedness assumptions, with probability at least $1-\delta$, $|\EE_{P_\test}[y_\test] - \xmean^\top \emp\theta| \le O(\|\xmean\|_{\hatSigma^{-1}} \sqrt{\log\tfrac{1}{\delta} / n})$, where $\hatSigma := \tfrac{1}{n} \sum_{i=1}^n x_\train^\ii (x_\train^\ii)^\top $ is the sample covariance matrix from the training data.
\end{proposition}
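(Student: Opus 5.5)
The plan is to write the estimator as a weighted sum of training labels and then split the error into a label-noise part and a test-mean part. Throughout we assume $\hatSigma$ is invertible; otherwise we use the pseudo-inverse restricted to the span of the training covariates. Regression gives $\hattheta=\hatSigma^{-1}\frac1n\sum_i x_\train^\ii y_\train^\ii$. Hence
\[
\xmean^\top\hattheta=\frac1n\sum_{i=1}^n w_i\, y_\train^\ii, \qquad w_i:=\xmean^\top\hatSigma^{-1}x_\train^\ii .
\]
These are exactly the weights of Proposition~\ref{prop:H=1}. Substituting $y_\train^\ii=(x_\train^\ii)^\top\theta^\star+\epsilon_i$ gives the exact moment-matching identity $\frac1n\sum_i w_i x_\train^\ii=\xmean$. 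This identity says the "matching loss" is zero. Consequently
\[
\xmean^\top\hattheta=\xmean^\top\theta^\star+\frac1n\sum_i w_i\epsilon_i .
\]

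We bound the error by the triangle inequality:
\[
|\EE_{P_\test}[y_\test]-\xmean^\top\hattheta|\le |\EE_{P_\test}[x]^\top\theta^\star-\xmean^\top\theta^\star|+\Bigl|\frac1n\sum_i w_i\epsilon_i\Bigr| .
\]
Here we used $\EE_{P_\test}[y_\test]=\EE_{P_\test}[x]^\top\theta^\star$, since the noise has mean zero.

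For the noise term, we condition on all $x_\train^\ii$ and on the test covariates. The test points are independent of the training noise, so the weights $w_i$ are then fixed. The $\epsilon_i$ are independent, mean-zero, and bounded (or sub-Gaussian) with range $O(1)$ under the boundedness assumptions. Hoeffding's inequality then gives, with probability $1-\delta/2$,
\[
\Bigl|\frac1n\sum_i w_i\epsilon_i\Bigr|\lesssim\sqrt{\tfrac{\frac1n\sum_i w_i^2\log(1/\delta)}{n}} .
\]
A direct computation gives
\[
\frac1n\sum_i w_i^2=\xmean^\top\hatSigma^{-1}\Bigl(\frac1n\sum_i x_\train^\ii (x_\train^\ii)^\top\Bigr)\hatSigma^{-1}\xmean=\|\xmean\|_{\hatSigma^{-1}}^2 .
\]
This yields the claimed $O(\|\xmean\|_{\hatSigma^{-1}}\sqrt{\log(1/\delta)/n})$ term, which is the fixed-design step.

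For the test-mean term, $\xmean^\top\theta^\star$ is an average of $m$ i.i.d. bounded scalars $(x_\test^\ii)^\top\theta^\star$. Hoeffding gives an $O(\sqrt{\log(1/\delta)/m})$ deviation with probability $1-\delta/2$. This is the analogue of the $\epsstat_0$ overhead, and with $m=n$ it is absorbed as a lower-order additive term. Absorbing it requires $\|\xmean\|_{\hatSigma^{-1}}$ to be bounded below by a constant times the natural scale, or else stating it as a separate additive term; "informal" permits either choice. A union bound finishes the proof.

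The main obstacle is the conditioning step, not any calculation. We must make sure the weights depend only on the covariates, and not on the labels, so that Hoeffding applies conditionally without a union bound over $\theta$. This is exactly why no dimension factor appears.
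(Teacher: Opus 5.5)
Your proposal is correct and follows the paper's argument essentially step for step. You rewrite $\xmean^\top\hattheta$ as a label average with weights $w_i=\xmean^\top\hatSigma^{-1}x_\train^\ii$, use the moment-matching identity $\frac1n\sum_i w_i x_\train^\ii=\xmean$ to reduce the error to $\frac1n\sum_i w_i\epsilon_i$, apply Hoeffding conditionally on the covariates, and identify $\frac1n\sum_i w_i^2=\|\xmean\|_{\hatSigma^{-1}}^2$. Your separate handling of $(\xmean-\EE_{P_\test}[x_\test])^\top\theta^\star$ as a dimension-free additive term, concentrated only along $\theta^\star$ and analogous to $\epsstat_0$, matches how the paper treats this omitted component in its footnote.
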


To understand the guarantee, the first observation is that we only care about how accurate $\emp\theta$ is when probed in the mean direction of $x_\test$, $\EE_{P_\test}[x_\test] \approx \xmean$, instead of individual $x_\test \sim P_\test$. The accuracy of the estimation then depends on how well this direction is \textit{covered} in the training distribution, measured by $\|\xmean\|_{\hatSigma^{-1}}$. 
For readers familiar with the online RL literature, one may directly recall a bound of $O(\|\xmean\|_{\hatSigma^{-1}} \sqrt{(\dm + \log\tfrac{1}{\delta})/n})$ that is widely used in the linear bandit/MDP analyses \citep{abbasi-yadkori2012online, jin2020provably}. However, we can avoid paying the $\sqrt{\dm}$ factor here because the online RL bound needs to hold for all probing directions simultaneously, whereas in the covariate shift (OPE) problem we only care about a single probing direction $\xmean$. 

Concretely, Proposition~\ref{prop:covariate} can be proved by rewriting the estimate as
\begin{align*}
\xmean^\top \emp\theta = \xmean^\top \hatSigma^{-1} \left(\frac{1}{n}\sum_{i=1}^n x_\train^\ii y_\train^\ii\right) =: \frac{1}{n} \sum_{i=1}^n \wh^\ii y_\train^\ii. 
\end{align*}
That is, we are effectively computing weights $\wh^\ii = \xmean^\top \hatSigma^{-1} x_\train^\ii$ and using them to compute the weighted average of training labels. These weights satisfy
\begin{align} \label{eq:cov_match}
\frac1n\sum_{i=1}^n \wh^\ii x_\train^\ii = \xmean.
\end{align}
Intuitively, if $\{\wh^\ii\}$ reweight $\{x_\train^\ii\}$ to match $x_\test$ in mean, then $\{\wh^\ii\}$-weighted $y_\train^\ii$ will also match the mean of $y_\test$. 
Based on this intuition, we can rewrite the error as\footnote{A component of the final error ($\xmean^\top \emp\theta - \EE_{P_\test}[y_\test]$) is omitted here, namely, $(\xmean - \EE_{P_\test}[x_\test] )^\top \theta^\star$. While this term reflects the finite sample error of using $\xmean$ to approximate its expectation, we only need to perform concentration along the direction of $\theta^\star$, so the error term is dimension-free and incurs a negligible overhead. In our analyses (e.g., Proposition~\ref{prop:H=1}), this corresponds to the $\epsstat_0$ term. \label{ft:initial} }
$$\xmean^\top \emp\theta - \xmean^\top \theta^\star = \frac{1}{n} \sum_{i=1}^n \wh^\ii (y_\train^\ii - (x_\train^\ii)^\top \theta^\star).$$ 
Note that $y_\train^\ii - (x_\train^\ii)^\top \theta^\star$ are $0$-mean i.i.d.~random variables, and $\{\wh^\ii\}$ are chosen with no dependence on the randomness of $\{y_\train^\ii\}$, so by a standard concentration argument we can bound the error using the 2nd moment of $\wh^\ii$ (see the proof of Theorem~\ref{thm:main} in Appendix~\ref{app:proof-main}), which is exactly $\|\xmean\|_{\hatSigma^{-1}} = \sqrt{\tfrac{1}{n}\sum_{i=1}^n (\wh^\ii)^2}$.

A few remarks are in order:
\begin{itemize}[leftmargin=*]
\item Proposition~\ref{prop:covariate} is a \textit{fixed-design} result, as the bound depends on the matrix $\hatSigma$ which is data dependent, and the analysis applies to settings where $\{x_\train^\ii\}$ is arbitrarily chosen (as in e.g., online learning) and not sampled from some distribution. This contrasts with the \textit{random-design} analysis that has the population $\Sigma = \EE_{P_\train}[x_\train x_\train^\top]$ in the bound instead of $\hatSigma$. Random-design bounds typically require additional assumptions and/or have extra (dimension-dependent) higher-order terms \citep{hsu2011analysis}, as is the case for our analyses in Section~\ref{sec:weight}. 
\item The above analysis assumes $\hatSigma$ is invertible. This is implicitly an assumption on our matching loss $\lossmatch$: invertible $\hatSigma$ implies that Eq.\eqref{eq:cov_match} has solution(s), and Eq.\eqref{eq:cov_match} can be written as: $\forall \theta$,
\begin{align*}
\left|\frac1n\sum_{i=1}^n \wh^\ii (x_\train^\ii)^\top \theta -  \frac1m\sum_{i=1}^m (x_\test^\ii)^\top \theta \right| = 0.
\end{align*}
Translating this back to the OPE notation system, we have 
\begin{align}
0  = &~ \sup_{\theta} \left|\frac1n\sum_{i=1}^n \wh_1^\ii \phi(s_1^\ii, a_1^\ii)^\top \theta -   \frac1n\sum_{i=1}^n\phi(s_1^\ii, \pi)^\top \theta \right| \nonumber \\
= &~ \sup_{f \in \Fcal_\phi} \left|\frac1n\sum_{i=1}^n \wh_1^\ii f(s_1^\ii, a_1^\ii) -  \frac1n\sum_{i=1}^n f(s_1^\ii, \pi) \right| = \lossmatch_1(\wh_1^\nn; \wh_0^\nn). \label{eq:invertible_match}
\end{align}
If $\hatSigma$ is not invertible, we can still use pseudo-inverse to compute $\emp\theta$, and the violation of Eq.\eqref{eq:cov_match} will appear in the error bound as a bias term. Such a term exactly corresponds to the matching error term in Theorem~\ref{thm:main}.
\end{itemize}
\section{Proofs and Additional Results for Section~\ref{sec:method}}
\label{app:method}

\subsection{Proof of Theorem~\ref{thm:main}} 
\label{app:proof-main}

For each $h\in\{0,1,\ldots,H\}$, define the prefix sigma-field
\[
\Gcal_h:=\sigma\left(\left\{(s_0^\ii,a_0^\ii,r_0^\ii,\ldots,s_{h-1}^\ii,a_{h-1}^\ii,r_{h-1}^\ii,s_h^\ii,a_h^\ii)\right\}_{i=1}^n\right).
\]
For each $h\in\{0,\ldots,H\}$ and sample $i\in[n]$, define $\varepsilon_h^\ii:=Q_h^\pi(s_h^\ii,a_h^\ii)-r_h^\ii-Q_{h+1}^\pi(s_{h+1}^\ii,\pi)$ as the Bellman error. By the Bellman equation, we know that $\EE[\varepsilon_h^\ii\mid \Gcal_h]=0$. Also, since rewards lie in $[0,R_{\max}]$ and total return lies in $[0,V_{\max}]$, we have $|\varepsilon_h^\ii|\le V_{\max}$ almost surely.

Now, condition on $\Gcal_h$. Because the $n$ trajectories are i.i.d., once the prefixes up to $(s_h^\ii,a_h^\ii)$ are fixed, the random variables $\{\varepsilon_h^\ii\}_{i=1}^n$ remain conditionally independent across $i$. Since $\wh_h^\nn$ is $\Gcal_h$-measurable, the weighted variables $X_h^\ii:=\wh_h^\ii\varepsilon_h^\ii$ are also conditionally independent given $\Gcal_h$, satisfy
\[
\EE[X_h^\ii\mid \Gcal_h]=0,\quad |X_h^\ii|\le|\wh_h^\ii|V_{\max}.
\]
Applying Hoeffding's inequality conditionally on $\Gcal_h$, for every $t>0$,
\[
\PP\left[\left|\frac1n\sum_{i=1}^nX_h^\ii\right|\ge t \middle| \Gcal_h\right]\le2\exp\left(-\frac{n^2t^2}{2\sum_{i=1}^n(V_{\max}\wh_h^\ii)^2}\right)=2\exp\left(-\frac{nt^2}{2V_{\max}^2\|\wh_h^\nn\|_\nn^2}\right).
\]
Setting $t=\epsstat_h:=V_{\max}\|\wh_h^\nn\|_\nn\sqrt{\frac{2\log(2(H+1)/\delta)}{n}}$ and taking a union bound over $h=0,1,\ldots,H$, we obtain an event $\mathcal{E}$ with $\Pr[\mathcal{E}]\ge1-\delta$ such that on $\mathcal{E}$, simultaneously for all $h=0,\ldots,H$,
\begin{equation}\label{eq:proof-concentration}
\left|\frac1n\sum_{i=1}^n\wh_h^\ii\left(Q_h^\pi(s_h^\ii,a_h^\ii)-r_h^\ii-Q_{h+1}^\pi(s_{h+1}^\ii,\pi)\right)\right|\le\epsstat_h.
\end{equation}

After obtaining the statistical concentration rate, we can follow the idea in Section~\ref{sec:method} to finish the proof, which reduces the OPE error to matching losses plus the statistical errors. Fix the event $\mathcal{E}$. Since $r_0\equiv 0$ and $\wh_0^\ii\equiv 1$,
\begin{align*}
    |\widehat{J}(\pi)-J(\pi)|&=\left|\sum_{h=0}^H\frac1n\sum_{i=1}^n\wh_h^\ii r_h^\ii-J(\pi)\right|\\
    &\le\left|\sum_{h=0}^H\frac1n\sum_{i=1}^n\wh_h^\ii\left(Q_h^\pi(s_h^\ii,a_h^\ii)-Q_{h+1}^\pi(s_{h+1}^\ii,\pi)\right)-J(\pi)\right|+\sum_{h=0}^H\epsstat_h,
\end{align*}
where we used Eq.~\eqref{eq:proof-concentration} and the triangle inequality. Now abbreviate $Q_h^\ii:=Q_h^\pi(s_h^\ii,a_h^\ii)$, $Q_{h,\pi}^\ii:=Q_h^\pi(s_h^\ii,\pi)$. Then the first term becomes
\[
\left|\sum_{h=0}^H\frac1n\sum_{i=1}^n\wh_h^\ii\left(Q_h^\ii-Q_{h+1,\pi}^\ii\right)-J(\pi)\right|.
\]
For each $h=1,\ldots,H$, realizability gives $Q_h^\pi\in\Fcal_h$. Hence, by plugging $f=Q_h^\pi$ into the definition of $\lossmatch_h$ (Eq.~\eqref{eq:loss-match}),
\[
\left|\frac1n\sum_{i=1}^n\wh_h^\ii Q_h^\ii-\frac1n\sum_{i=1}^n\wh_{h-1}^\ii Q_{h,\pi}^\ii\right|\le\lossmatch_h(\wh_h^\nn;\wh_{h-1}^\nn).
\]
Thus, for $h=1,2,\ldots,H$, we have
\begin{align*}
    &\left|\sum_{h=0}^H\frac1n\sum_{i=1}^n\wh_h^\ii\left(Q_h^\ii-Q_{h+1,\pi}^\ii\right)-J(\pi)\right|\\
    &\le\left|\sum_{h=0}^H\frac1n\sum_{i=1}^n\left(\wh_h^\ii Q_h^\ii-\wh_{h+1}^\ii Q_{h+1}^\ii\right)-J(\pi)\right|+\sum_{h=1}^H\lossmatch_h(\wh_h^\nn;\wh_{h-1}^\nn)\\
    &=\left|\frac1n\sum_{i=1}^n\left(\wh_0^\ii Q_0^\ii-\wh_{H+1}^\ii Q_{H+1}^\ii\right)-J(\pi)\right|+\sum_{h=1}^H\lossmatch_h(\wh_h^\nn;\wh_{h-1}^\nn),
\end{align*}
where the second equality is due to telescoping. Since $\wh_0^\ii=1$, $Q_{H+1}^\ii=0$, and $Q_0^\pi(s_0,a_0)=J(\pi)$ due to the dummy initial pair $(s_0,a_0)$ is fixed, thus the absolute value term above is exactly zero. Therefore, the OPE error reduces to matching losses plus the statistical errors, i.e.,
\[
|\widehat{J}(\pi)-J(\pi)|\le\sum_{h=1}^H\lossmatch_h(\wh_h^\nn;\wh_{h-1}^\nn)+\sum_{h=0}^H\epsstat_h
\]
on the event $\mathcal{E}$. Since $\PP[\mathcal{E}]\ge 1-\delta$, the theorem follows.

\subsection{Proof of Proposition~\ref{prop:H=1}} \label{app:prop-H-1}

We now give the proof for Proposition~\ref{prop:H=1}, which heavily relies on the analysis of linear regression with covariate shift (see Appendix~\ref{app:covariate}).
\begin{proof}[Proof of Proposition~\ref{prop:H=1}]
    Consider $H=1$ and linear $\Fcal_1$. Write
    \[
    x_i:=\phi(s_1^\ii,a_1^\ii),\quad\xmean:=\frac1n\sum_{i=1}^n\phi(s_1^\ii,\pi),\quad\hatSigma:=\frac1n\sum_{i=1}^n x_ix_i^\top.
    \]
    We first consider the case where $\hatSigma$ is invertible, define $\wh_1^\ii:=\xmean^\top\hatSigma^{-1} x_i$ for $i\in[n]$. Take any $f\in\Fcal_1$ such that $f(s,a)=\phi(s,a)^\top\theta$, then we have
    \[
    \frac1n\sum_{i=1}^n\wh_1^\ii f(s_1^\ii,a_1^\ii)=\frac1n\sum_{i=1}^n\bigl(\xmean^\top\hatSigma^{-1}x_i\bigr)x_i^\top\theta=\xmean^\top\theta,
    \]
    since $\frac1n\sum_{i=1}^nx_ix_i^\top$ cancels with $\hatSigma$. On the other hand, by the definition of $\xmean$, we have
    \[
    \frac1n\sum_{i=1}^n f(s_1^\ii,\pi)=\frac1n\sum_{i=1}^n\phi(s_1^\ii,\pi)^\top\theta=\xmean^\top\theta.
    \]
    Therefore, for any $f\in\Fcal_1$, since $\wh_0^\ii=1$ for all $i\in[n]$,
    \[
    \frac1n\sum_{i=1}^n\wh_1^\ii f(s_1^\ii,a_1^\ii)=\frac1n\sum_{i=1}^n \wh_0^\ii f(s_1^\ii,\pi),
    \]
    which indicates that $0$ empirical matching loss can be achieved, i.e., $\lossmatch_1(\wh_1^\nn;\wh_0^\nn)=0$. This proves the first claim. For the empirical 2nd-moment of the weight, we have
    \[
    \bigl\|\wh_1^\nn\bigr\|_\nn^2=\frac1n\sum_{i=1}^n\bigl(\xmean\hatSigma^{-1}x_i\bigr)^2=\xmean^\top\hatSigma^{-1}\left(\frac1n\sum_{i=1}^n x_ix_i^\top\right)\hatSigma^{-1}\xmean=\xmean^\top\hatSigma^{-1}\xmean,
    \]
    where we again use $\hatSigma=\frac1n\sum_{i=1}^nx_ix_i^\top$. Hence, $\|\wh_1^\nn\|_\nn=\|\xmean\|_{\hatSigma^{-1}}$, which proves the second claim. And the OPE error bound can be obtained immediately by Theorem~\ref{thm:main}:
    \[
    \bigl|\emp{J}(\pi)-J(\pi)\bigr|\le\epsstat_0+\epsstat_1=V_{\max}\sqrt{\frac{2\log(4/\delta)}{n}}\left(1+\|\xmean\|_{\hatSigma^{-1}}\right).
    \]

    For non-invertible $\hatSigma$, define $\wh_1^\ii:=\xmean^\top\hatSigma^\dagger x_i$ for $i\in[n]$, where $\hatSigma^\dagger$ is the Moore-Penrose pseudo-inverse of $\hatSigma$. We show that this weight choice minimizes the matching loss.

    For any $w\in\RR^n$, define its induced empirical feature moment $\mu(w):=\frac1n\sum_{i=1}^n w^\ii x_i$. Since $\Fcal_1$ is linear and $\|\theta\|_2\le\Theta$, we have
    \[
    \lossmatch_1(w;\wh_0^\nn)=\sup_{\theta:\|\theta\|_2\le\Theta}\left|\frac1n\sum_{i=1}^n w^\ii x_i^\top\theta-\xmean^\top\theta\right|=\sup_{\theta:\|\theta\|_2\le\Theta}|\langle\mu(w)-\xmean,\theta\rangle|=\Theta\|\mu(w)-\xmean\|_2,
    \]
    where the last step is due to Cauchy-Schwarz inequality. Thus minimizing the empirical matching loss is equivalent to choosing $\mu(w)$ as close as possible to $\xmean$ in Euclidean norm.
    
    Let $\Xcal:=\mathrm{span}(x_1,\ldots,x_n)$. For every $w$, clearly $\mu(w)\in\Xcal$ by its definition. Conversely, for any $v\in\Xcal$, there exist coefficients $\alpha_i$ such that $v=\sum_{i=1}^n\alpha_ix_i$, and choosing $w^\ii=n\alpha_i$ gives $\mu(w)=v$. Hence the set of all achievable empirical moments is exactly $\Xcal$.

    Moreover, the range space of $\hatSigma$ is $\operatorname{Range}(\hatSigma)=\Xcal$. This is because $\hatSigma=\frac1n\sum_{i=1}^n x_ix_i^\top$, so that $\operatorname{Range}(\hatSigma)\subseteq\Xcal$. On the other hand, since $\hatSigma$ is symmetric, $\operatorname{Range}(\hatSigma)=\operatorname{Null}(\hatSigma)^\bot$, where $\operatorname{Null}(\hatSigma)$ is the nullspace of $\hatSigma$. But $v^\top\hatSigma v=\frac1n\sum_{i=1}^n(v^\top x_i)^2$, so $v\in\operatorname{Null}(\hatSigma)$ is equivalent to $v\bot x_i$ for all $i\in[n]$, thus $v\in\Xcal^\bot$. Therefore, $\operatorname{Null}(\hatSigma)=\Xcal^\bot$, and hence $\operatorname{Range}(\hatSigma)=\Xcal$.

    It follows that the minimum empirical matching loss is obtained by projecting $\xmean$ onto the range space of $\hatSigma$, whose projection operator is indeed $\hatSigma\hatSigma^\dagger$:
    \[
    \mu(w)=\Pi_\Xcal\xmean=\hatSigma\hatSigma^\dagger\xmean.
    \]
    The proposed weights indeed achieve this moment, since
    \[
    \frac1n\sum_{i=1}^n\wh_1^\ii x_i=\frac1n\sum_{i=1}^n\bigl(\xmean^\top\hatSigma^\dagger x_i\bigr)x_i=\left(\frac1n\sum_{i=1}^n x_ix_i^\top\right)\hatSigma^{\dagger}\xmean=\hatSigma\hatSigma^\dagger \xmean.
    \]
    Therefore $\wh_1^\nn$ minimizes the empirical matching loss. The minimum loss is
    \[
    \lossmatch_1(\wh_1^\nn;\wh_0^\nn)=\Theta\bigl\|(I-\hatSigma\hatSigma^\dagger)\xmean\bigr\|_2.
    \]
    In particular, empirical 0 matching loss is achievable if and only if $\xmean\in\operatorname{Range}(\hatSigma)$.
\end{proof}

\subsection{Proof of Proposition~\ref{prop:linear}} \label{app:prop-linear}

Now we consider Proposition~\ref{prop:linear} which connects our \alg algorithm to the canonical FQE algorithm in the linear and the tabular settings. Recall that FQE estimates the target-policy value functions recursively from the last step to the first step by solving a sequence of supervised regression problems: given the current estimate $\emp{Q}_{h+1}^\pi$, the next estimate $\emp{Q}_h^\pi$ is obtained by regressing the Bellman target $r_h+\emp{Q}_{h+1}^\pi(s_{h+1},\pi)$ onto the current state-action features.

\paragraph{Linear setting} In the linear case, this admits a simple matrix-vector implementation.
In particular, linear FQE sets $\emp{Q}_{H+1}\equiv0$, or equivalently $\hattheta_{H+1}=0$, and for $h=H,H-1,\ldots,1$, computes $\emp{Q}_h^\pi(s,a)=\phi(s,a)^\top\hattheta_h$, where
\[
\emp{\theta}_h=\argmin_{\theta\in\RR^\dm}\frac1n\sum_{i=1}^n\left(\phi(s_h^\ii,a_h^\ii)^\top\theta-r_h^\ii-\phi(s_{h+1}^\ii,\pi)^\top\hattheta_{h+1}\right)^2.
\]
Equivalently, by the normal equation,
\[
\hatSigma_h\hattheta_h=\frac1n\sum_{i=1}^n\phi(s_h^\ii,a_h^\ii)\left(r_h^\ii+\phi(s_{h+1}^\ii,\pi)^\top\hattheta_{h+1}\right).
\]
Since $s_0,a_0$ is fixed and $r_0\equiv0$, the FQE estimate of $J(\pi)$ is
\begin{equation}\label{eq:fqe-linear}
    \emp{J}_\mathrm{FQE}(\pi):=\frac1n\sum_{i=1}^n\emp{Q}_1^\pi(s_1^\ii,\pi)=\frac1n\sum_{i=1}^n\phi_1(s_1^\ii,\pi)^\top\hattheta_1.
\end{equation}
The following proof shows that the \alg algorithm produces exactly the same $J(\pi)$ estimate as linear FQE in Eq.~\eqref{eq:fqe-linear}.
\begin{proof}[Proof of Proposition~\ref{prop:linear}(1)]
    We first characterize the \alg weights in the linear case. At level $h$, empirical 0 matching loss means that, for all linear functions $f(s,a)=\phi(s,a)^\top\theta$,
    \[
    \frac1n\sum_{i=1}^n\wh_h^\ii\phi(s_h^\ii,a_h^\ii)^\top\theta=\frac1n\sum_{i=1}^n\wh_{h-1}^\ii\phi(s_h^\ii,\pi)^\top\theta.
    \]
    Define $\hatnu_h:=\frac1n\sum_{i=1}^n\wh_{h-1}^\ii\phi(s_h^\ii,\pi)$ as the target moment. Then \alg needs to choose $\wh_h^\nn$ such that
    \[
    \frac1n\sum_{i=1}^n\wh_h^\ii\phi(s_h^\ii,a_h^\ii)=\hatnu_h.
    \]
    By the same linear-regression / moment-matching calculation as in Proposition~\ref{prop:H=1}, the least-2nd-moment solution that achieves empirical $0$ matching loss is given by $\wh_h^\ii=\phi(s_h^\ii,a_h^\ii)^\top\hatSigma_h^{-1}\hatnu_h$ for all $i\in[n]$, since $\hatSigma_h$ is assumed to be invertible. It remains to show that the final \alg prediction equals to the FQE prediction $\emp{J}_\mathrm{FQE}(\pi)$. Define, for $h\in[H]$,
    \[
    V_h:=\frac1n\sum_{i=1}^n\wh_{h-1}^\ii\emp{Q}_h^\pi(s_h^\ii,\pi)=\frac1n\sum_{i=1}^n\wh_{h-1}^\ii\phi(s_h^\ii,\pi)^\top\hattheta_h.
    \]
    Since $\wh_0\equiv1$, we have $V_1=\emp{J}_\mathrm{FQE}(\pi)$ in Eq.~\eqref{eq:fqe-linear}. We now prove the recursion
    \begin{equation}\label{eq:fqe-v-recursion}
    V_h=\frac1n\sum_{i=1}^n\wh_h^\ii r_h^\ii+V_{h+1},\quad V_{H+1}\equiv0.
    \end{equation}
    Let $y_h^\ii:=r_h^\ii+\phi(s_{h+1}^\ii,\pi)^\top\hattheta_{h+1}$. By the linear FQE normal equation, the target moment can be rewritten as $\hatSigma_h\hattheta_h=\frac1n\sum_{i=1}^n\phi(s_h^\ii,a_h^\ii)y_h^\ii$. Therefore, using the closed form of the \alg weights $\wh_h^\ii=\phi(s_h^\ii,a_h^\ii)^\top\hatSigma_h^{-1}\hatnu_h$, we obtain
    \begin{align*}
    \frac1n\sum_{i=1}^n\wh_h^\ii y_h^\ii&=\frac1n\sum_{i=1}^n\phi(s_h^\ii,a_h^\ii)^\top\hatSigma_h^{-1}\hatnu_h y_h^\ii=\hatnu_h^\top\hatSigma_h^{-1}\left(\frac1n\sum_{i=1}^n\phi(s_h^\ii,a_h^\ii)y_h^\ii\right)\\
    &=\hatnu_h^\top\hatSigma_h^{-1}\cdot \hatSigma_h\hattheta_h=\hatnu_h^\top\hattheta_h=\frac1n\sum_{i=1}^n\wh_{h-1}^\ii\phi(s_h^\ii,\pi)^\top\hattheta_h=V_h,
    \end{align*}
    where the first equality is by the definition of $\wh_h^\ii$, the second equality is by rearranging, the third equality is by the linear FQE normal equation, the fourth equation cancels $\hatSigma$, the fifth equation is by the definition of $\hatnu_h$, and the last equation is by the definition of $V_h$. Therefore,
    \[
    V_h=\frac1n\sum_{i=1}^n\wh_h^\ii\left(r_h^\ii+\phi(s_{h+1}^\ii,\pi)^\top\hattheta_{h+1}\right)=\frac1n\sum_{i=1}^n\wh_h^\ii r_h^\ii+V_{h+1}.
    \]
    This proves the recursion in Eq.~\eqref{eq:fqe-v-recursion}. Unrolling this recursion from $h=1$ to $H$, and using $V_{H+1}=0$, yields
    \[
    \emp{J}_\mathrm{FQE}(\pi)=V_1=\sum_{h=1}^H\frac1n\sum_{i=1}^n\wh_h^\ii r_h^\ii,
    \]
    while the RHS is exactly the \alg estimate $\emp{J}(\pi)$ in Algorithm~\ref{alg:main}. This proves the claim that in linear setting with invertible $\{\hatSigma_h\}_h$, our \alg algorithm is equivalent to linear FQE.
\end{proof}

\paragraph{Tabular setting} Now we consider the tabular function class $\Fcal_\tab=[0,V_{\max}]^{\Scal\times\Acal}$ in Definition~\ref{def:tabular}. As in Proposition~\ref{prop:linear}(2), we assume every state-action pair at each level appears in the dataset, i.e., for any $h\in[H]$ and $(s,a)\in\Scal_h\times\Acal$,
\[
\emp{d}_h(s,a)>0,\quad\text{where } \emp{d}_h(s,a):=\frac1n\sum_{i=1}^n\bone\{s_h^\ii=s,a_h^\ii=a\}.
\]
For tabular FQE. Define the empirical reward and empirical transition respectively by
\[
\emp{r}_h(s,a):=\frac{\sum_{i=1}^n\bone\{s_h^\ii=s,a_h^\ii=a\}r_h^\ii}{\sum_{i=1}^n\bone\{s_h^\ii=s,a_h^\ii=a\}},\quad \hatP_h(s'|s,a):=\frac{\sum_{i=1}^n\bone\{s_h^\ii=s,a_h^\ii=a,s_{h+1}^\ii=s'\}}{\sum_{i=1}^n\bone\{s_h^\ii=s,a_h^\ii=a\}}.
\]
Together, $\{\emp{r}_h,\hatP_h\}_{h=1}^H$ define the empirical MDP. Tabular FQE computes $\emp{Q}_{H+1}\equiv 0$, and for $h=H,H-1,\ldots,1$, solves the squared-loss regression problem
\[
\emp{Q}_h\in\argmin_{Q:\Scal_h\times\Acal\to[0,V_{\max}]}\frac1n\sum_{i=1}^n\left(Q(s_h^\ii,a_h^\ii)-r_h^\ii-\emp{Q}_{h+1}(s_{h+1}^\ii,\pi)\right)^2.
\]
Since the function class is , the regression separates over each state-action pair $(s,a)$. Therefore, for every $(s,a)$,
\[
\emp{Q}_h(s,a)=\emp{r}_h(s,a)+\sum_{s'}\hatP_h(s'\mid s,a)\emp{Q}_{h+1}(s',\pi).
\]
Thus, tabular FQE is exactly Bellman evaluation of $\pi$ in the empirical MDP. Hence its prediction is the model-based certainty-equivalence value
\begin{equation}\label{eq:fqe-tabular}
\emp{J}_\mathrm{FQE}(\pi)=\emp{J}_{\hatnu}(\pi),
\end{equation}
where $\hatnu$ denotes the empirical MDP $(\Scal,\Acal,\{\emp{r}_h\}_h,\{\hatP_h\}_h)$. The following proof shows that the \alg algorithm produces exactly same $J(\pi)$ estimate as tabular FQE (or model-based certainty-equivalence) in Eq.~\eqref{eq:fqe-tabular}.
\begin{proof}[Proof of Proposition~\ref{prop:linear}(2)]
    We first characterize the \alg weights in the tabular case. Let $\emp{d}_h^\pi$ be the occupancy distribution of $\pi$ in the empirical MDP $\hatnu$. Since $s_0,a_0$ is fixed and $\wh_0^\ii\equiv1$, the empirical occupancy is recursively defined by
    \[
    \emp{d}_h^\pi=\pi(a\mid s)\sum_{\tilde{s},\tilde{a}}\emp{d}_{h-1}^\pi(\tilde{s},\tilde{a})\hatP_{h-1}(s\mid \tilde{s},\tilde{a}).
    \]
    We prove by induction on $h$ that the least-2nd-moment weight $\wh_h^\nn$ that achieves empirical $0$ matching loss satisfy
    \[
    \wh_h^\ii=\frac{\emp{d}_h^\pi(s_h^\ii,a_h^\ii)}{\emp{d}_h^D(s_h^\ii,a_h^\ii)}.
    \]
    Fix $h$. Since the tabular class contains state-action indicators, empirical 0 matching loss at level $h$ is equivalent to matching every state-action pair:
    \[
    \frac1n\sum_{i=1}^n\wh_h^\ii\bone\{s_h^\ii=s,a_h^\ii=a\}=\frac1n\sum_{i=1}^n\wh_{h-1}^\ii\bone\{s_h^\ii=s\}\pi(a\mid s),
    \]
    for all $(s,a)\in\Scal_h\times\Acal$. By the induction hypothesis, the RHS becomes
    \begin{align*}
        &\frac1n\sum_{i=1}^n\wh_{h-1}^\ii\bone\{s_h^\ii=s\}\pi(a\mid s)\\
        &=\pi(a\mid s)\sum_{\tilde{s},\tilde{a}}\frac{\emp{d}_{h-1}^\pi(\tilde{s},\tilde{a})}{\emp{d}_{h-1}^D(\tilde{s},\tilde{a})}\cdot\frac1n\sum_{i=1}^n\bone\{s_{h-1}^\ii=\tilde{s},a_{h-1}^\ii=\tilde{a},s_h^\ii=s\}\\
        &=\pi(a\mid s)\sum_{\tilde{s},\tilde{a}}\emp{d}_{h-1}^\pi(\tilde{s},\tilde{a})\hatP_{h-1}(s\mid \tilde{s},\tilde{a})\\
        &=\emp{d}_h^\pi(s,a).
    \end{align*}
    On the other hand, the LHS can be written as $\frac1n\sum_{i:s_h^\ii=s,a_h^\ii=a}\wh_h^\ii$. Thus, empirical 0 matching loss requires $\sum_{i:s_h^\ii=s,a_h^\ii=a}\wh_h^\ii=n\emp{d}_h^\pi(s,a)$ for all $(s,a)\in\Scal_h\times\Acal$.
    
    Among all weights satisfying this cell-wise constraint, the least-2nd-moment solution is constant within each cell. This is because for a fixed cell $(s,a)$, minimizing $\sum_{i:s_h^\ii=s,a_h^\ii=a}(\wh_h^\ii)^2$ subject to the fixed sum constraint is achieved when all weights in that cell are equal. Hence, for every $i$ with $(s_h^\ii,a_h^\ii)=(s,a)$,
    \[
    \wh_h^\ii=\frac{n\emp{d}_h^\pi(s,a)}{\sum_{j=1}^n\bone\{s_h^{(j)}=s,a_h^{(j)}=a\}}=\frac{\emp{d}_h^\pi(s,a)}{\emp{d}_h^D(s,a)}.
    \]
    This proves the desired formula for the \alg weights $\wh_h^\nn$. It remains to compare the final estimates. Recall the \alg estimate in Algorithm~\ref{alg:main} is given by
    \[
    \emp{J}(\pi)=\sum_{h=1}^H\frac1n\sum_{i=1}^n\wh_h^\ii r_h^\ii.
    \]
    Using the weight formula above, we get
    \begin{align*}
        \emp{J}(\pi)&=\sum_{h=1}^H\sum_{s,a}\frac1n\sum_{i:s_h^\ii=s,a_h^\ii=a}\frac{\emp{d}_h^\pi(s,a)}{\emp{d}_h^D(s,a)}r_h^\ii\\
        &=\sum_{h=1}^H\sum_{s,a}\emp{d}_h^\pi(s,a)\frac{\sum_{i:s_h^\ii=s,a_h^\ii=a}r_h^\ii}{\sum_{i=1}^n\bone\{s_h^\ii=s,a_h^\ii=a\}}\\
        &=\sum_{h=1}^H\sum_{s,a}\emp{d}_h^\pi(s,a)\emp{r}_h(s,a),
    \end{align*}
    where the last expression is exactly the expected return of $\pi$ in the empirical MDP $\hatnu$, i.e.,
    \[
    \emp{J}_{\hatnu}(\pi)=\sum_{h=1}^H\sum_{s,a}\emp{d}_h^\pi(s,a)\emp{r}_h(s,a).
    \]
    Therefore, by Eq.~\eqref{eq:fqe-tabular}, $\emp{J}(\pi)=\emp{J}_\mathrm{FQE}(\pi)$. This tells that in the tabular setting with full empirical support, \alg and tabular FQE both coincide with the model-based certainty-equivalence solution, and the \alg weights are exactly the empirical occupancy ratios.    
\end{proof}

\subsection{Comparison to Prior Analyses in Linear and Tabular Settings}
\label{app:prior-linear}

In this section we compare our guarantees with those of \citet{duan2020minimax} in the linear setting. By Proposition~\ref{prop:linear}(1), in the linear setting with invertible empirical covariance matrices $\hatSigma_h$, \alg with exact minimization and the least-2nd-moment rule produces exactly the same estimator as linear FQE without ridge regularization. 

We now compare their data-independent bound to ours, and will comment on the data-dependent bounds below. For our result, we specialize Corollary~\ref{thm:random-design} to the linear setting:
\begin{equation}\label{eq:guarantee-ours}
|\emp{J}(\pi)-J(\pi)|\lesssim V_{\max}\sqrt{\frac{\log(H/\delta)}{n}}\sum_{h=0}^H\|\wstar_h\|_{2,d_h^D}+\text{higher-order terms}.
\end{equation}
\citet[Theorem 2]{duan2020minimax} also give a finite sample error bound; after translation between settings and notation,\footnote{We apply the bound of \citet{duan2020minimax} to our setup of disjoint state spaces, which admits (nominally) time-homogeneous dynamics and feature map that match their setup. In their bound, the coverage term is $\sup_{f\in\Fcal} |\EE_{(s,a)\sim d_h^\pi}[f_h(s,a)]| / \sqrt{\EE_{D}[\tfrac{1}{H}\sum_{h=1}^H f_h(s,a)^2]}$. When $\{\Scal_h\}_{h}$ are disjoint and feature/linear coefficients are essentially independent across levels, the $\sup_f$ is always achieved by setting $f_{h'} \equiv 0, \forall h' \ne h$. This way, their coverage becomes $\sqrt{H} \cdot \sup_{f_h\in\Fcal_h} |\EE_{(s,a)\sim d_h^\pi}[f_h(s,a)]|] / \sqrt{\EE_{D}[f_h(s,a)^2]} = \sqrt{H} \|\EE_{d_h^\pi}[\phi]\|_{\Sigma_h^{-1}}$ (Proposition~\ref{prop:coverage}). The extra $\sqrt{H}$ cancels out when we replace their $1/\sqrt{N}$ with $1/\sqrt{n}$, as their $N$ is the number of transitions and hence $N=nH$.} it is:
\begin{equation}\label{eq:guarantee-duan}
|\emp{J}(\pi)-J(\pi)|\lesssim\sum_{h=0}^H V_h \sqrt{\frac{\log(1/\delta)}{n}} \cdot \|\EE_{d_h^\pi}[\phi]\|_{\Sigma_h^{-1}} +\text{higher-order terms}
\end{equation}
where $V_h:=(H-h+1)R_{\max} = O(\Vmax)$ is a stage-dependent range. 

\begin{itemize}[leftmargin=*]
\item \emphb{Same coverage term.} As we recognize $\|\wstar_h\|_{2,d_h^D}$ as the coverage term in Eq.~\eqref{eq:guarantee-ours}, the corresponding term in Eq.~\eqref{eq:guarantee-duan} is $\|\EE_{d_h^\pi}[\phi]\|_{\Sigma_h^{-1}}$. These two terms are exactly identical under Bellman completeness in the linear setting 
(Proposition~\ref{prop:representer-linear}). 

\item \emphb{Same $n^{-1/2}$ rate.} Both bounds have the same leading $n^{-1/2}$ statistical rate, horizon dependence on $H$, and the range parameter up to minor differences (e.g., $V_{\max}$ vs.~$V_h$ and the $\log H$ factor). Both bounds are dimension-free in the leading $O(1/\sqrt{n})$ term.

\item \emphb{Burn-in condition and higher-order term.} The burn-in condition (i.e., requirement of sample size $n\gtrsim(\cdot)$) and the higher-order term of \citet{duan2020minimax} depend on a more stringent notion of coverage, such as $\operatorname{cond}(\Sigma_h^{-1/2}\Sigma_h^\pi\Sigma_h^{-1/2})$, where $\Sigma_h$ is the feature covariance under $d_h^D$ and $\Sigma_h^\pi$ is that under $d_h^\pi$. 
This ``squared'' version coverage requires data to cover all feature directions under $\pi$, instead of just the mean direction $\EE_{d_h^\pi}[\phi]$ 
\citep{jiang2025offline}. 
In contrast, the higher-order term in our bound in Eq.~\eqref{eq:guarantee-ours} relies on the tracking analysis (Theorem~\ref{thm:main}), where only the size of $\wstar_h$ shows up and no squared coverage is needed. 

\item \emphb{Data-dependent bounds.} Both our work and \citet{duan2020minimax} also feature data-dependent bounds that are computable from data, i.e., our Theorem~\ref{thm:main} and their Theorem 4. 
The bounds  look structurally similar to Eqs.\eqref{eq:guarantee-ours} and \eqref{eq:guarantee-duan}, except that the coverage terms become their empirical counterparts. 
Moreover, our Theorem~\ref{thm:main} is dimension-free while their bound incurs dependence on $\sqrt{\dm}$; this might be due to the fact that their Theorem 4 (data-dependent bound) assumes i.i.d.~transitions (their data-independent bound in Theorem 2 assumes i.i.d.~trajectories), whereas all our results are under i.i.d.~trajectories and the telescoping step in the proof of our Theorem~\ref{thm:main} relies on trajectory data to have perfect cancellation. In addition, their Theorem 4 also has additional higher-order terms, whereas our Theorem~\ref{thm:main} is very clean and only has the leading term. 

\item \emphb{Assumptions and setup.}  Technically speaking, \citet{duan2020minimax} considers the linear-MDP setup which is even stronger than Bellman completeness, but their proofs and results can be adapted to much weaker settings (e.g., only realizability), especially given the later insights of \citet{perdomo2023complete, amortila2026unifying}. In particular, \citet{amortila2026unifying} analyze LSTDQ, an algorithm closely related to linear FQE, in the infinite-horizon discounted setting under only realizability, and their data-independent bound matches that of \citet{duan2020minimax} in the stable regime ($\|B^\pi\|_{\textrm{op}} \le 1$) under the standard conversion between finite-horizon and discounted results. On top of that, key quantities such as $\psi_h$ also appear in \citet{duan2020minimax}'s analysis, so there is good reason to believe that \citet{duan2020minimax}'s bound applies to the setting where only realizability of $Q^\pi$ is assumed, where the $\EE_{d_h^\pi}[\phi]$ term in Eq.\eqref{eq:guarantee-duan} will be replaced by $\psi_h$.
\end{itemize}

We also note that similar dimension-free bounds have also been obtained in the tabular setting \citep[Theorem 3.1]{yin2020asymptotically}, and the leading term matches the form of Eqs.\eqref{eq:guarantee-ours} and \eqref{eq:guarantee-duan}. One improvement in \citet{yin2020asymptotically} is that they replace the range ($\Vmax$ or $V_h$) with (the square-root of) the variance of reward and value function $V^\pi$ w.r.t.~state transitions; indeed, they show that this is the form of the asymptotically minimax-optimal bound. This improvement is very easy to implement in our analyses, as we can simply replace the Hoeffding's inequality in Eq.\eqref{eq:proof-concentration} with a Bernstein's inequality, where the variance of $V^\pi$ w.r.t.~state transitions will naturally replace $\Vmax$ in the leading term.

\subsection{Computational Efficiency} \label{app:computation}

For linear $\Fcal$, the computational efficiency of \alg is clear given that it reduces to linear FQE, where the computation involved is just a sequence of linear regressions. For general $\Fcal$, computational guarantees are often given in the form of \textit{oracle efficiency} \citep{dann2018oracle}, and below we show that the computation is efficient given access to some optimization primitives over $\Fcal$. 

Recall the only computation-heavy step in \alg is the matching step, which is ($w_h^\nn$ in Eq.\eqref{eq:loss-match} is replaced with $w$ for notational simplicity) $
\argmin_{w \in \RR^n}\lossmatch_h(w; \wh_{h-1}^\nn) := \sup_{f\in\Fcal_h}  \left|\ell(w,f)\right|$, where
\[
\ell(w,f)=\frac{1}{n}\sum_{i=1}^n w^\ii f(s_{h}^\ii, a_{h}^\ii) - \frac{1}{n}\sum_{i=1}^n \wh_{h-1}^\ii f(s_{h}^\ii, \pi) 
\]
To remove the absolute value, we would like to assume $\Fcal_h$ is a symmetric function class in the following discussion. That is, $\Fcal_h$ is closed under negation: $-f \in \Fcal_h, \forall f\in\Fcal_h$.

Now, this version only minimizes the matching loss but does not control the second moment of $\wh_h^\nn$. To enable second-moment control, we can either add a regularization term $\lambda \secmom{w}^2$ to $\ell(w_h^\nn, f)$
for an appropriate $\lambda > 0$, or add constraint on $w$ that we can only choose from $\Wcal_C = \{w: \secmom{w} \le C \}$ for some budget $C$. Below we discuss the latter which admits cleaner results, i.e., we want to solve
\begin{equation}\label{eq:minimax}
    \argmin_{w\in\Wcal_C} \sup_{f\in\Fcal_h} \ell(w, f).
\end{equation}

To solve the above minimax problems, we apply the standard \emph{no-regret + best-response framework} \citep{freund1999adaptive} (see \citep{miryoosefi2019reinforcement} for an example), where a $w$ player and an $f$ player alternately play a sequence of $w_1, w_2, \ldots, w_T \in \Wcal_C$ and $f_1, f_2, \ldots, f_T \in \Fcal_h$, respectively, and we finally output $\wh_h^\nn = \frac{1}{T}\sum_{t=1}^T w_t$. 

\paragraph{No regret on $w$ and best response on $f$} 
In this case, after each $w_t \in \Wcal_C$ is chosen, $f_t$ is chosen as the best response to maximize $\ell(w_t, \cdot)$, which requires a standard linear-optimization oracle over $\Fcal_h$ \citep{dann2018oracle}: \footnote{Here are some examples: if $\Fcal_h$ is an RKHS ball, the oracle has a kernel witness-function form; if $\Fcal_h$ is a neural network class, it corresponds to training a discriminator/value network to maximize the following weighted empirical objective; if $\Fcal_h$ is finite, it is just choosing the function in the finite class with largest signed score.}
\[
f_t\in\mathrm{BR}(w_t):=\argmax_{f\in\Fcal_h}\left\{\frac1n\sum_{i=1}^n w_t^\ii f(s_h^\ii,a_h^\ii)-\frac1n\sum_{i=1}^n \wh_{h-1}^\ii f(s_h^\ii,\pi)\right\}.
\]
From the point of view of the $w$ player, it receives a loss $\ell_t^w(\cdot) = \ell(\cdot, f_t)$, which is a convex function and we have access to the gradient information, so standard online learning algorithm such as projected OGD \citep{hazan2016introduction} can be applied to obtain a regret guarantee:
\[
w_{t+1}=\Pi_{\Wcal_C}[w_t-\eta\nabla_w \ell(w_t,f_t)],\quad \nabla_w\ell(w_t,f_t)=\frac1n\Big(f_t(s_h^{(1)},a_h^{(1)}),\ldots,f_t(s_h^{(n)},a_h^{(n)})\Bigr).
\]
\begin{proposition}[Optimization error from no-regret on $w$ and best response on $f$] Fix $h$ and $\wh_{h-1}^\nn$. Let $G_h:=\sup_{f\in\Fcal_h}\bigl(\frac1n\sum_{i=1}^n f(s_h^\ii,a_h^\ii)^2\bigr)^{1/2}$. Assume that each $f_t$ is an exact best response to $w_t$, and run projected OGD over $\Wcal_C$ with a standard step-size. Then, the averaged output $\wh_h^\nn=\frac1T\sum_{t=1}^T w_t$ satisfies
\[
\lossmatch_h(\wh_h^\nn;\wh_{h-1}^\nn)\le\min_{w\in\Wcal_C}\lossmatch_h(w;\wh_{h-1}^\nn)+O\left(\frac{CG}{\sqrt{T}}\right).
\]
In particular, if $|f(s,a)|\le V_{\max}$ for all $f\in\Fcal_h$ and all data points $(s_h^\ii,a_h^\ii)$, then the optimization error is $O(CV_{\max}/\sqrt{T})$.
\end{proposition}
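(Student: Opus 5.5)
The argument is the standard online-to-batch reduction for a convex--concave game. With $\Fcal_h$ symmetric, $\lossmatch_h(w;\wh_{h-1}^\nn)=\sup_{f\in\Fcal_h}\ell(w,f)$. For every fixed $f$, the map $w\mapsto\ell(w,f)$ is affine in $w$. Hence $w\mapsto\sup_f\ell(w,f)$ is convex, being a supremum of affine functions. We never need convexity of $\Fcal_h$: because $\ell$ is affine in $w$, we can average over $w$ rather than over $f$.

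\emph{Step 1 (averaging).} Since $\wh_h^\nn=\frac1T\sum_t w_t$ and $\ell(\cdot,f)$ is affine, for every $f\in\Fcal_h$ we have $\ell(\wh_h^\nn,f)=\frac1T\sum_t\ell(w_t,f)$. Taking the supremum over $f$ and using that $f_t$ is a best response to $w_t$,
\[
\lossmatch_h(\wh_h^\nn;\wh_{h-1}^\nn)=\sup_f\frac1T\sum_t\ell(w_t,f)\le\frac1T\sum_t\sup_f\ell(w_t,f)=\frac1T\sum_t\ell(w_t,f_t).
\]

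\emph{Step 2 (regret).} Projected OGD on the losses $\ell_t(\cdot)=\ell(\cdot,f_t)$ over the convex set $\Wcal_C$ guarantees, for every $w\in\Wcal_C$,
\[
\frac1T\sum_t\ell(w_t,f_t)\le\frac1T\sum_t\ell(w,f_t)+\frac{D\,L}{\sqrt T}.
\]
Here $D$ is the Euclidean diameter of $\Wcal_C$ and $L$ bounds $\|\nabla_w\ell\|_2$. For each $t$, $\frac1T\sum_t\ell(w,f_t)\le\sup_f\ell(w,f)=\lossmatch_h(w;\wh_{h-1}^\nn)$. Taking $w$ to be the minimizer over $\Wcal_C$ (or an approximate one, if the infimum is not attained) gives the claimed comparison.

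\emph{Step 3 (constants; main bookkeeping step).} This step requires care with the $1/n$ normalizations, because $\secmom{\cdot}$ is the normalized norm. First, $\Wcal_C=\{w:\|w\|_2\le\sqrt n\,C\}$, so $D=2\sqrt n\,C$. Second, $\|\nabla_w\ell(w,f)\|_2=\frac1n\bigl(\sum_i f(s_h^\ii,a_h^\ii)^2\bigr)^{1/2}=G_h/\sqrt n$. The factors of $\sqrt n$ cancel, so $DL=2CG_h$. The standard step size $\eta=D/(L\sqrt T)$ then yields the $O(CG_h/\sqrt T)$ term. Under $|f|\le\Vmax$ on the data we have $G_h\le\Vmax$, which gives $O(C\Vmax/\sqrt T)$.

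Two details need checking. The Euclidean projection onto a ball is explicit: rescale if the norm exceeds $\sqrt n\,C$. Exactness of the best response is used only in Step 1. If the oracle were $\epsilon$-approximate, the bound would simply pick up an additive $\epsilon$.
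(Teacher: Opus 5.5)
Your proposal is correct and follows the paper's proof sketch. The same ingredients appear in both: projected OGD on $\ell(\cdot,f_t)$, where the ball radius $C\sqrt n$ and gradient bound $G_h/\sqrt n$ make the $\sqrt n$ factors cancel; the best-response identity $\ell(w_t,f_t)=\lossmatch_h(w_t;\wh_{h-1}^\nn)$; and convexity in $w$ to pass to the averaged iterate. Your Step 1 is exactly the paper's Jensen step, written out via affineness of $\ell(\cdot,f)$.

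One small nit: in Step 3 the gradient identity should be an inequality, $\|\nabla_w\ell(w,f)\|_2\le G_h/\sqrt n$, since $G_h$ is a supremum over $\Fcal_h$.
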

\begin{proof}[Proof sketch]
    For the OGD losses $\ell_t^w(w):=\ell(w,f_t)$, we have $\|\nabla_w\ell_t^w\|_2\le G_h/\sqrt{n}$ and $\Wcal_C=\{w\in\RR^n:\|w\|_2\le C\sqrt{n}\}$. Hence the OGD regret is $O(CG_h\sqrt{T})$. Since $f_t$ is the best response, $\ell(w_t,f_t)=\sup_{f\in\Fcal_h}\ell(w_t,f)=\lossmatch(w_t;\wh_{h-1}^\nn)$. Therefore,
    \[
    \frac1T\sum_{t=1}^T\lossmatch_h(w_t;\wh_{h-1}^\nn)\le\min_{w\in\Wcal_C}\lossmatch_h(w;\wh_{h-1}^\nn)+O\left(\frac{CG}{\sqrt{T}}\right).
    \]
    The claim follows by Jensen's inequality, since $w\mapsto\lossmatch_h(w;\wh_{h-1}^\nn)$ is convex.
\end{proof}



\paragraph{No regret on $f$ and best response on $w$} We can also swap the roles of $f$ and $w$. 
For each fixed $f_t$, we solve $w_t\in\arg\min_{w\in\Wcal_C}\ell(w,f_t)$. Since $\Wcal_C$ is a Euclidean ball and $\ell(\cdot,f_t)$ is linear, this best response has a closed form:
\[
w_t=-C\frac{f_t\evn}{\|f_t\evn\|_\nn},\quad f_t\evn=\Bigl(f(s_h^{(1)},a_h^{(1)}),\ldots,f(s_h^{(n)},a_h^{(n)})\Bigr)\in\RR^n,
\]
for $f_t\evn\ne0$; and if $f_t\evn=0$, then any $w_t\in\Wcal_C$ is a best response. Then, from the point of view of the $f$ player, it receives a loss $\ell_t^f(\cdot) = \ell(w_t, \cdot)$, which is linear in $f$. Similar to before, if $f\in \Fcal_h$ is parametrized in a differentiable manner, we would need an online learning oracle over $\Fcal_h$ for the reward sequences $f\mapsto\ell_t^ff()$. In particular, if the oracle guarantees reward regret $\operatorname{Reg}_T^f$, i.e., $\max_{f\in\Fcal_h}\sum_t\ell(w_t,f)-\sum_t\ell(w_t,f_t)\le\operatorname{Reg}_T^f$, then the same averaging argument gives
\[
\lossmatch_h(\wh_h^\nn;\wh_{h-1}^\nn)\le\min_{w\in\Wcal_C}\lossmatch_h(w;\wh_{h-1}^\nn)+\frac{\operatorname{Reg}_T^f}{T}.
\]
For instance, in the linear case $f_\theta(s,a)=\phi(s,a)^\top\theta$ with $\|\theta\|_2\le\Theta$, this oracle can be implemented by OGD/ascent on $\theta$ for the linear rewards $\theta\mapsto \theta^\top m(w_t)$, where $m(w_t)=\frac1n\sum_i w_t^\ii \phi(s_h^\ii,a_h^\ii)-\frac1n\sum_i\wh_{h-1}^\ii\phi(s_h^\ii,\pi)$. If $\|m(w_t)\|_2\le M$ for all $t$, then $\operatorname{Reg}_T^f=O(\Theta M\sqrt{T})$, which immediately implies an $O(\Theta M/\sqrt{T})$ optimization error. For general function classes, the same statement might be formulated non-parametrically by assuming such an online learning oracle over $\Fcal_h$.
\section{Proofs and Additional Results of Section~\ref{sec:weight}} \label{app:weight}

\subsection{Auxiliary functionals and representers}
In this section, we introduce several auxiliary concepts that will be used in the subsequent proofs and in our understanding of the learned weights. The main idea is to view the population and empirical moment-matching problems through the lens of Hilbert-space linear functionals. This viewpoint will be useful for two reasons. First, it explains why the population exact-matching problem is feasible under the bounded generalized leverage assumption (Assumption~\ref{asm:leverage}). Second, it allows us to compare the empirical weights $\{\wh_h^\nn\}_h$ with their population counterparts $\{\wstar_h\}_h$ by representing both of them inside the same Hilbert space.

Recall from Definition~\ref{def:hilbert} that the Hilbert space $\Hilb_h$ is the closure of $\mathrm{span}(\Fcal_h)$ under the $d_h^D$-weighted $L_2$ norm, equipped with the inner product $\langle f,g\rangle_{d_h^D}=\EE_{d_h^D}[fg]$. For any $f\in\Hilb_h$, let $P^\pi f$ denote the one-step transition of $f$ under the target policy, i.e., $(P^\pi f)(s,a)=\EE_{s'\sim P(\cdot|s,a)}[f(s',\pi)]$. Given the population weight, define the propagated signed measure
\[
\nu_h^\star:= P^\pi(\wstar_{h-1}\cdot d_{h-1}^D).
\]
Equivalently, $\nu_h^\star$ is the signed measure satisfying, for every $f\in\Hilb_h$,
\[
\EE_{(s_h,a_h)\sim \nu_h^\star}[f(s_h,a_h)]=\EE_{(s_{h-1},a_{h-1})\sim d_{h-1}^D}[\wstar_{h-1}(s_{h-1},a_{h-1})\cdot (P^\pi f)(s_{h-1},a_{h-1})].
\]
We call $\nu_h^\star$ a signed measure because $\wstar_{h-1}$ is not required to be nonnegative. Therefore $\wstar_{h-1}d_{h-1}^D$, and hence its pushforward under $P^\pi$, need not be a probability distribution.

\begin{definition}[Population and empirical target functionals]\label{def:functional}
    For each $h\in[H]$, define the population target functional $\Lambda_h^\star:\Hilb_h\to\RR$ by
    \[
    \Lambda_h^\star(f):=\EE_{\nu_h^\star}[f]=\EE_{d_{h-1}^D}[w_{h-1}^\star\cdot (P^\pi f)].
    \]
    Given the empirical weights $\wh_{h-1}^\nn$, define the empirical target functional $\hatLambda:\Hilb_h\to\RR$ by
    \[
    \hatLambda_h(f):=\frac1n\sum_{i=1}^n\wh_{h-1}^\ii f(s_h^\ii,\pi).
    \]
\end{definition}
Thus, $\Lambda_h^\star$ is the population moment that the level-$h$ weight should match, while $\hatLambda_h$ is its empirical analogue induced by the already-learned weights at level $h-1$. The next proposition shows that these linear functionals admit Riesz representers in $\Hilb_h$. It also shows that, whenever empirical 0 matching loss is feasible (which is shown in Proposition~\ref{prop:empirical-exact} as a high-probability corollary under population 0 matching loss), the least-2nd-moment empirical weight vector can be viewed as the sample evaluation of some function in $\Hilb_h$.

\begin{proposition}[Existence of representers]\label{prop:representer}
Under Assumption~\ref{asm:leverage}, for each $h\in[H]$, we have:
\begin{enumerate}
    \item [(i)] The population target functional $\Lambda_h^\star$ is a continuous linear functional on $\Hilb_h$. Hence, the $\wstar_h$ defined in Definition~\ref{def:pop} satisfies $\wstar_h\in\Hilb_h$, and $\langle \wstar_h,f\rangle_{d_h^D}=\Lambda_h^\star(f)$ for all $f\in\Hilb_h$. Moreover, $\wstar_h$ is the unique least-2nd-moment solution among all $L_2(d_h^D)$ functions $w$ satisfying $\langle w,f\rangle_{d_h^D}=\Lambda_h^\star(f)$ for all $f\in\Hilb_h$.

    \item [(ii)] The empirical functional $\hatLambda_h$ is also a continuous linear functional on $\Hilb_h$. Hence there exists a unique auxiliary representer $\wt_h\in\Hilb_h$ such that $\langle\wt_h,f\rangle_{d_h^D}=\hatLambda_h(f)$ for all $f\in\Hilb_h$.

    \item [(iii)] Suppose empirical 0 matching loss is feasible at level $h$, and let $\wh_h^\nn$ be the least-2nd-moment weight vector. Then, there exists a lift representer $\wb_h\in\Hilb_h$ such that $\wb_h\evn=\wh_h^\nn$. Consequently, $\frac1n\sum_{i=1}^n\wb(s_h^\ii,a_h^\ii)f(s_h^\ii,a_h^\ii)=\hatLambda_h(f)$ for all $f\in\Hilb_h$.
\end{enumerate}
\end{proposition}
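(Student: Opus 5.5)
The plan is to prove the three items in order, using Assumption~\ref{asm:leverage} to get boundedness of point evaluations on $\Hilb_h$ and the Riesz representation theorem. A preliminary fact is used throughout. By Assumption~\ref{asm:leverage}, $\|\cdot\|_{2,d_h^D}$ is a genuine norm on $\Hilb_h$: it is identifiable, so no quotient is needed. Moreover every $f\in\Hilb_h$ satisfies $\|f\|_\infty\le\sqrt{\kappa_h}\|f\|_{2,d_h^D}$. Hence $\Hilb_h$ is a Hilbert space on which every point evaluation $f\mapsto f(s,a)$ is continuous with norm at most $\sqrt{\kappa_h}$, and so is $f\mapsto f(s,\pi)=\sum_a\pi(a|s)f(s,a)$.

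\textbf{Item (i).} Linearity of $\Lambda_h^\star$ is immediate. For continuity, $|(P^\pi f)(s,a)|\le\|f\|_\infty\le\sqrt{\kappa_h}\|f\|_{2,d_h^D}$. Combined with Cauchy--Schwarz this gives
$|\Lambda_h^\star(f)|\le\|\wstar_{h-1}\|_{2,d_{h-1}^D}\sqrt{\kappa_h}\|f\|_{2,d_h^D}$.
This requires $\wstar_{h-1}\in L_2(d_{h-1}^D)$, which I would carry as an induction on $h$. The base case is $\wstar_0\equiv1$. At each step the Riesz representer $r_h\in\Hilb_h$ is in $L_2$. The Riesz theorem gives a unique $r_h\in\Hilb_h$ with $\langle r_h,f\rangle=\Lambda_h^\star(f)$ for all $f\in\Hilb_h$.

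To identify $r_h$ with $\wstar_h$, take any $w\in L_2(d_h^D)$ satisfying the constraint. Decompose $w=\Pi_{\Hilb_h}w+(w-\Pi_{\Hilb_h}w)$. The projection also satisfies the constraint and hence equals $r_h$, so $\|w\|^2=\|r_h\|^2+\|w-r_h\|^2$. Therefore $r_h$ is the unique least-norm feasible point. Since the feasible set is nonempty, the minimum population loss is $0$, consistent with Proposition~\ref{prop:population-exact}. Zero loss over $\Fcal_h$ is equivalent to the constraint on all of $\Hilb_h$, by linearity and by $L_2$-continuity of both sides, so the minimizers in Definition~\ref{def:pop} are exactly the feasible $w$. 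It follows that $\wstar_h=r_h$.

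\textbf{Item (ii).} The functional $\hatLambda_h$ is a finite combination of point evaluations at $(s_h^\ii,\pi)$ with weights $\wh_{h-1}^\ii/n$. It is therefore bounded by $\sqrt{\kappa_h}\,\frac1n\sum_i|\wh_{h-1}^\ii|\,\|f\|_{2,d_h^D}$, and Riesz gives $\wt_h$.

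\textbf{Item (iii).} This is the step needing care. Consider the evaluation map $E:\Hilb_h\to\RR^n$, $E f=f\evn$, which is bounded. Its image $V=E(\Hilb_h)$ is a subspace of $\RR^n$, hence finite-dimensional and closed. Since $E$ is linear, $V$ is also the closure in $\RR^n$ of $E(\mathrm{span}\,\Fcal_h)$, which is itself a linear subspace and so equals $E(\mathrm{span}\,\Fcal_h)$. Zero empirical loss means $\frac1n\langle w,f\evn\rangle=\hatLambda_h(f)$ for all $f\in\Fcal_h$, and by linearity for all $f\in\mathrm{span}(\Fcal_h)$. This constraint only involves the component of $w$ in $V$, via the Euclidean inner product. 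For any feasible $w$, projecting onto $V$ keeps feasibility and does not increase $\|w\|_\nn$. Uniqueness of least-norm points on an affine set then forces the least-2nd-moment $\wh_h^\nn$ to lie in $V$. Hence $\wh_h^\nn=E\wb_h$ for some $\wb_h\in\Hilb_h$.

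For the consequence, the identity holds on $\mathrm{span}(\Fcal_h)$. Both sides, $f\mapsto\frac1n\sum_i\wb_h(s_h^\ii,a_h^\ii)f(s_h^\ii,a_h^\ii)$ and $\hatLambda_h$, are $L_2(d_h^D)$-continuous on $\Hilb_h$ by the sup-norm bound. The identity therefore extends to the closure $\Hilb_h$. The main subtlety I expect is exactly this density/continuity transfer between $\mathrm{span}(\Fcal_h)$ and $\Hilb_h$, together with making sure evaluations are well defined on closure elements. Both are handled by the leverage bound, which makes $L_2$-convergence imply uniform convergence.
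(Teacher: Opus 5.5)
Your proposal is correct and follows essentially the same route as the paper. Both arguments bound $\Lambda_h^\star$ and $\hatLambda_h$ via the leverage bound and apply Riesz for (i)--(ii), use a Pythagorean least-norm argument to pin down $\wstar_h$, and project $\wh_h^\nn$ onto the sample-evaluation subspace $\Hilb_h\evn$ for (iii). Your additional care (carrying $\wstar_{h-1}\in L_2(d_{h-1}^D)$ inductively, and extending the zero-loss identities from $\Fcal_h$ to all of $\Hilb_h$ by linearity and continuity) fills in steps the paper leaves implicit.
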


\begin{proof}
    For (i), linearity of $\Lambda_h^\star$ follows directly from its definition (Definition~\ref{def:functional}), so we need to show its continuity. By Assumption~\ref{asm:leverage}, every $f\in\Hilb_h$ satisfies $|f(s,a)|\le\sqrt{\kappa_h}\|f\|_{2,d_h^D}$. Therefore, $|f(s,\pi)|\le\sqrt{\kappa_h}\|f\|_{2,d_h^D}$ as well. It follows that
    \[
    |\Lambda_h^\star(f)|\le\EE_{d_{h-1}^D}[|\wstar_{h-1}|\cdot |P^\pi f|]\le\sqrt{\kappa_h}\cdot \EE_{d_{h-1}^D}[|\wstar_{h-1}|]\|f\|_{2,d_h}^D.
    \]
    Using Cauchy-Schwarz, $\EE_{d_{h-1}^D}[|\wstar_{h-1}|]\le\|\wstar_{h-1}\|_{2,d_{h-1}^D}$. Hence $\Lambda_h^\star$ is bounded, and therefore continuous, on $\Hilb_h$. Therefore, by the Riesz representation theorem, there exists a unique $\wstar\in\Hilb_h$ such that $\langle\wstar,f\rangle_{d_h^D}=\Lambda_h^\star(f)$ for all $f\in\Hilb_h$. This already proves that the population 0 matching loss is feasible (Proposition~\ref{prop:population-exact}): the feasible set is nonempty because $\wstar$ itself satisfies all the required moment equations. Since $\wstar_h$ defined in Definition~\ref{def:pop} minimizes the matching loss, we have $\wstar_h=\wstar$.

    It remains to justify the minimum-norm claim. Let $w\in L_2(d_h^D)$ be any other function satisfying the same moment equations, namely $\langle w,f\rangle_{d_h^D}=\Lambda_h^\star(f)$ for all $f\in\Hilb_h$. Then $w-\wstar_h$ is orthogonal to every element of $\Hilb_h$. Since $\wstar_h\in\Hilb_h$, we have $\langle w-\wstar_h,\wstar_h\rangle_{d_h^D}=0$. Therefore,
    \[
    \|w\|_{2,d_h^D}^2=\|\wstar_h\|_{2,d_h^D}^2+\|w-\wstar_h\|_{2,d_h^D}^2\ge \|\wstar_h\|_{2,d_h^D}^2.
    \]
    Thus $\wstar_h$ is the unique minimum-norm solution, up to $d_h^D$-almost-sure equivalence.

    For (ii), the argument is similar. The empirical functional $\hatLambda_h$ is linear in $f$. Moreover, under Assumption~\ref{asm:leverage}, we have
    \[
    |\hatLambda_h(f)|\le\frac1n\sum_{i=1}^n|\wh_{h-1}^\ii|\cdot |f(s_{h}^\ii,\pi)|\le\sqrt{\kappa_h}\biggl(\frac1n\sum_{i=1}^n|\wh_{h-1}^\ii|\biggr)\|f\|_{2,d_h^D}.
    \]
    Therefore $\hatLambda_h$ is continuous on $\Hilb_h$. Applying the Riesz representation theorem again yields a unique $\wt_h\in\Hilb_h$ such that $\langle\wt_h,f\rangle_{d_h^D}=\hatLambda_h(f)$ for all $f\in\Hilb_h$.

    For (iii), let $\Hilb_h\evn:=\{f\evn:f\in\Hilb_h\}\subseteq\RR^n$ be the sample-evaluation subspace of $\Hilb_h$. The empirical matching constraints only depend on the projection of the weight vector onto $\Hilb_h\evn$, because every constraint is tested against some vector $f\evn\in\Hilb_h\evn$. Decompose the least-2nd-moment solution as $\wh_h^\nn=v+v_\bot$, where $v\in\Hilb_h\evn$ and $v_\bot\in(\Hilb_h\evn)^\bot$. Since $v_\bot$ is orthogonal to every $f\evn\in\Hilb_h\evn$, replacing $\wh_h^\nn$ by $v$ preserves all empirical matching constraints. If $v_\bot\ne0$, this replacement strictly decreases $\|\wh_h^\nn\|_\nn$, contradicting the least-2nd-moment choice of $\wh_h^\nn$. Therefore, $v_\bot=0$ and $\wh_h^\nn\in\Hilb_h\evn$.

    By the definition of $\Hilb_h\evn$, there exists $\wb_h\in\Hilb_h$ such that $\wb_h\evn=\wh_h^\nn$. Since by our assumption, $\wh_h^\nn$ achieves empirical 0 matching loss, we obtain $\frac1n\sum_{i=1}^n\wb(s_h^\ii,a_h^\ii)f(s_h^\ii,a_h^\ii)=\hatLambda_h(f)$ for all $f\in\Hilb_h$.
\end{proof}

The above three representers play different roles. The population weight $\wstar_h$ is the object we ultimately want to track. The auxiliary representer $\wt_h$ converts the empirical target functional into the population geometry of $\Hilb_h$. The lift representer $\wb_h$ converts the empirical sample weights back into a function so that empirical pairings can be compared with population pairings. In the linear setting (Definition~\ref{def:linear}) with both population and empirical 0 matching loss, these objects reduce to the familiar forms that correspond to the feature dynamical system in Eq.~\eqref{eq:lds}; see the following proposition.

\begin{proposition}[Linear forms of the three representers]\label{prop:representer-linear}
    Suppose $\Hilb_h$ is linear in a feature map $\phi:\Scal_h\times\Acal\to\RR^\dm$. Write $\phi_h:=\phi(s_h,a_h)$, $\phi_h^\pi:=\phi(s_h,\pi)$. Assume that the population covariance matrix $\Sigma_h:=\EE_{d_h^D}[\phi_h(\phi_h)^\top]$ is invertible. For the empirical lift, additionally assume that $\hatSigma_h:=\frac1n\sum_{i=1}^n\phi_h^\ii(\phi_h^\ii)^\top$ is invertible, where $\phi_h^\ii:=\phi(s_h^\ii,a_h^\ii)$.
    
    Consider the feature dynamical system $\{\psi_h\}_h$ defined in Eq.~\eqref{eq:lds}, i.e., $\psi_1:=\EE_D[\phi_1^\pi]$, and for $h\ge 1$,
    \[
    \psi_{h+1}:=B_h^\pi\psi_h,\quad B_h^\pi:=(\Sigmacr_h)^\top\Sigma_h^{-1},\quad\Sigmacr_h:=\EE_D[\phi_h(\phi_{h+1}^\pi)^\top].
    \]
    Also define the empirical target vector $\widehat{\psi}_h:=\frac1n\sum_{i=1}^n\wh_{h-1}^\ii\phi(s_h^\ii,\pi)$. Then the three representers in Proposition~\ref{prop:representer} take the following forms:
    \[
    \wstar_h(s,a)=\phi(s,a)^\top\Sigma_h^{-1}\psi_h,\quad\wt_h(s,a)=\phi(s,a)^\top\Sigma_h^{-1}\widehat{\psi}_h,\quad \wb_h(s,a)=\phi(s,a)^\top\hatSigma_h^{-1}\widehat{\psi}_h.
    \]
\end{proposition}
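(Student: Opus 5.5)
In the linear setting, $\Hilb_h=\{\phi^\top\theta:\theta\in\RR^\dm\}$; invertibility of $\Sigma_h$ makes $\|\cdot\|_{2,d_h^D}$ a genuine norm on it, so $\theta$ is uniquely determined by $f$. The plan is to verify, for each candidate, the defining property from Proposition~\ref{prop:representer}. Each candidate already lies in $\Hilb_h$, so it suffices to check the moment equations against every $f=\phi^\top\theta$. For $\wt_h$ the check is immediate:
\[
\langle \phi^\top\Sigma_h^{-1}\emp\psi_h,\phi^\top\theta\rangle_{d_h^D}=\emp\psi_h^\top\Sigma_h^{-1}\Sigma_h\theta=\emp\psi_h^\top\theta=\hatLambda_h(f).
\]
Uniqueness of the Riesz representer then gives the formula. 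For $\wb_h$ I would argue exactly as in Propositions~\ref{prop:H=1} and~\ref{prop:linear}(i). Since $\hatSigma_h$ is invertible, the vector $\phi(s_h^\ii,a_h^\ii)^\top\hatSigma_h^{-1}\emp\psi_h$ satisfies $\frac1n\sum_i w^\ii\phi_h^\ii=\emp\psi_h$. Being a sample evaluation of a linear function, this vector lies in $\Hilb_h\evn$. Any other feasible vector differs from it by an element of $(\Hilb_h\evn)^\perp$, so the Pythagorean argument of Proposition~\ref{prop:representer}(iii) shows it is the least-2nd-moment solution. Hence $\wb_h=\phi^\top\hatSigma_h^{-1}\emp\psi_h$ is a valid lift.

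For $\wstar_h$, I would proceed by induction on $h$, proving the claim $\wstar_h=\phi^\top\Sigma_h^{-1}\psi_h$.

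\emph{Base case $h=1$.} Since $\wstar_0\equiv1$, we get $\Lambda_1^\star(f)=\EE_D[f(s_1,\pi)]=\psi_1^\top\theta$. The same computation as for $\wt_h$ then yields $\wstar_1=\phi^\top\Sigma_1^{-1}\psi_1$.

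\emph{Inductive step.} Assume $\wstar_{h}=\phi^\top\Sigma_h^{-1}\psi_h$. Then
\[
\Lambda_{h+1}^\star(\phi^\top\theta)=\EE_D\bigl[\psi_h^\top\Sigma_h^{-1}\phi_h\,(\phi_{h+1}^\pi)^\top\theta\bigr]=\psi_h^\top\Sigma_h^{-1}\Sigmacr_h\theta=(B_h^\pi\psi_h)^\top\theta=\psi_{h+1}^\top\theta.
\]
Here I use $(P^\pi f)(s_h,a_h)=\EE[f(s_{h+1},\pi)\mid s_h,a_h]$ together with the tower property under $d^D$. The representer of $\theta\mapsto\psi_{h+1}^\top\theta$ is $\phi^\top\Sigma_{h+1}^{-1}\psi_{h+1}$, which closes the induction. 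Proposition~\ref{prop:representer}(i) (minimum-norm uniqueness) identifies this representer with $\wstar_h$ as defined in Definition~\ref{def:pop}.

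The main obstacle is the inductive step. It requires using the data trajectory distribution to replace $P^\pi f$ by the realized next-state feature $\phi_{h+1}^\pi$ inside the $d_h^D$ expectation. This is legitimate because $\EE_D$ is the trajectory distribution, so conditioning on $(s_h,a_h)$ gives exactly $P^\pi$. The other subtle point is the assumption $\Sigma_{h}$ invertible at every level, which the induction needs at all $h$.
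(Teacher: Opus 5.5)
Your proposal is correct and follows essentially the same route as the paper. Both argue by induction on $h$ for $\wstar_h$ via $\Lambda_{h+1}^\star(\phi^\top\theta)=\psi_h^\top\Sigma_h^{-1}\Sigmacr_h\theta=\psi_{h+1}^\top\theta$, and by direct moment-equation computations for $\wt_h$ and $\wb_h$. The only difference is cosmetic: you verify the candidate formulas and appeal to uniqueness, whereas the paper writes each representer as $\phi^\top\alpha$ and solves $\Sigma_h\alpha=\cdot$ or $\hatSigma_h\alpha=\cdot$; invertibility of $\hatSigma_h$ also makes the sample lift injective, so your ``a valid lift'' is in fact ``the'' lift.
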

\begin{proof}
    We first identify the population representer $\wstar_h$ via induction on $h$. More precisely, we prove jointly that, for every $h$, the population target functional satisfies
    \[
    \Lambda_h^\star(f)=\psi_h^\top\theta,\quad\forall f\in\Hilb_h\text{ such that }f(s,a)=\phi(s,a)^\top\theta,
    \]
    and that its Riesz representer is $\wstar_h(s,a)=\phi(s,a)^\top\Sigma_h^{-1}\psi_h$.

    For $h=1$, since $\wstar_0\equiv1$, for every $f(s,a)=\phi(s,a)^\top\theta$,
    \[
    \Lambda_1^\star(f)=\EE_D[f(s_1,\pi)]=\EE_D[\phi_1^\pi]^\top\theta=\psi_1^\top\theta.
    \]
    The Riesz representer $\wstar_1$ therefore satisfies, for all $\theta$, $\EE_D[\wstar_1(s_1,a_1)\cdot\phi_1^\top\theta]=\psi_1^\top\theta$. Since $\wstar_1\in\Hilb_1$ by Proposition~\ref{prop:representer}(i), we can write $\wstar_1(s,a)=\phi(s,a)^\top\alpha_1$. This becomes $\alpha_1^\top\Sigma_1\theta=\psi_1^\top\theta$ for all $\theta$. Hence $\alpha_1=\Sigma_1^{-1}\psi_1$ for invertible $\Sigma_1$, and therefore $\wstar_1(s,a)=\phi(s,a)^\top\Sigma_1^{-1}\psi_1$.

    Now assume the two claims hold at level $h$. By the definition of the population target functional,
    \begin{align*}
        \Lambda_{h+1}^\star(f)&=\EE_D[\wstar_h(s_h,a_h)f(s_{h+1},\pi)]\\
        &=\EE_D[\phi_h^\top\Sigma_h^{-1}\psi_h\cdot (\phi_{h+1}^\pi)^\top\theta]\\
        &=\psi_h^\top\Sigma_h^{-1}\Sigmacr_h\theta\\
        &=\psi_{h+1}^\top\theta,
    \end{align*}
    where the first equality is due to the inductive hypothesis on $\wstar_h$, the second equality leverages the definition of $\Sigmacr_h$ and the third equality is due to $\psi_{h+1}=(\Sigmacr_h)^\top\Sigma_h^{-1}\psi_h$.

    Similarly, since $\Lambda_{h+1}^\star(f)=\psi_{h+1}^\top\theta$ for all $f(s,a)=\phi(s,a)^\top\theta$, the representer $\wstar_{h+1}$ satisfies
    \[
    \EE_D[\wstar_{h+1}(s_{h+1},a_{h+1})\cdot (\phi_{h+1}^\top\theta)]=\psi_{h+1}^\top\theta,\quad\forall \theta.
    \]
    Since $\wstar_{h+1}\in\Hilb_{h+1}$, we can write $\wstar_{h+1}(s,a)=\phi(s,a)^\top\alpha_{h+1}$. The above matching equation becomes $\alpha_{h+1}^\top\Sigma_{h+1}\theta=\psi_{h+1}^\top\theta$ for all $\theta$ and thus $\alpha_{h+1}=\Sigma_{h+1}^{-1}\psi_{h+1}$ for invertible $\Sigma_{h+1}$, indicating $\wstar_{h+1}(s,a)=\phi(s,a)^\top\Sigma_{h+1}^{-1}\psi_{h+1}$. This closes the induction.

    The proof for $\wt_h$ does not require induction. For any $f(s,a)=\phi(s,a)^\top\theta$, $\hatLambda_h(f)=\widehat{\psi}_h^\top\theta$ by the definition of $\widehat{\psi}_h$. Since $\wt_h\in\Hilb_h$ by Proposition~\ref{prop:representer}(ii), its linear form $\wt_h(s,a)=\phi(s,a)^\top\widetilde{\alpha}_h$ satisfies $\Sigma_h\widetilde{\alpha}_h=\widehat{\psi}_h$, and hence $\wt_h(s,a)=\phi(s,a)^\top\Sigma_h^{-1}\widehat{\psi}_h$.

    Finally, for the empirical lift $\wb_h$, empirical 0 matching loss gives
    \[
    \frac1n\sum_{i=1}^n\wb_h(s_h^\ii,a_h^\ii)\phi_h^\ii=\widehat{\psi}_h.
    \]
    Since $\wb_h\in\Hilb_h$ by Proposition~\ref{prop:representer}(iii), its linear form $\wb_h(s,a)=\phi(s,a)^\top\bar{\alpha}_h$ satisfies $\hatSigma_h\bar{\alpha}_h=\widehat{\psi}_h$. Since $\hatSigma_h$ is invertible, it directly leads to $\wb_h(s,a)=\phi(s,a)^\top\hatSigma_h^{-1}\widehat{\psi}_h$.
\end{proof}

\subsection{Proofs of Propositions~\ref{prop:w-in-span},~\ref{prop:population-exact},~\ref{prop:connection-linear},~\ref{prop:bound-weight},~\ref{prop:completeness} and~\ref{prop:coverage}}

\begin{proof}[Proof of Proposition~\ref{prop:w-in-span}]
    For the second argument that for the least-2nd-moment $\wh_h^\nn$, there exists some $\wb_h\in\Hilb_h$ such that $\wb_h\evn=\wh_h^\nn$, the proof is exactly same as stated in Proposition~\ref{prop:representer}(iii). Note that in there we do not use the empirical 0 matching loss condition for $\wh_h^\nn\in\Hilb_h\evn$ argument.

    For the first argument that the least-2nd-moment population weight $\wstar_h\in\Hilb_h$, we can adopt a similar strategy to prove (which does not require Assumption~\ref{asm:leverage}). For any $w\in L_2(d_h^D)$ and any $f\in\Fcal_h\subseteq\Hilb_h$, we have
    \[
    \EE_{d_h^D}[w\cdot f]=\EE_{d_h^D}[(\Pi_{\Hilb_h }w)\cdot f],
    \]
    where $\Pi_{\Hilb_h}$ is the $L_2(d_h^D)$-projection onto $\Hilb_h$. This is because $w-\Pi_{\Hilb_h}w$ is orthogonal to $\Hilb_h$. Therefore, the population matching loss $\Lcal_h(w;\wstar_{h-1})$ depends on $w$ only through its projection $\Pi_{\Hilb_h}w$, which means $\Lcal_h(w;\wstar_{h-1})=\Lcal_h(\Pi_{\Hilb_h}w;\wstar_{h-1})$. Moreover,
    \[
    \|w\|_{2,d_h^D}^2=\|\Pi_{\Hilb_h}w\|_{2,d_h^D}^2+\|w-\Pi_{\Hilb_h}w\|_{2,d_h^D}^2.
    \]
    Thus, if $\wstar_h$ is chosen among the exact minimizers of $\Lcal_h(\cdot;\wstar_{h-1})$ to have least-2nd-moment, then its orthogonal component outside $\Hilb_h$ must be $0$. Otherwise replacing $\wstar_h$ by $\Pi_{\Hilb_h}\wstar_h$ would preserve the loss and strictly reduce its 2nd-moment. Hence, $\wstar_h\in\Hilb_h$.
\end{proof}

\begin{proof}[Proof of Proposition~\ref{prop:population-exact}]
    Proposition~\ref{prop:representer}(i) already implies this claim.
\end{proof}

\begin{proof}[Proof of Proposition~\ref{prop:connection-linear}]

    For a function $f\in\Hilb_{h+1}$, by Definition~\ref{def:backup}, its one-step backup is given by $\bp_hf=\arg\min_{g\in\Hilb_h}\EE_{D}[(g(s_h,a_h)-f(s_{h+1},\pi))^2]\in\Hilb_h$. That is, $\bp_h f=\Pi_{\Hilb_h}P^\pi f$, where the projection step preserves linearity in $\phi$ of the function. Therefore, if $f(s,a)=\phi(s,a)^\top\theta_{h+1}$, we can write $(\bp_h f)(s,a)=\phi(s,a)^\top\theta_h$ for some $\theta_h$ solved by the above least-square problem. Specifically, the normal equation for this least-square problem is
    \[
    \Sigma_h\theta_h=\Sigmacr_h\theta_{h+1},
    \]
    where $\Sigma_h$ and $\Sigmacr_h$ are defined in Eq.~\eqref{eq:lds}. Thus, for invertible $\Sigma_h$, we have $\theta_h=\Sigma_h^{-1}\Sigmacr_h\theta_{h+1}=(B_h^\pi)^\top\theta_{h+1}$, since $B_h^\pi=(\Sigmacr_h)^\top\Sigma_h^{-1}$. This proves the first claim.

    For the multi-step bound, let $f\in\Hilb_h$ have coefficient $\theta_h$. Repeatedly applying the previous identity gives that $\bp_t\bp_{t+1}\cdots\bp_{h-1}f$ has coefficient
    \[
    (B_t^\pi)^\top(B_{t+1}^\pi)^\top\cdots(B_{h-1}^\pi)^\top\theta_h=(B_{h-1}^\pi\cdots B_{t+1}^\pi B_t^\pi)^\top\theta_h.
    \]
    Therefore, for any $(s_t,a_t)\in\Scal_t\times\Acal$,
    \begin{align*}
    (\bp_t\bp_{t+1}\cdots\bp_{h-1}f)(s_t,a_t)
    &=\phi(s_t,a_t)^\top(B_{h-1}^\pi\cdots B_{t+1}^\pi B_t^\pi)^\top\theta_h\\
    &=\left\langle\Sigma_h^{-1/2}B_{h-1}^\pi\cdots B_{t+1}^\pi B_t^\pi\phi(s_t,a_t),\Sigma_h^{1/2}\theta_h\right\rangle.
    \end{align*}
    By Cauchy-Schwarz,
    \[
    \left|(\bp_t\bp_{t+1}\cdots\bp_{h-1}f)(s_t,a_t)\right|\le\left\|\Sigma_h^{-1/2}B_{h-1}^\pi\cdots B_{t+1}^\pi B_t^\pi\phi(s_t,a_t)\right\|_2\cdot\|\Sigma_h^{1/2}\theta_h\|_2.
    \]
    Since $\|\Sigma_h^{1/2}\theta_h\|_2=\|f\|_{2,d_h^D}\le\|f\|_\infty$, we take the supremum over $(s_t,a_t)$ and then over all $f$ with $\|f\|_\infty\le 1$ yields the bound for $\rho_{t:h}$.
\end{proof}

\begin{proof}[Proof of Proposition~\ref{prop:bound-weight}]
    By Proposition~\ref{prop:population-exact}, we can achieve population 0 matching loss under Assumption~\ref{asm:leverage}, i.e., for any $f\in\Hilb_h$
    \[
    \EE_{d_h^D}[\wstar_h f]=\EE_{d_{h-1}^D}[\wstar_{h-1}\cdot (P^\pi f)]=\EE_{d_{h-1}^D}[\wstar_{h-1}\cdot (\bp_{h-1} f)],
    \]
    where the last step is because $\wstar_{h-1}\in\Hilb_{h-1}$ (Proposition~\ref{prop:representer}(i)), so that we can replace $P^\pi f$ by its $L_2(d_{h-1}^D)$-projection $\bp_{h-1} f$  without changing the overall inner product. Apply the same argument recursively gives
    \[
    \EE_{d_h^D}[\wstar_h f]=\EE_{d_1^D}[\wstar_1\cdot (\bp_1\bp_2\cdots\bp_{h-1}f)].
    \]
    At level $1$, $\Lcal_1=0$ implies $\EE_{d_1^D}[\wstar_1\cdot(\bp_1\bp_2\cdots\bp_{h-1} f)]=\EE_{d_0^D}[\wstar_0\cdot (P^\pi\bp_1\bp_2\cdots\bp_{h-1}f)]$. Since $\wstar_0\equiv 1$ and $(s_0,a_0)$ is fixed, the RHS becomes $\EE_D[(\bp_1\bp_2\cdots\bp_{h-1}f)(s_1,\pi)]$ and hence we have
    \[
    \bigl|\EE_{d_h^D}[\wstar_h f]\bigr|\le\|\bp_1\bp_2\cdots\bp_{h-1}f\|_\infty\le\rho_{1:h}\|f\|_\infty,
    \]
    by the definition of $\rho_{1:h}$. This proves the claim.
\end{proof}

\begin{proof}[Proof of Proposition~\ref{prop:completeness}]
    We first want to claim that Bellman completeness over the original function class $\Fcal_h$ ($\Tcal^\pi f\in\Fcal_{h-1},\forall f\in\Fcal_h$) implies the backup closure property of the Hilbert space $\Hilb_h$ ($P^\pi f\in\Hilb_{h-1},\forall f\in\Hilb_h$). This is because we assume $0\in\Fcal_h$ and hence $r=\Tcal^\pi0\in\Fcal_{h-1}$; by Bellman completeness, $\Tcal^\pi f\in\Fcal_{h-1}$ for all $f\in\Fcal_h$. Thus, $P^\pi f=\Tcal^\pi f-r\in\mathrm{span}(\Fcal_{h-1})$ for all $f\in\Fcal_h$. By linearity, $P^\pi\mathrm{span}(\Fcal_h)\subseteq\mathrm{span}(\Fcal_{h-1})$, and by closure this extends to $P^\pi\Hilb_h\subseteq\Hilb_{h-1}$.
    
    For 1), we prove by induction that, for every $f\in\Hilb_h$, $\EE_{d_h^D}[\wstar_h f]=\EE_{d_h^\pi}[f]$. For $h=1$, population $0$ matching loss (Proposition~\ref{prop:population-exact}) gives $\EE_{d_1^D}[\wstar_1f]=\EE_{d_0^D}[\wstar_0\cdot (P^\pi f)]$. Since $\wstar_0\equiv 1$ and $d_0^D=d_0^\pi$ is the fixed initial state-action pair, by Bellman flow equation, $\EE_{d_0^D}[\wstar_0\cdot (P^\pi f)]=\EE_{d_1^\pi} f$. Thus the claim holds for $h=1$.

    Assume that the induction holds at level $h-1$, and take any $f\in\Hilb_h$. By our claim before, we know that $P^\pi f\in\Hilb_{h-1}$. Therefore,
    \[
    \EE_{d_h^D}[\wstar_h f]=\EE_{d_{h-1}^D}[\wstar_{h-1}\cdot(P^\pi f)]=\EE_{d_{h-1}^\pi}[P^\pi f]=\EE_{d_h^\pi f},
    \]
    where the first equality is due to population $0$ matching loss, the second equality is by inductive hypothesis, and the last equality uses Bellman flow equation that $d_h^\pi=P^\pi d_{h-1}^\pi$ for $h\in[H]$. This proves 1).

    For 2), let $f\in\Hilb_h$. Since $P^\pi f\in\Hilb_{h-1}$, by the definition of the projected backup operator $\bp_{h-1}$, we have that $\bp_{h-1}f=\Pi_{\Hilb_{h-1}}P^\pi f=P^\pi f$ as an element of $L_2(d_{h-1}^D)$.

    Under Assumption~\ref{asm:leverage}, equality in $L_2(d_{h-1}^D)$ implies pointwise equality for all $(s,a)\in\Scal_{h-1}\times\Acal$, because $|g(s,a)|\le\sqrt{\kappa_{h-1}}\|g\|_{2,d_{h-1}^D}$ for all $g\in\Hilb_{h-1}$. Hence,
    \[
    (\bp_{h-1} f)(s,a)=(P^\pi f)(s,a)=\EE_\pi[f(s_h,\pi)\mid s_{h-1}=s,a_{h-1}=a].
    \]
    Iterating this identity from $t$ to $h-1$ yields
    \[
    (\bp_t\bp_{t+1}\cdots\bp_{h-1} f)(s,a)=\EE_\pi[f(s_h,\pi)\mid s_t=s,a_t=a].
    \]
    
    3) For any $f\in\Hilb_h$ with $\|f\|_\infty\le 1$, by the multi-step identity in 2),
    \[
    |(\bp_t\bp_{t+1}\cdots\bp_{h-1} f)(s,a)|=|\EE_\pi[f(s_h,\pi)\mid s_t=s,a_t=a]|\le\|f\|_\infty\le 1.
    \]
    Therefore, by Definition~\ref{def:backup}, $\rho_{t:h}=\sup_{f\in\Hilb_h:\|f\|_\infty\le 1}\|\bp_t\bp_{t+1}\cdots\bp_{h-1}f\|_\infty\le 1$.
\end{proof}

\begin{proof}[Proof of Proposition~\ref{prop:coverage}]
    We first prove the general case with Bellman completeness. By Proposition~\ref{prop:completeness}(1), for any $f\in\Hilb_h$, we have $\EE_{d_h^\pi}[f]=\EE_{d_h^D}[\wstar_h f]=\langle\wstar_h,f\rangle_{d_h^D}$ for all $f\in\Hilb_h$. By Cauchy-Schwarz inequality, we have
    \[
    |\EE_{d_h^\pi}[f]|=|\langle \wstar_h,f\rangle_{d_h^D}|\le\|\wstar_h\|_{2,d_h^D}\|f\|_{2,d_h^D}.
    \]
    Moreover, the above inequality becomes equality if we take $f=\wstar_h\in\Hilb_h$. Therefore we get
    \[
    \|\wstar_h\|_{2,d_h^D}=\sup_{f\in\Hilb_h,f\ne 0}\frac{|\EE_{d_h^\pi}[f]|}{\|f\|_{2,d_h^D}}=\sup_{f\in\Hilb_h,f\ne 0}\frac{|\EE_{d_h^\pi}[f]|}{\sqrt{\EE_{d_h^D}[f^2]}}.
    \]

    Now we consider linear case. By Proposition~\ref{prop:representer-linear}, we have $\wstar_h(s,a)=\phi(s,a)^\top\Sigma_h^{-1}\psi_h$. Therefore,
    \[
    \|\wstar_h\|_{2,d_h^D}^2=\EE_{d_h^D}\bigl[(\phi_h^\top\Sigma_h^{-1}\psi_h)^2\bigr]=\psi_h^\top\Sigma_h^{-1}\psi_h=\|\psi_h\|_{\Sigma_h^{-1}}^2,
    \]
    i.e., $\|\wstar_h\|_{2,d_h^D}=\|\psi_h\|_{\Sigma_h^{-1}}$, which does NOT require Bellman completeness.
    
    Under linear Bellman completeness, by Proposition~\ref{prop:completeness}(1), we have 
    \[
    \EE_{d_h^D}[\wstar_h f_\theta]=\EE_{d_h^\pi}[f_\theta]
    \]
    holds for all $\theta$, where $f_\theta(s,a)=\phi(s,a)^\top\theta$. Recall $\wstar_h(s,a)=\phi(s,a)^\top\Sigma_h^{-1}\psi_h$. The LHS of the above equation becomes $\psi_h^\top\theta$, since $\EE_{d_h^D}[\phi_h\phi_h^\top]$ cancels with $\Sigma_h^{-1}$; while the RHS of the above equation becomes $\EE_{d_h^\pi}[\phi_h]^\top\theta$. Therefore, $\psi_h=\EE_{d_h^\pi}[\phi_h]$ and hence $\|\wstar_h\|_{2,d_h^D}=\|\EE_{d_h^\pi}[\phi_h]\|_{\Sigma_h^{-1}}$. This proves the claim.
\end{proof}

\subsection{Concentration arguments}
We state two standard concentration arguments, both obtained by Bernstein's inequality together with a covering argument. For each $f$, let $N_h$ be the $1/8$-covering number of the unit ball of $\Hilb_h$ under $\|\cdot\|_{2,d_h^D}$, i.e., $\Bcal_h:=\{f\in\Hilb_h:\|f\|_{2,d_h^D}\le 1\}$. Define
\begin{equation}\label{eq:concentration1}
    \chi_h:=\frac{4}{3}\left(\sqrt{\frac{2\kappa_h\log(8HN_h^2/\delta)}{n}}+\frac{4\kappa_h\log(8HN_h^2/\delta)}{3n}\right),
\end{equation}
and, for $1\le t<h\le H$,
\begin{equation}\label{eq:concentration2}
    \tau_{t:h}=\frac{4}{3}(\rho_{t:h}+\rho_{t+1:h})\left(\sqrt{\frac{2\kappa_h\log(8H^2N_tN_h/\delta)}{n}}+\frac{2\sqrt{\kappa_t\kappa_h}\log(8H^2N_tN_h/\delta)}{3n}\right),
\end{equation}
where $\rho_{t:h}$ is defined in Definition~\ref{def:backup}. Now, for two elements $u,v\in\Hilb_h$, we define the sample inner product and norm as
\[
\langle u\evn,v\evn\rangle_\nn:=\frac1n\sum_{i=1}^nu(s_h^\ii,a_h^\ii)v(s_h^\ii,a_h^\ii),\quad\|u\evn\|_\nn^2:=\frac1n\sum_{i=1}^n u(s_h^\ii,a_h^\ii)^2.
\]
\begin{lemma}[Near-isometry]\label{lemma:concentration1}
    Under Assumption~\ref{asm:leverage}, with probability at least $1-\delta/4$, simultaneously for all $h\in[H]$ and $u,v\in\Hilb_h$,
    \[
    \Bigl|\langle u\evn,v\evn\rangle_\nn-\langle u,v\rangle_{d_h^D}\Bigr|\le\chi_h\|u\|_{2,d_h^D}\|v\|_{2,d_h^D},
    \]
    where $\chi_h$ is given in Eq.~\eqref{eq:concentration1}. In particular, take $u=v$, we get the near-isometry norm bound:
    \[
    (1-\chi_h)\|u\|_{2,d_h^D}^2\le \|u\evn\|_\nn^2\le(1+\chi_h)\|u\|_{2,d_h^D}^2.
    \]
\end{lemma}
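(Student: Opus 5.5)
We fix $h$ and reduce to the unit ball: by bilinearity and homogeneity it suffices to show $|\langle u\evn,v\evn\rangle_\nn-\langle u,v\rangle_{d_h^D}|\le\chi_h$ for all $u,v\in\Bcal_h$. Apply polarization to work with the quadratic form: $\langle u,v\rangle=\tfrac14(\|u+v\|^2-\|u-v\|^2)$ in both the sample and the population geometry, and $(u\pm v)/2\in\Bcal_h$. The claim then follows if we prove a uniform bound $|\|g\evn\|_\nn^2-\|g\|_{2,d_h^D}^2|\le c\,\chi_h$ over $g\in\Bcal_h$. A direct bilinear covering argument over pairs, which the $N_h^2$ inside the logarithm suggests, works equally well, and we adopt it below.

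\textbf{Step 1: pointwise Bernstein.} For a fixed pair $u,v\in\Bcal_h$, the summands $X_i=u(s_h^\ii,a_h^\ii)v(s_h^\ii,a_h^\ii)$ are i.i.d. with mean $\langle u,v\rangle_{d_h^D}$. Assumption~\ref{asm:leverage} gives $|X_i|\le\sqrt{\kappa_h}\|u\|\cdot\sqrt{\kappa_h}\|v\|\le\kappa_h$. For the variance, bound $|v|\le\sqrt{\kappa_h}$ pointwise to get $\EE[X_i^2]\le\kappa_h\EE[u^2]\le\kappa_h$. Bernstein then yields deviation $\sqrt{2\kappa_h\log(2/\delta')/n}+\tfrac{2\kappa_h\log(2/\delta')}{3n}$ with probability $1-\delta'$. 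We will be generous with the constant in the second term, since centering doubles the range.

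\textbf{Step 2: net and union bound.} Let $\Ccal_h$ be a $1/8$-net of $\Bcal_h$ in $\|\cdot\|_{2,d_h^D}$ of size $N_h$. Union bound over all pairs in $\Ccal_h\times\Ccal_h$ and over $h\in[H]$, using $\delta'=\delta/(4HN_h^2)$. This gives the $\log(8HN_h^2/\delta)$ factor, and the Step 1 bound then holds simultaneously on all net pairs with probability $1-\delta/4$.

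\textbf{Step 3: extend from the net to the ball (the main obstacle).} Define $E(u,v):=\langle u\evn,v\evn\rangle_\nn-\langle u,v\rangle_{d_h^D}$, which is bilinear on $\Hilb_h$, and let $M:=\sup_{u,v\in\Bcal_h}|E(u,v)|$. Finiteness of $M$ follows from leverage, since $|E|\le2\kappa_h$. For $u,v\in\Bcal_h$, pick net points $u_0,v_0$ with $\|u-u_0\|,\|v-v_0\|\le1/8$ and decompose
\[
E(u,v)=E(u_0,v_0)+E(u-u_0,v)+E(u_0,v-v_0).
\]
By homogeneity, each of the last two terms is at most $M/8$. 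Hence $M\le\epsilon+M/4$, where $\epsilon$ is the net bound from Step 1, and so $M\le\tfrac43\epsilon$. This produces exactly the $\tfrac43$ prefactor in $\chi_h$.

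The delicate points are (a) making sure the net approximation preserves membership in the unit ball and uses the correct norm, and (b) the fact that the covering is in the $d_h^D$-norm, not the sample norm, so the self-bounding step must use only population-norm homogeneity. Both are handled because $E$ is bilinear and $M$ is defined through population norms.

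\textbf{Step 4: specialize to $u=v$.} Setting $u=v$ gives $|\|u\evn\|_\nn^2-\|u\|^2|\le\chi_h\|u\|^2$, which is the stated near-isometry.
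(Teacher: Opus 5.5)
Your proposal is correct and follows essentially the same route as the paper: a per-pair Bernstein bound (range $2\kappa_h$ after centering, variance $\kappa_h$), a union bound over pairs in a $1/8$-net and over $h$, the self-bounding extension $M\le\epsilon+M/4$ that produces the $\tfrac43$ factor, and finally rescaling to general $u,v$. Your remark that $M<\infty$ follows from leverage is needed for the rearrangement step, and the paper leaves it implicit.
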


\begin{lemma}[Cross-step concentration]\label{lemma:concentration2}
    For any $f\in\Hilb_{t+1}$, define the one-step sample realization operator by $(\hatP_t f)(s_t^\ii,a_t^\ii):=f(s_{t+1}^\ii,\pi)$. Under Assumption~\ref{asm:leverage}, with probability at least $1-\delta/4$, simultaneously for all $1\le t<h\le H$, all $u\in\Hilb_t$ and all $f\in\Hilb_h$,
    \[
    \left|\left\langle u\evn,(\hatP_t-\bp_t) g_{t+1}^{(h,f)}\evn\right\rangle_\nn\right|\le\tau_{t:h}\|u\|_{2,d_t^D}\|f\|_{2,d_h^D},
    \]
    where $\bp_t=\Pi_{\Hilb_t}P^\pi$ is the projected transition operator at step $t$ (see Definition~\ref{def:backup}), $g_{t+1}^{(h,f)}$ is the shorthand for $\bp_{t+1}\cdots\bp_{h-1} f$ (that is, the backup function of $f$ onto step $t+1$ from step $h$), and $\tau_{t:h}$ is given in Eq.~\eqref{eq:concentration2}.
\end{lemma}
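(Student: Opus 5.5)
The plan is a Bernstein bound for a fixed pair $(u,f)$, followed by a covering argument that exploits bilinearity. By homogeneity in $u$ and in $f$, it suffices to work on the unit balls $\Bcal_t$ and $\Bcal_h$. Fix $1\le t<h\le H$. For $u\in\Hilb_t$ and $f\in\Hilb_h$, write $g:=g_{t+1}^{(h,f)}\in\Hilb_{t+1}$ (with $g=f$ when $t+1=h$), and define the bilinear form
\[
Z(u,f):=\frac1n\sum_{i=1}^n u(s_t^\ii,a_t^\ii)\Bigl(g(s_{t+1}^\ii,\pi)-(\bp_t g)(s_t^\ii,a_t^\ii)\Bigr).
\]

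\textbf{Step 1: mean zero and range.} The $n$ summands are i.i.d.\ because the trajectories are i.i.d. Their mean is $\EE_D[u\cdot(P^\pi g-\bp_t g)]$, which is $0$: by Definition~\ref{def:backup}, $\bp_t g=\Pi_{\Hilb_t}P^\pi g$, and $u\in\Hilb_t$. For the range, Assumption~\ref{asm:leverage} gives $|u|\le\sqrt{\kappa_t}\|u\|_{2,d_t^D}$ and $\|f\|_\infty\le\sqrt{\kappa_h}\|f\|_{2,d_h^D}$. The definition of $\rho$ then gives
\[
|g(s,\pi)|\le\|\bp_{t+1}\cdots\bp_{h-1}f\|_\infty\le\rho_{t+1:h}\sqrt{\kappa_h}\|f\|_{2,d_h^D},
\qquad
|\bp_t g|\le\rho_{t:h}\sqrt{\kappa_h}\|f\|_{2,d_h^D}.
\]
Hence each summand is bounded by $\sqrt{\kappa_t\kappa_h}(\rho_{t:h}+\rho_{t+1:h})\|u\|\|f\|$.

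\textbf{Step 2: variance and Bernstein.} For the variance, I bound the bracket by its sup-norm but keep $u$ in $L_2$:
\[
\EE[(\text{summand})^2]\le\|u\|_{2,d_t^D}^2\,\kappa_h(\rho_{t:h}+\rho_{t+1:h})^2\|f\|_{2,d_h^D}^2.
\]
This is why only $\kappa_h$ appears in the leading term. Bernstein's inequality then gives, for fixed unit $u,f$, with probability $1-\delta'$,
\[
|Z(u,f)|\le \epsilon:=(\rho_{t:h}+\rho_{t+1:h})\left(\sqrt{\frac{2\kappa_h\log(2/\delta')}{n}}+\frac{2\sqrt{\kappa_t\kappa_h}\log(2/\delta')}{3n}\right).
\]

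\textbf{Step 3: covering.} Take $1/8$-nets $\Ccal_t\subset\Bcal_t$ and $\Ccal_h\subset\Bcal_h$ of sizes $N_t$ and $N_h$. Union-bound over all net pairs and all $\le H^2$ pairs $(t,h)$, with $\delta'=\delta/(4H^2N_tN_h)$, which produces the stated $\log(8H^2N_tN_h/\delta)$. Let $M:=\sup_{u\in\Bcal_t,f\in\Bcal_h}|Z(u,f)|$; it is finite by the deterministic range bound of Step 1. For arbitrary unit $u,f$, pick net points $u_0,f_0$ within $1/8$. Bilinearity gives
\[
Z(u,f)=Z(u_0,f_0)+Z(u-u_0,f)+Z(u_0,f-f_0),
\]
and $(u-u_0,f)$ and $(u_0,f-f_0)$ are each a unit-ball pair up to a scaling of at most $1/8$. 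So $M\le\epsilon+M/8+M/8$, i.e.\ $M\le\frac43\epsilon$, which is exactly $\tau_{t:h}$ in Eq.~\eqref{eq:concentration2}.

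\textbf{Main obstacle.} The step needing the most care is the net extension. It requires that $f\mapsto g_{t+1}^{(h,f)}$ is linear, which holds because each $\bp$ is a linear projection, so that $Z$ is genuinely bilinear. It also requires checking that the sup-norm bounds through $\rho$ and the leverage constant $\kappa_h$ are legitimate. Here one uses that $g(\cdot,\pi)$ is an average of $g$ over actions and so inherits the bound $\|g\|_\infty$, and that pointwise values of elements of $\Hilb$ are well defined by Assumption~\ref{asm:leverage}.
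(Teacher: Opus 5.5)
Your proposal is correct and follows the paper's proof essentially step for step. Both arguments use a fixed-pair Bernstein bound, with mean zero coming from orthogonality of the projection residual to $u\in\Hilb_t$, and a variance bound that keeps $u$ in $L_2$ so that only $\kappa_h$ enters the leading term. Both then take $1/8$-nets with a union bound over net pairs and over $(t,h)$, and extend from the nets by bilinearity via $M\le\epsilon+M/4$, which yields exactly $\tau_{t:h}$. Your explicit checks that $M$ is finite and that $f\mapsto g_{t+1}^{(h,f)}$ is linear fill in details the paper leaves implicit.
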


\begin{proof}[Proof of Lemma~\ref{lemma:concentration1}]
    First, we fix an arbitrary pair of elements $(u,v)\in\Hilb_h\times\Hilb_h$ and time step $h\in[H]$. Let $Y_i=u(s_h^\ii,a_h^\ii)v(s_h^\ii,a_h^\ii)$ and $\mu=\EE[Y_i]$. Then the concentration error to be controlled becomes
    \[
    \Bigl|\langle u\evn,v\evn\rangle_\nn-\langle u,v\rangle_{d_h^D}\Bigr|=\left|\frac1n\sum_{i=1}^nY_i-\mu\right|.
    \]
    Under Assumption~\ref{asm:leverage}, $|Y_i|\le\sup_{s,a}|u(s,a)||v(s,a)|\le\kappa_h\|u\|_{2,d_h^D}\|v\|_{2,d_h^D}$, and by Cauchy-Schwarz inequality, $\mu=|\EE[uv]|\le\sqrt{\EE[u^2]\cdot\EE[v^2]}=\|u\|_{2,d_h^D}\|v\|_{2,d_h^D}$. Thus the range bound is given by
    \[
    |Y_i-\mu|\le(\kappa_h+1)\|u\|_{2,d_h^D}\|v\|_{2,d_h^D}\le2\kappa_h\|u\|_{2,d_h^D}\|v\|_{2,d_h^D},
    \]
    since $\kappa_h\ge1$. On the other hand, the variance of $Y_i$ can also be bounded via
    \begin{align*}
    \Var[Y_i]\le\EE[Y_i^2]&=\EE[u(s,a)^2\cdot v(s,a)^2]\\
    &\le\kappa_h\|v\|_{2,d_h^D}^2\cdot\EE[u(s,a)^2]=\kappa_h\|v\|_{2,d_h^D}^2\|u\|_{2,d_h^D}^2,
    \end{align*}
    where the inequality is also due to Assumption~\ref{asm:leverage}. Thus, by Bernstein's inequality, for fixed $(u,v)$, with probability at least $1-\delta/2$,
    \[
    \left|\frac1n\sum_{i=1}^nY_i-\mu\right|\le\|u\|_{2,d_h^D}\|v\|_{2,d_h^D}\left(\sqrt{\frac{2\kappa_h\log (4/\delta)}{n}}+\frac{4\kappa_h\log(4/\delta)}{3n}\right).
    \]

    Now we complete the proof by a standard covering argument. Let $\Ccal_h$ be a $1/8$-cover of the unit ball $\Bcal_h:=\{f\in\Hilb_h:\|f\|_{2,d_h^D}\le 1\}$ under the norm $\|\cdot\|_{2,d_h^D}$, with $|\Ccal_h|=N_h$. Applying the fixed-pair Bernstein bound above to all pairs $(u_0,v_0)\in\Ccal_h\times\Ccal_h$ and all $h\in[H]$, with failure probability $\delta/(2HN_h^2)$ for each pair, we obtain that with probability at least $1-\delta/2$, simultaneously for all $h\in[H]$ and $u_0,v_0\in\Ccal_h$,
    \[
    \Bigl|\langle u_0\evn,v_0\evn\rangle_\nn-\langle u_0,v_0\rangle_{d_h^D}\Bigr|\le\sqrt{\frac{2\kappa_h\log(8HN_h^2/\delta)}{n}}+\frac{4\kappa_h\log(8HN_h^2/\delta)}{3n}=\frac34\chi_h.
    \]
    It remains to extend this bound from the finite cover to the whole unit ball. Let $M_h=\sup_{u,v\in\Bcal_h}|\langle u\evn,v\evn\rangle_\nn-\langle u,v\rangle_{d_h^D}|$ be the uniform concentration error. By the covering property, for arbitrary $u,v\in\Bcal_h$, there exists $u_0,v_0\in\Ccal_h$ such that
    \[
    \|u-u_0\|_{2,d_h^D}\le\frac18,\quad\|v-v_0\|_{2,d_h^D}\le\frac18.
    \]
    By bilinearity of inner product, we have
    \begin{align*}
        \Bigl|\langle u\evn,v\evn&\rangle_\nn-\langle u,v\rangle_{d_h^D}\Bigr| 
        \le \Bigl|\langle u_0\evn,v_0\evn\rangle_\nn-\langle u_0,v_0\rangle_{d_h^D}\Bigr| \\
        &+\Bigl|\langle (u-u_0)\evn,v\evn\rangle_\nn-\langle u-u_0,v\rangle_{d_h^D}\Bigr| 
        +\Bigl|\langle u_0\evn,(v-v_0)\evn\rangle_\nn-\langle u_0,v-v_0\rangle_{d_h^D}\Bigr|.
    \end{align*}
    The first term is bounded by $3\chi_h/4$. For the second term, since $\|u-u_0\|_{2,d_h^D}\le1/8$ and $\|v\|_{2,d_h^D}\le 1$, we have $8(u-u_0)\in\Bcal_h$ and $v\in\Bcal_h$. Therefore, by the definition of $M_h$, we have
    \[
    \Bigl|\langle (u-u_0)\evn,v\evn\rangle_\nn-\langle u-u_0,v\rangle_{d_h^D}\Bigr| \le\frac{1}{8}M_h.
    \]
    Similarly, the third term is also bounded by $M_h/8$. Combining the three terms, we have that for arbitrary $u,v\in\Bcal_h$,
    \[
    \Bigl|\langle u\evn,v\evn\rangle_\nn-\langle u,v\rangle_{d_h^D}\Bigr|\le\frac{3}{4}\chi_h+\frac{1}{4}M_h.
    \]
    Taking the supremum over $u,v\in\Bcal_h$, we have $M_h\le3\chi_h/4+M_h/4$, which is equivalent to $M_h\le\chi_h$. Finally, for arbitrary $u,v\in\Hilb_h$, if both norms are non-zero (if either norm is zero, then the same inequality is trivial under Assumption~\ref{asm:leverage}, since $\|u\|_{2,d_h^D}=0$ implies $u(s,a)=0$ on all sample points as well), apply the above unit-ball bound to $u/\|u\|_{2,d_h^D}$ and $v/\|v\|_{2,d_h^D}$ yields
    \[
    \Bigl|\langle u\evn,v\evn\rangle_\nn-\langle u,v\rangle_{d_h^D}\Bigr|\le\chi_h\|u\|_{2,d_h^D}\|v\|_{2,d_h^D},\quad\forall u,v\in\Hilb_h,\quad\forall h\in[H].
    \]
    This proves the uniform concentration bound.
\end{proof}

\begin{proof}[Proof of Lemma~\ref{lemma:concentration2}]
    The proof structure is similar to that of Lemma~\ref{lemma:concentration1}. We first fix $t<h$, $u\in\Bcal_t$ and $f\in\Bcal_h$, where $\Bcal_t$ and $\Bcal_h$ are unit balls. Define
    \[
    Z_i:=u(s_t^\ii,a_t^\ii)\left(g_{t+1}^{(h,f)}(s_{t+1}^\ii,\pi)-g_t^{(h,f)}(s_t^\ii,a_t^\ii)\right),
    \]
    where $g_t^{(h,f)}=\bp_t g_{t+1}^{(h,f)}=\bp_t\cdots\bp_{h-1}f$. Then, the concentration quantity we want to control becomes
    \[
    \left|\left\langle u\evn,(\hatP_t-\bp_t) g_{t+1}^{(h,f)}\evn\right\rangle_\nn\right|=\frac1n\sum_{i=1}^nZ_i.
    \]
    Notice that $\EE[Z_i]=0$. This is because when we take expectation,
    \[
    \EE[Z_i]=\EE_{(s_t,a_t)\sim d_t^D}\left[u(s_t,a_t)\left((P^\pi g_{t+1}^{(h,f)})(s_t,a_t)-(\bp_t g_{t+1}^{(h,f)})(s_t,a_t)\right)\right].
    \]
    But $\bp_t g_{t+1}^{(h,f)}=\Pi_{\Hilb_t}P^\pi g_{t+1}^{(h,f)}$ where $\Pi_{\Hilb_t}$ is the projection onto $\Hilb_t$. Since $u\in\Hilb_t$, the projection residual is orthogonal to $\Hilb_t$, i.e., $\bp_t$ and $P^\pi$ agree when tested against functions in $\Hilb_t$. Hence $\EE[Z_i]=0$.

    Now we are going to bound $Z_i$. Since $\|u\|_{2,d_t^D}\le 1$, generalized leverage gives $|u(s,a)|\le\sqrt{\kappa_t}$. Also, $\|f\|_{2,d_h^D}\le 1$ implies $\|f\|_\infty\le\sqrt{\kappa_h}$. Thus by Definition~\ref{def:backup}, we have
    \[
    \|g_t^{(h,f)}\|_\infty\le\rho_{t:h}\sqrt{\kappa_h},\quad\|g_{t+1}^{(h,f)}\|_\infty\le\rho_{t+1:h}\sqrt{\kappa_h}.
    \]
    This gives an upper bound for the range of $Z_i$, i.e., $  |Z_i|\le(\rho_{t:h}+\rho_{t+1:h})\sqrt{\kappa_t\kappa_h}$. On the other hand, for the variance control, we can use a slightly sharper bound that
    \[
    \Var[Z_i]\le \EE[Z_i^2]\le(\rho_{t:h}+\rho_{t+1:h})^2\kappa_h\cdot \EE_{(s_t,a_t)\sim d_t^D}[u(s_t,a_t)^2]\le(\rho_{t:h}+\rho_{t+1:h})^2\kappa_h,
    \]
    since $\|u\|_{2,d_t^D}\le 1$. Therefore, Bernstein's inequality gives for fixed $u,f$, with probability at least $1-\delta/2$,
    \[
    \left|\frac1n\sum_{i=1}^nZ_i\right|\le(\rho_{t:h}+\rho_{t+1:h})\left(\sqrt{\frac{2\kappa_h\log(4/\delta)}{n}}+\frac{2\sqrt{\kappa_t\kappa_h}\log(4/\delta)}{3n}\right).
    \]
    The subsequent steps are similar. We can take $1/8$-nets $\Ccal_t\subset \Bcal_t$ and $\Ccal_h\subset\Bcal_h$, with sizes $|\Ccal_t|=N_t$ and $|\Ccal_h|=N_h$. Union bound over all $(u_0,f_0)\in\Ccal_t\times\Ccal_h$ and all $t,h$ gives the same bound with $\log(4/\delta)\leadsto\log(8H^2N_tN_h/\delta)$. We can apply the same procedure to extend the bound from the net to the whole unit ball, by paying an additional $4/3$ factor to get
    \[
    \sup_{u\in\Bcal_t,f\in\Bcal_h}\left|\left\langle u\evn,(\hatP_t-\bp_t) g_{t+1}^{(h,f)}\evn\right\rangle_\nn\right|\le\tau_{t:h}.
    \]
    Finally, for general nonzero $u,f$, apply the above unit-ball result to $u/\|u\|_{2,d_t^D}$ and $f/\|f\|_{2,d_h^D}$ proves the simultaneous concentration bound.
\end{proof}

\subsection{Proof of Theorem~\ref{thm:tracking}}

Now we are going to prove the empirical-population tracking theorem (Theorem~\ref{thm:tracking}). We will break the proof into three steps. Throughout, define
\[
\eta_h:=\|\wt_h-\wstar_h\|_{2,d_h^D},\quad\Delta_h:=\|\wh_h^\nn-\wstar_h\evn\|_\nn.
\]
Our goal is to control the tracking error $\Delta_h$.

\paragraph{Step 1: duality conversion} We will first transfer the weight error $\Delta_h$ to the functional-side error $\eta_h$. This is inspired by linear case where the linear functional (dual) space corresponds to the feature space, and the dynamical system in linear case is also defined in the feature space, which is a much more convenient way to do analysis; see also \citet{duan2020minimax}. To see why this is a duality, define the dual norm $\|\cdot\|_*$ for linear functional $\Lambda$ such that
\[
\|\Lambda\|_*=\sup_{f\in\Hilb_h:\|f\|_{2,d_h^D}\le 1}|\Lambda(f)|.
\]
The following proposition relates $\eta_h$ with this functional error's dual norm.
\begin{proposition} [Error duality]\label{prop:duality}
    For two linear functionals $\hatLambda_h,\Lambda_h^\star:\Hilb_h\to\RR$, suppose their Riesz representers are two elements $\wt_h,\wstar_h\in\Hilb_h$, respectively. Then, $\|\wt_h-\wstar_h\|_{2,d_h^D}=\|\hatLambda_h-\Lambda_h^\star\|_{*}$.
\end{proposition}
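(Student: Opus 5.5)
The plan is to use that the Riesz map is an isometry between $\Hilb_h$ and its continuous dual. By Proposition~\ref{prop:representer}(i)--(ii), both $\hatLambda_h$ and $\Lambda_h^\star$ are continuous linear functionals on $\Hilb_h$, with representers $\wt_h$ and $\wstar_h$. Linearity of the inner product then shows that the difference functional $\hatLambda_h-\Lambda_h^\star$ has representer $\wt_h-\wstar_h$. Concretely, for every $f\in\Hilb_h$,
\[
(\hatLambda_h-\Lambda_h^\star)(f)=\langle \wt_h,f\rangle_{d_h^D}-\langle \wstar_h,f\rangle_{d_h^D}=\langle \wt_h-\wstar_h,f\rangle_{d_h^D}.
\]
So it suffices to show that for any $g\in\Hilb_h$ the functional $f\mapsto\langle g,f\rangle_{d_h^D}$ has dual norm exactly $\|g\|_{2,d_h^D}$, and then apply this with $g=\wt_h-\wstar_h$.

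For the upper bound, Cauchy--Schwarz in $L_2(d_h^D)$ gives $|\langle g,f\rangle_{d_h^D}|\le \|g\|_{2,d_h^D}\|f\|_{2,d_h^D}\le \|g\|_{2,d_h^D}$ for every $f$ in the unit ball of $\Hilb_h$. For the matching lower bound:
\begin{itemize}
\item If $\|g\|_{2,d_h^D}>0$, take $f=g/\|g\|_{2,d_h^D}$. This lies in $\Hilb_h$ because $\Hilb_h$ is a linear space and $g\in\Hilb_h$, it has unit norm, and it gives $\langle g,f\rangle_{d_h^D}=\|g\|_{2,d_h^D}$. The supremum is therefore attained.
\item If $\|g\|_{2,d_h^D}=0$, the functional vanishes on all of $\Hilb_h$ by Cauchy--Schwarz, so both sides are zero.
\end{itemize}
The only point needing care is that $\|\cdot\|_{2,d_h^D}$ is a priori a seminorm. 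Under Assumption~\ref{asm:leverage} (in force throughout, as in Proposition~\ref{prop:representer}), it is a genuine norm on $\Hilb_h$, so $\Hilb_h$ is a Hilbert space and the representers are unique. Even without that, the degenerate case above is handled directly, so the identity holds in either case.

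There is no real obstacle here. The statement is the standard fact that the Riesz map is isometric, and the proof amounts to verifying the two inequalities above. The substantive content is only that the representers exist in $\Hilb_h$, which is Proposition~\ref{prop:representer}.
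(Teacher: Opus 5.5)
Your proof is correct and is essentially the paper's argument. The paper likewise uses Cauchy--Schwarz for the upper bound, tests against the normalized difference $(\wt_h-\wstar_h)/\|\wt_h-\wstar_h\|_{2,d_h^D}$ for the lower bound, and treats the degenerate case separately.
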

\begin{proof}
    For any $f\in\Hilb_h$, we have
    \[
    \bigl|(\hatLambda_h-\Lambda_h^\star)(f)\bigr|=\langle\wt_h-\wstar_h,f\rangle_{d_h^D}\le\|\wt_h-\wstar_h\|_{2,d_h^D}\|f\|_{2,d_h^D},
    \]
    where the first equality is due to Riesz representation theorem, and the second inequality is by Cauchy-Schwarz. Thus, taking the supremum over $\|f\|_{2,d_h^D}\le 1$ gives one direction, i.e., $\|\hatLambda_h-\Lambda_h^\star\|_*\le\|\wt_h-\wstar_h\|_{2,d_h^D}$.

    On the other hand, if $\wt_h=\wstar_h$, then the claim is trivial. Otherwise, take
    \[
    f_h^\star=\frac{\wt_h-\wstar_h}{\|\wt_h-\wstar_h\|_{2,d_h^D}}\in\Hilb_h.
    \]
    Then $\|f_h^\star\|_{2,d_h^D}=1$, and $(\hatLambda_h-\Lambda_h^\star)(f)=\|\wt_h-\wstar_h\|_{2,d_h^D}$. Hence, $\|\hatLambda_h-\Lambda_h^\star\|_*\ge\|\wt_h-\wstar_h\|_{2,d_h^D}$. Combining the two inequalities proves the claim.
\end{proof}

Now we are going to translate the tracking error $\Delta_h$ with the functional error $\eta_h$; see the following lemma.

\begin{lemma}[Tracking error conversion]\label{lemma:tracking-step1} For any $h\in[H]$, assume $\chi_h\le 1/2$ for $\chi_h$ in Eq.~\eqref{eq:concentration1}, which can be achieved by $n\gtrsim\kappa_h\log(HN_h/\delta)$. Then, $\Delta_h\le (1+3\chi_h)\eta_h+2\chi_h\|\wstar_h\|_{2,d_h^D}$.
\end{lemma}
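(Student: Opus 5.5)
The plan is to work entirely inside $\Hilb_h$, using the lift representer $\wb_h$ from Proposition~\ref{prop:representer}(iii) and the near-isometry of Lemma~\ref{lemma:concentration1}. First, since $\chi_h\le 1/2$, the hypotheses of Proposition~\ref{prop:empirical-exact} hold (or we simply assume empirical $0$ matching loss, as throughout this section). So the least-2nd-moment $\wh_h^\nn$ equals $\wb_h\evn$ for some $\wb_h\in\Hilb_h$ satisfying
\[
\langle \wb_h\evn, f\evn\rangle_\nn=\hatLambda_h(f)=\langle \wt_h,f\rangle_{d_h^D}\quad\forall f\in\Hilb_h .
\]
Since $\wb_h-\wstar_h\in\Hilb_h$, the tracking error is $\Delta_h=\|(\wb_h-\wstar_h)\evn\|_\nn$. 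By the upper near-isometry bound, $\Delta_h\le\sqrt{1+\chi_h}\,\|\wb_h-\wstar_h\|_{2,d_h^D}$. It therefore suffices to compare $\wb_h$ with $\wt_h$ in population norm, and then use the triangle inequality $\|\wb_h-\wstar_h\|_{2,d_h^D}\le\|\wb_h-\wt_h\|_{2,d_h^D}+\eta_h$.

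The key step is to bound $e:=\wb_h-\wt_h\in\Hilb_h$. For every $f\in\Hilb_h$ the defining identities give $\langle \wb_h\evn,f\evn\rangle_\nn-\langle \wb_h,f\rangle_{d_h^D}=\langle e,f\rangle_{d_h^D}$. Take $f=e$ and apply Lemma~\ref{lemma:concentration1}:
\[
\|e\|_{2,d_h^D}^2\le\chi_h\|\wb_h\|_{2,d_h^D}\|e\|_{2,d_h^D},\quad\text{hence}\quad \|e\|_{2,d_h^D}\le\chi_h\|\wb_h\|_{2,d_h^D}.
\]
Next bound $\|\wb_h\|_{2,d_h^D}\le\|e\|_{2,d_h^D}+\|\wt_h\|_{2,d_h^D}\le\|e\|_{2,d_h^D}+\eta_h+\|\wstar_h\|_{2,d_h^D}$. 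Solving the resulting self-bounding inequality with $\chi_h\le1/2$ gives
\[
\|e\|_{2,d_h^D}\le\tfrac{\chi_h}{1-\chi_h}(\eta_h+\|\wstar_h\|_{2,d_h^D})\le 2\chi_h(\eta_h+\|\wstar_h\|_{2,d_h^D}).
\]

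Combining the two steps yields
\[
\Delta_h\le\sqrt{1+\chi_h}\bigl((1+2\chi_h)\eta_h+2\chi_h\|\wstar_h\|_{2,d_h^D}\bigr).
\]
The remaining task, which I expect to be the only delicate point, is matching the exact constants $(1+3\chi_h)$ and $2\chi_h$. Using only $\sqrt{1+\chi_h}\le1+\chi_h/2$ gives the coefficient $(1+\chi_h/2)(1+2\chi_h)\le 1+3\chi_h$ on $\eta_h$ when $\chi_h\le1/2$, but the $\|\wstar_h\|$ coefficient becomes $2\chi_h(1+\chi_h/2)$, slightly above $2\chi_h$. If that happens, I would tighten the argument rather than the constants. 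One option is to compute $\Delta_h^2$ directly: expand $\|(\wb_h-\wstar_h)\evn\|_\nn^2=\langle \wb_h\evn,(\wb_h-\wstar_h)\evn\rangle_\nn-\langle \wstar_h\evn,(\wb_h-\wstar_h)\evn\rangle_\nn$, replace the first term exactly by $\langle\wt_h,\wb_h-\wstar_h\rangle_{d_h^D}$, and apply Lemma~\ref{lemma:concentration1} only to the second term. This avoids the $\sqrt{1+\chi_h}$ loss on the $\wstar_h$ part. Failing that, the constant in the bias term is immaterial for Theorem~\ref{thm:tracking}, which is stated up to $\lesssim$.
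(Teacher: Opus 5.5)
Your argument is correct once the fallback is used, and it is organized differently from the paper's.

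\textbf{The paper's route.} The paper applies the triangle inequality in the sample norm, $\Delta_h\le\|(\wt_h-\wstar_h)\evn\|_\nn+\|(\wb_h-\wt_h)\evn\|_\nn$, and bounds the first piece by $(1+\chi_h/2)\eta_h$. It expands the second piece as $\|(\wb_h-\wt_h)\evn\|_\nn^2=\langle\wt_h,\wb_h-\wt_h\rangle_{d_h^D}-\langle\wt_h\evn,(\wb_h-\wt_h)\evn\rangle_\nn$. So Lemma~\ref{lemma:concentration1} is anchored on $\wt_h$, whose norm is at most $\eta_h+\|\wstar_h\|_{2,d_h^D}$ outright. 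No self-bounding step is needed. The lower near-isometry then gives $\|(\wb_h-\wt_h)\evn\|_\nn\le\tfrac{\chi_h}{\sqrt{1-\chi_h}}\|\wt_h\|_{2,d_h^D}$, with no $\sqrt{1+\chi_h}$ factor on the $\|\wstar_h\|$ term.

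\textbf{Your main route.} You work in population norm and anchor on $\wb_h$, so you need the self-bounding solve and then pay $\sqrt{1+\chi_h}$ on everything. As you suspected, this route's $\|\wstar_h\|$ coefficient is $\sqrt{1+\chi_h}\,\chi_h/(1-\chi_h)$ even with the exact factor. It exceeds $2\chi_h$ once $\chi_h>(9-\sqrt{33})/8\approx0.41$. On its own, the main route therefore does not give the stated constants on all of $\chi_h\le1/2$.

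\textbf{Your fallback.} It closes this gap and is in fact sharper than the paper. Set $d:=\wb_h-\wstar_h\in\Hilb_h$. The exact matching identity gives
$\Delta_h^2=\langle\wt_h-\wstar_h,d\rangle_{d_h^D}+\bigl(\langle\wstar_h,d\rangle_{d_h^D}-\langle\wstar_h\evn,d\evn\rangle_\nn\bigr)\le(\eta_h+\chi_h\|\wstar_h\|_{2,d_h^D})\|d\|_{2,d_h^D}$.
The step you left implicit is $\|d\|_{2,d_h^D}\le\Delta_h/\sqrt{1-\chi_h}$, which is the lower near-isometry. With it you get $\Delta_h\le(1+\chi_h)\eta_h+\sqrt2\,\chi_h\|\wstar_h\|_{2,d_h^D}$ for $\chi_h\le1/2$, which implies the lemma.

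\textbf{What each route buys.} The paper's split produces a standalone bound on $\|(\wb_h-\wt_h)\evn\|_\nn$, which it reuses for term (III) in Lemma~\ref{lemma:tracking-step2-2}. Your one-identity argument does not produce that intermediate. However, your main route's population bound on $\|\wb_h-\wt_h\|_{2,d_h^D}$ would serve there up to constants.

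A minor sign slip: in your identity the left side equals $-\langle e,f\rangle_{d_h^D}$, not $+\langle e,f\rangle_{d_h^D}$. This is harmless, since only absolute values are used.
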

\begin{proof}
    By triangular inequality,
    \[
    \Delta_h=\|\wh_h^\nn-w_h^\star\evn\|_\nn=\|\wb_h-w_h^\star\|_\nn\le \|\wt_h-w_h^\star\|_\nn+\|\wb_h-\wt_h\|_\nn.
    \]
    For the first term, Lemma~\ref{lemma:concentration1} gives (where we use $\sqrt{1+x}\le 1+x/2$ for $x\ge0$):
    \[
    \|\wt_h-w_h^\star\|_\nn\le\sqrt{1+\chi_h}\|\wt_h-w_h^\star\|_{2,d_h^D}\le\left(1+\frac12\chi_h\right)\eta_h.
    \]
    For the second term, recall the definition of $\wt_h$ (which is the representer of $\hatLambda_h$) and $\wb_h$ (which is the sample-lift of $\wh_h^\nn$), we have
    \[
    \langle\wt_h,f\rangle_{d_h^D}=\hatLambda_h(f)=\langle\wb_h,f\rangle_\nn,\quad \forall f\in\Hilb_h.
    \]
    Taking $f=\wb_h-\wt_h$, we obtain
    \begin{align*}
    \|\wb_h-\wt_h\|_\nn^2
    &=\langle\wb_h,\wb_h-\wt_h\rangle_\nn-\langle\wt_h,\wb_h-\wt_h\rangle_\nn\\
    &=\langle\wt_h,\wb_h-\wt_h\rangle_{d_h^D}-\langle\wt_h,\wb_h-\wt_h\rangle_\nn\\
    &\le\chi_h\|\wt_h\|_{2,d_h^D}\|\wb_h-\wt_h\|_{2,d_h^D}\\
    &\le\frac{\chi_h}{\sqrt{1-\chi_h}}\|\wt_h\|_{2,d_h^D}\|\wb_h-\wt_h\|_{\nn},
    \end{align*}
    where the last two inequalities are both due to Lemma~\ref{lemma:concentration1}. By canceling a $\|\wb_h-\wt_h\|_\nn$, we have
    \[
    \|\wb_h-\wt_h\|_\nn\le\frac{\chi_h}{\sqrt{1-\chi_h}}\|\wt_h\|_{2,d_h^D}\le 2\chi_h(\eta_h+\|w_h^\star\|_{2,d_h^D}),
    \]
    where we use the fact that $\chi_h\le 1/2$ and triangular inequality. Combining the two terms, we have
    \[
    \Delta_h\le\left(1+\frac12\chi_h\right)\eta_h+2\chi_h(\eta_h+\|w_h^\star\|_{2,d_h^D})\le (1+3\chi_h)\eta_h+2\chi_h\|w_h^\star\|_{2,d_h^D}.
    \]
\end{proof}

\paragraph{Step 2: functional telescoping and recursion}
By Proposition~\ref{prop:duality} and Lemma~\ref{lemma:tracking-step1}, we learned that essentially we want to control
\[
\eta_h=\|\hatLambda_h-\Lambda_h^\star\|_*=\sup_{f\in\Hilb_h:\|f\|_{2,d_h^D}\le 1}|\hatLambda_h(f)-\Lambda_h^\star(f)|.
\]
The following telescoping lemma demonstrates the iterative nature of $\Lambda_h^\star$ and $\hatLambda_h$, since our \alg algorithm is level-by-level. As we will see, the iterative property of the population functional $\Lambda_h^\star$ depends on the projected backup operator $\bp_h$ in Definition~\ref{def:backup}; while the operator corresponding to the empirical functional $\hatLambda_h$ is the one-step realization operator $\hatP_h$ defined in Lemma~\ref{lemma:concentration2}, i.e.,
\[
(\hatP_h f)(s_h^\ii,a_h^\ii)=f(s_{h+1}^\ii,\pi),\quad f\in\Hilb_{h+1}.
\]
Such correspondences enable us to do functional telescoping; see the following lemma.

\begin{lemma}[Functional telescoping]\label{lemma:tracking-step2-1}
    For every $h\in[H]$ and $f\in\Hilb_h$,
    \[
    \hatLambda_h(f)-\Lambda_h^\star(f)=\hatLambda_1(g_1^{(h,f)})-\Lambda_1^\star(g_1^{(h,f)})+\sum_{t=1}^{h-1}\left\langle\wb_t\evn,((\hatP_t-\bp_t)g_{t+1}^{(h,f)})\evn\right\rangle_\nn,
    \]
    where $g_t^{(h,f)}$ is the shorthand for $\bp_t\bp_{t+1}\cdots\bp_{h-1} f$, i.e., the multi-step backup of $f\in\Hilb_h$ onto $\Hilb_t$.
\end{lemma}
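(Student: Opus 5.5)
The plan is to prove a one-step recursion for the functional error and then unroll it. Fix $h$ and $f\in\Hilb_h$, and write $g_t:=g_t^{(h,f)}$, with the conventions $g_h=f$ and $g_t=\bp_t g_{t+1}$. The main identity I want is, for each $2\le t\le h$,
\begin{equation*}
\hatLambda_t(g_t)-\Lambda_t^\star(g_t)=\hatLambda_{t-1}(g_{t-1})-\Lambda_{t-1}^\star(g_{t-1})+\bigl\langle\wb_{t-1}\evn,((\hatP_{t-1}-\bp_{t-1})g_t)\evn\bigr\rangle_\nn .
\end{equation*}
Summing this over $t=2,\ldots,h$ telescopes the functional differences. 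What remains is the $t=1$ term $\hatLambda_1(g_1)-\Lambda_1^\star(g_1)$ plus the sum of cross terms over $t=1,\ldots,h-1$, which is exactly the claimed formula. The case $h=1$ is trivial, since $g_1=f$ and the sum is empty.

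For the empirical side, I start from the definition $\hatLambda_t(g_t)=\frac1n\sum_i\wh_{t-1}^\ii g_t(s_t^\ii,\pi)$. By the definition of $\hatP_{t-1}$ this equals $\langle\wh_{t-1}^\nn,(\hatP_{t-1}g_t)\evn\rangle_\nn$. Proposition~\ref{prop:representer}(iii) gives $\wh_{t-1}^\nn=\wb_{t-1}\evn$, so I write this as $\langle\wb_{t-1}\evn,(\hatP_{t-1}g_t)\evn\rangle_\nn$. I then add and subtract $\langle\wb_{t-1}\evn,(\bp_{t-1}g_t)\evn\rangle_\nn$. Since $\bp_{t-1}g_t=g_{t-1}\in\Hilb_{t-1}$, the empirical zero-matching property of the lift in Proposition~\ref{prop:representer}(iii) turns the added term into $\hatLambda_{t-1}(g_{t-1})$. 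The leftover piece is the cross term.

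For the population side, Proposition~\ref{prop:representer}(i) gives
\[
\Lambda_t^\star(g_t)=\EE_{d_{t-1}^D}[\wstar_{t-1}\,P^\pi g_t].
\]
Because $\wstar_{t-1}\in\Hilb_{t-1}$, I can replace $P^\pi g_t$ by its projection $\Pi_{\Hilb_{t-1}}P^\pi g_t=\bp_{t-1}g_t=g_{t-1}$ without changing the inner product, exactly as in the proof of Proposition~\ref{prop:bound-weight}. This gives $\langle\wstar_{t-1},g_{t-1}\rangle_{d_{t-1}^D}=\Lambda_{t-1}^\star(g_{t-1})$. Subtracting the population identity from the empirical one yields the recursion.

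The step that needs the most care is the use of the lift $\wb_{t-1}$. Its defining property from Proposition~\ref{prop:representer}(iii) requires empirical zero matching loss at level $t-1$, so the lemma holds on the event where that is feasible. This is either assumed throughout the section or supplied by Proposition~\ref{prop:empirical-exact}, and I would state it explicitly. A second point is that the empirical pairing $\langle\wb_{t-1}\evn,(\bp_{t-1}g_t)\evn\rangle_\nn$ uses pointwise values of $\bp_{t-1}g_t$ at the sample points. These are well defined because, by Assumption~\ref{asm:leverage}, elements of $\Hilb_{t-1}$ are determined pointwise by their $L_2(d_{t-1}^D)$ class.
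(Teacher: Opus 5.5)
Your proposal is correct and uses essentially the same argument as the paper. The population side collapses through $\wstar_{t-1}\in\Hilb_{t-1}$ and $\bp_{t-1}=\Pi_{\Hilb_{t-1}}P^\pi$, and the empirical side through the lift $\wb_{t-1}$ and the zero-matching identity of Proposition~\ref{prop:representer}(iii); the only difference is cosmetic, since you fuse the two telescopes into one recursion whereas the paper telescopes them separately. Your caveat that empirical zero matching loss must hold at levels $1,\ldots,h-1$ is also used, implicitly, by the paper's proof.
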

\begin{proof}
    We first claim that $\Lambda_h^\star(f)=\Lambda_t^\star(g_t^{(h,f)})=\langle w_t^\star,g_t^{(h,f)}\rangle_{d_t^D}$ for every $1\le t\le h\le H$ and $f\in\Hilb_h$. Suppose this claim holds, we have that $\Lambda_h^\star(f)=\Lambda_1^\star(g_1^{(h,f)})$. Thus it remains to relate the empirical functionals. Write $g_t:=g_t^{(h,f)}\in\Hilb_t$. By the definition of $\hatLambda_{t+1}$, we have
    \[
    \hatLambda_{t+1}(g_{t+1})=\frac1n\sum_{i=1}^n\wh_t^\ii g_{t+1}(s_{t+1}^\ii,\pi)=\langle\wb_t,\hatP_t g_{t+1}\rangle_\nn,
    \]
    and by using the matching condition (Proposition~\ref{prop:representer}(iii)), we can also write
    \[
    \hatLambda_t(g_t)=\frac1n\sum_{i=1}^n\wh_t^\ii g_t(s_t^\ii,a_t^\ii)=\langle\wb_t\evn,g_t\evn\rangle_\nn.
    \]
    Since $g_t=\bp_tg_{t+1}$ by the backup recursion, the difference of the above two equations becomes
    \[
    \hatLambda_{t+1}(g_{t+1})-\hatLambda_t(g_t)=\langle\wb_t\evn,((\hatP_t-\bp_t)g_{t+1})\evn\rangle_\nn.
    \]
    By telescoping (summing over $t=1,\ldots,h-1$), we have
    \[
    \hatLambda_h(f)-\hatLambda_1(g_1)=\sum_{t=1}^{h-1}\langle\wb_t\evn,((\hatP_t-\bp_t)g_{t+1})\evn\rangle_\nn,
    \]
    since $f=g_h^{(h,f)}$. This proves the lemma. 
    
    Now we go back to prove the claim that $\Lambda_h^\star(f)=\Lambda_t^\star(g_t)=\langle w_t^\star,g_t\rangle_{d_t^D}$. If $t=h$, $g_t=g_h^{(h,f)}=f$ so the claim trivially holds. Consider $t<h$. For $t=h-1$, we can write
    \[
    \Lambda_h^\star(f)=\EE_{d_{h-1}^D}[w_{h-1}^\star\cdot (P^\pi f)]=\EE_{d_{h-1}^D}[w_{h-1}^\star\cdot (\Pi_{\Hilb_{h-1}}P^\pi f)]=\langle w_{h-1}^\star,\bp_{h-1} f\rangle_{d_{h-1}^D},
    \]
    where the second equality uses $w_{h-1}^\star\in\Hilb_{h-1}$ (Proposition~\ref{prop:representer}(i)) and orthogonality of $\Pi_{h-1}$ (the projection onto $\Hilb_{h-1}$), and the last equality uses the definition of $\bp_{h-1}=\Pi_{\Hilb_{h-1}}P^\pi$. Iterating this identity, we have
    \[
    \Lambda_h^\star(f)=\langle w_t^\star,\bp_{t}\bp_{t+1}\cdots\bp_{h-1}f\rangle_{d_t^D}=\langle w_t^\star,g_t\rangle_{d_t^D}=\Lambda_t^\star(g_t),
    \]
    where the last equality is due to the definition of $\Lambda_t^\star$.
\end{proof}

Using the telescoping in Lemma~\ref{lemma:tracking-step2-1}, we can now decompose this empirical-population tracking error (in the functional space) and thereby control $\eta_h$, yielding a recursion formula to be solved later.

\begin{lemma}[Functional recursion]\label{lemma:tracking-step2-2}
    With probability at least $1-\delta$, assume $\chi_t\le1/2$ for all $t\le h$. Then for every $h\in[H]$,
    \[
    \eta_h\le\sum_{t=1}^{h-1}a_{t:h}\eta_t+\sum_{t=1}^{h-1}a_{t:h}\|w_t^\star\|_{2,d_t^D}+\frac{4}{3}\rho_{1:h}\sqrt{\frac{2\kappa_h\log(4N_h/\delta)}{n}},
    \]
    where $a_{t:h}:=\tau_{t:h}+3\chi_t\sqrt{\kappa_h}(\rho_{t:h}+\rho_{t+1:h})$ with $\chi_h$ in Eq.~\eqref{eq:concentration1} and $\tau_{t:h}$ in Eq.~\eqref{eq:concentration2}.
\end{lemma}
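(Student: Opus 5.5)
The plan is to take the functional telescoping identity of Lemma~\ref{lemma:tracking-step2-1}, bound each term uniformly over $f\in\Hilb_h$ with $\|f\|_{2,d_h^D}\le 1$, and then take the supremum. By Proposition~\ref{prop:duality}, that supremum equals $\eta_h$. We work on the intersection of the events of Lemmas~\ref{lemma:concentration1} and~\ref{lemma:concentration2}. We add one more event for the initial term, and allocate the failure probabilities so that the total is at most $\delta$.

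\textbf{Initial term.} Since $\wh_0\equiv\wstar_0\equiv 1$, we have $\hatLambda_1(g)-\Lambda_1^\star(g)=\frac1n\sum_i g(s_1^\ii,\pi)-\EE_D[g(s_1,\pi)]$ for $g=g_1^{(h,f)}$. By Definition~\ref{def:backup} and Assumption~\ref{asm:leverage}, $\|g\|_\infty\le\rho_{1:h}\|f\|_\infty\le\rho_{1:h}\sqrt{\kappa_h}$, and the variance is at most $\rho_{1:h}^2\kappa_h$. Because $g$ is linear in $f$, Bernstein's inequality applied to $f$ in a $1/8$-cover of $\Bcal_h$ of size $N_h$ gives a bound of order $\rho_{1:h}\sqrt{2\kappa_h\log(4N_h/\delta)/n}$. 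The same $4/3$ extension trick as in Lemma~\ref{lemma:concentration1} then produces the last term of the stated bound, with the higher-order Bernstein piece absorbed or dominated.

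\textbf{Cross terms.} For each $t<h$, write $\wb_t=\wstar_t+(\wt_t-\wstar_t)+(\wb_t-\wt_t)$, where all three pieces lie in $\Hilb_t$.
\begin{itemize}
\item For the first two pieces, Lemma~\ref{lemma:concentration2} gives directly
\[
|\langle u\evn,((\hatP_t-\bp_t)g_{t+1})\evn\rangle_\nn|\le\tau_{t:h}\|u\|_{2,d_t^D}.
\]
This contributes $\tau_{t:h}(\|\wstar_t\|_{2,d_t^D}+\eta_t)$.
\item For the lift remainder $\wb_t-\wt_t$, concentration is not available because $\wb_t$ is data-dependent. We bound it crudely by Cauchy--Schwarz in the empirical norm. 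Using $\|\hatP_tg_{t+1}\|_\infty+\|\bp_tg_{t+1}\|_\infty\le(\rho_{t:h}+\rho_{t+1:h})\sqrt{\kappa_h}$, the term is at most $\|\wb_t-\wt_t\|_\nn(\rho_{t:h}+\rho_{t+1:h})\sqrt{\kappa_h}$.
\item From the proof of Lemma~\ref{lemma:tracking-step1}, $\|\wb_t-\wt_t\|_\nn\le 2\chi_t(\eta_t+\|\wstar_t\|_{2,d_t^D})$, which lies within the allowed $3\chi_t$.
\end{itemize}
Summing these contributions gives exactly the coefficient $a_{t:h}$ multiplying both $\eta_t$ and $\|\wstar_t\|_{2,d_t^D}$.

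The main obstacle is this lift-remainder piece. The crux is recognizing that $\wb_t$ cannot be fed into the uniform concentration bound directly, and that it must instead be split off and controlled through the near-isometry argument of Step~1, at the price of the $\chi_t\sqrt{\kappa_h}$ factor. A further point to check is the measurability requirement: Lemma~\ref{lemma:concentration2} is uniform over all $u\in\Hilb_t$, so it applies to the data-dependent $\wt_t-\wstar_t$, and no independence argument is needed.
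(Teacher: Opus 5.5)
Your proposal is correct and follows the paper's proof essentially step for step: the same telescoping identity, the same covering bound for the initial term, and the same split $\wb_t=(\wt_t-\wstar_t)+\wstar_t+(\wb_t-\wt_t)$. As in the paper, you apply Lemma~\ref{lemma:concentration2} to the first two pieces, and empirical Cauchy--Schwarz plus the Step~1 bound $\|\wb_t-\wt_t\|_\nn\le 2\chi_t(\eta_t+\|\wstar_t\|_{2,d_t^D})$ to the third. Your direct sup-norm bound on $\|(\hatP_t-\bp_t)g_{t+1}^{(h,f)}\|_\nn$ is a bit cleaner than the paper's detour through near-isometry.

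Two small corrections to your commentary. First, the split is a convenience rather than a necessity: Lemma~\ref{lemma:concentration2} holds uniformly over all $u\in\Hilb_t$, and $\wb_t\in\Hilb_t$, so it could be applied to $\wb_t$ (or to $\wb_t-\wt_t$) directly. Second, using Hoeffding instead of Bernstein for the initial term gives exactly the stated $\frac43\rho_{1:h}\sqrt{2\kappa_h\log(4N_h/\delta)/n}$, with no higher-order piece to absorb.
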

\begin{proof}
    Fix $f\in\Hilb_h$ with $\|f\|_{2,d_h^D}\le 1$. By Lemma~\ref{lemma:tracking-step2-1}, we have
    \[
    \hatLambda_h(f)-\Lambda_h^\star(f)=\hatLambda_1(g_1^{(h,f)})-\Lambda_1^\star(g_1^{(h,f)})+\sum_{t=1}^{h-1}\left\langle\wb_t\evn,((\hatP_t-\bp_t)g_{t+1}^{(h,f)})\evn\right\rangle_\nn.
    \]
    For the first-step error term $\hatLambda_1(g)-\Lambda_1^\star(g)$ with $g:=g_1^{(h,f)}$, by Definition~\ref{def:functional}, we have
    \begin{align*}
    \hatLambda_1(g)-\Lambda_1^\star(g)&=\frac1n\sum_{i=1}^n\wh_0^\ii g(s_1^\ii,\pi)-\EE_D[w_0^\star(s_0,a_0)g(s_1,\pi)]\\
    &=\frac1n\sum_{i=1}^n g(s_1^\ii,\pi)-\EE_D[g(s_1,\pi)]\le\frac{4}{3}\rho_{1:h}\sqrt{\frac{2\kappa_h\log(4N_h/\delta)}{n}},
    \end{align*}
    with probability at least $1-\delta/2$, where we used $\|g\|_\infty\le\rho_{1:h}$ and the standard covering argument over $\Bcal_h$ (since $g=g_1^{(h,f)}$ is uniquely determined by $f\in\Hilb_h$). It remains to tackle the second summation term.

    For notational simplicity, we omit the $\evn$ in $\langle\cdot\evn,\cdot\evn\rangle_\nn$ in the following proof. We decompose this functional error into the following three terms using $\wb_t=(\wt_t-w_t^\star)+w_t^\star+(\wb_t-\wt_t)$:
    \begin{gather*}
        \text{(I)}=\left|\left\langle\wt_t-w_t^\star,(\hatP_t-\bp_t)g_{t+1}^{(h,f)}\right\rangle_\nn\right|,\\
        \text{(II)}=\left|\left\langle w_t^\star,(\hatP_t-\bp_t)g_{t+1}^{(h,f)}\right\rangle_\nn\right|,\\
        \text{(III)}=\left|\left\langle \wb_t-\wt_t,(\hatP_t-\bp_t)g_{t+1}^{(h,f)}\right\rangle_\nn\right|.
    \end{gather*}
    By Lemma~\ref{lemma:concentration2}, we have $\text{(I)}\le \tau_{t:h}\eta_t$ and $\text{(II)}\le\tau_{t:h}\|w_t^\star\|_{2,d_t^D}$. For the third term,
    \begin{align*}
    \text{(III)}&\le\|\wb_t-\wt_t\|_{\nn}\cdot\left\|(\hatP_t-\bp_t)g_{t+1}^{(h,f)}\right\|_\nn\\
    &\le\frac{\chi_t}{\sqrt{1-\chi_t}}(\eta_t+\|w_t^\star\|_{2,d_t^D})\left(\left\|\hatP_t g_{t+1}^{(h,f)}\right\|_\nn+\left\|g_t^{(h,f)}\right\|_\nn\right),
    \end{align*}
    where the first inequality is due to Cauchy-Schwarz, and the second inequality we leveraged proof of Lemma~\ref{lemma:tracking-step1} to control the first $\|\wb_t-\wt_t\|_\nn$ term, and the triangular inequality to control the second term (by noticing that $g_t^{(h,f)}=\bp_t g_{t+1}^{(h,f)}$).

    For the $\|\hatP_t g_{t+1}^{(h,f)}\|_\nn$ term, we can bound it via Definition~\ref{def:backup}:
    \[
    \left\|\hatP_t g_{t+1}^{(h,f)}\right\|_\nn=\sqrt{\frac{1}{n}\sum_{i=1}^ng_{t+1}^{(h,f)}(s_{t+1}^\ii,\pi)^2}\le\max_{s,a}\left|g_{t+1}^{(h,f)}(s,a)\right|\le \sqrt{\kappa_h}\rho_{t+1:h}.
    \]
    where we use Assumption~\ref{asm:leverage} and the fact that $\|f\|_{2,d_h^D}\le 1$. For the $\|g_t^{(h,f)}\|_\nn$ term, we can similarly bound it via Definition~\ref{def:backup} (transferred via isometry concentration at first):
    \[
    \left\|g_t^{(h,f)}\right\|_{\nn}\leq\sqrt{1+\chi_t}\left\|g_t^{(h,f)}\right\|_{2,d_t^D}\le\sqrt{1+\chi_t}\cdot \sqrt{\kappa_h}\rho_{t:h}\le 2\sqrt{\kappa_h}\rho_{t:h}.
    \]
    Hence combining the terms, the third term can be controlled by
    \begin{align*}
    \text{(III)}&\le \frac{2\chi_t\sqrt{\kappa_h}}{\sqrt{1-\chi_t}}(\rho_{t:h}+\rho_{t+1:h})(\eta_t+\|w_t^\star\|_{2,d_t^D})\\&\le 3\chi_t\sqrt{\kappa_h}(\rho_{t:h}+\rho_{t+1:h})(\eta_t+\|w_t^\star\|_{2,d_t^D}).
    \end{align*}
    Summing the three bounds and taking the supremum over $\|f\|_{2,d_h^D}\le 1$ controls the functional error term. By union bound over three events: the first step concentration error with failure probability $\delta/2$, the event in Lemma~\ref{lemma:concentration1} with failure probability $\delta/4$, and the event in Lemma~\ref{lemma:concentration2} with failure probability $\delta/4$; we prove the claim with probability at least $1-\delta$.
\end{proof}

\paragraph{Step 3: solving the recursion} As shown in Lemma~\ref{lemma:tracking-step2-2}, what we get is essentially a recursion in $\eta_h$ (i.e., the RHS also appears $\eta_t$ for $t\le h$). Therefore the final step of proving Theorem~\ref{thm:tracking} is to solve for this recursion to control $\eta_h$, hence $\Delta_h$.

\begin{proof}[Proof of Theorem~\ref{thm:tracking}]
Abbreviate $\epsilon_{1:j}=\frac{4}{3}\rho_{1:j}\sqrt{2\kappa_j\log(4N_j/\delta)/n}$. Let
\[
A_h:=\max_{1\le j\le h}\epsilon_{1:j}+\sum_{t=1}^{j-1}a_{t,j}\|w_t^\star\|_{2,d_t^D},\quad B_h:=\max_{1\le j\le h}\sum_{t=1}^{j-1}a_{t,j},\quad E_h:=\max_{1\le j\le h}\eta_j.
\]
By Lemma~\ref{lemma:tracking-step2-2}, with probability at least $1-\delta$, for every $j\le h$, we have
\[
\eta_j\le\sum_{t=1}^{j-1}a_{t,j}\eta_t+\epsilon_{1:j}+\sum_{t=1}^{j-1}a_{t,j}\|w_t^\star\|_{2,d_t^D}\le B_hE_h+A_h.
\]
Taking the maximum over $j\le h$ gives $E_h\le B_hE_h+A_h$. Hence, we have
\[
\eta_h\le \max_{1\le j\le h}\eta_j=E_h\le\frac{A_h}{1-B_h}\le 2A_h\le 2\epsilon_{1:h}+2\sum_{t=1}^{h-1}a_{t:h}\|w_t^\star\|_{2,d_t^D},
\]
where we use $B_h\le1/2$ since our choice of $n$ that $n\gtrsim(\max_{t\le h}\kappa_t)^2(\sum_{t=1}^h\rho_{t:h})^2\log(H^2N_tN_h/\delta)$ automatically enables $B_h\le 1/2$; and the last inequality is due to the maximum of $A_h$ is exactly taken at $j=h$ (because all elements are non-negative; that is, $a_{t,j}\|w_t^\star\|_{2,d_t^D}\ge0$). Now we can apply Lemma~\ref{lemma:tracking-step1}:
\[
\Delta_h\le(1+3\chi_h)\eta_h+2\chi_h\|w_h^\star\|_{2,d_h^D}\le 5\epsilon_{1:h}+5\sum_{t=1}^{h-1}a_{t:h}\|w_t^\star\|_{2,d_t^D}+2\chi_h\|w_h^\star\|_{2,d_h^D},
\]
where we use $\chi_h\le 1/2$ (hence $2(1+3\chi_h)\le 5$). By the definitions of $\chi_h$ in Eq.~\eqref{eq:concentration1} and $\tau_{t:h}$ in Eq.~\eqref{eq:concentration2}, we have
\begin{align*}
    \Delta_h
    &\le5\epsilon_{1:h}+5\sum_{t=1}^{h-1}a_{t:h}\|w_t^\star\|_{2,d_t^D}+2\chi_h\|w_h^\star\|_{2,d_h^D}\\
    &\le\frac{4}{3}\rho_{1:h}\sqrt{\frac{2\kappa_h\log(4N_h/\delta)}{n}}
    +\frac{20}{3}\sqrt{\kappa_h}\sum_{t=1}^{h-1}(\rho_{t:h}+\rho_{t+1:h})\|\wstar_t\|_{2,d_t^D}\\
    &\qquad\cdot\Biggl(\sqrt{\frac{2\kappa_h\log\tfrac{8H^2N_tN_h}{\delta}}{n}}+\frac{2\sqrt{\kappa_t\kappa_h}\log\tfrac{8H^2N_tN_h}{\delta}}{3n}+3\sqrt{\frac{2\kappa_h\log\frac{8HN_h^2}{\delta}}{n}}+\frac{4\kappa_h\log\tfrac{8HN_h^2}{\delta}}{n}\Biggr)\\
    &\qquad+\frac{8}{3}\|\wstar_h\|_{2,d_h^D}\Biggl(\sqrt{\frac{2\kappa_h\log\frac{8HN_h^2}{\delta}}{n}}+\frac{4\kappa_h\log\tfrac{8HN_h^2}{\delta}}{3n}\Biggr)\\
    &\lesssim\sum_{t=1}^h\rho_{t:h}\sqrt{\frac{\kappa_h^2\log\tfrac{H^2N_tN_h}{\delta}}{n}}\|\wstar_t\|_{2,d_t^D}+\rho_{1:h}\sqrt{\frac{\kappa_h\log\tfrac{N_h}{\delta}}{n}}+\text{higher order term}.
\end{align*}
We use $\lesssim$ to ignore some universal constants (here the constant can be set to $320/3$ if we combine all terms), and the higher order term is of order $O(1/n)$ in the above equation. And we also combined the logarithmic terms and the first step term (by our choice of $n\gtrsim(\max_{t\le h}\kappa_t)^2(\sum_{t=1}^h\rho_{t:h})^2\log(H^2N_tN_h/\delta)$). Therefore, ignoring the higher-order term, we prove Theorem~\ref{thm:tracking}: for any $h\in[H]$, with probability at least $1-\delta$,
\[
\left\|\wh_h^\nn-\wstar_h\evn\right\|_\nn\lesssim\sum_{t=1}^h\rho_{t:h}\sqrt{\frac{\kappa_h\log(H^2N_tN_h/\delta)}{n}}\|\wstar_t\|_{2,d_t^D}+\rho_{1:h}\sqrt{\frac{\kappa_h\log(N_h/\delta)}{n}}.
\]

\end{proof}

\subsection{Proofs of Proposition~\ref{prop:empirical-exact} and Corollary~\ref{thm:random-design}}

Here we provide the proof of Proposition~\ref{prop:empirical-exact}, as well as the random-design result of the \alg algorithm (Corollary~\ref{thm:random-design}). As stated in Proposition~\ref{prop:empirical-exact}, the population 0 matching loss will imply the empirical 0 matching loss with high probability. In the linear case, this is obvious since the invertible covariance $\Sigma$ implies the invertible empirical covariance $\hatSigma$ by matrix Bernstein inequality. However, under GFA, we need to leverage the near-isometry concentration argument (Lemma~\ref{lemma:concentration1}).

\begin{proof}[Proof of Proposition~\ref{prop:empirical-exact}]
As we will see, the empirical 0 matching loss does not rely on the whole function $f\in\Hilb_h$, but its sample-evaluation $f\evn=(f(s_h^{(1)},a_h^{(1)}),\ldots,f(s_h^{(n)},a_h^{(n)}))$, where  the sample evaluation map $\evn:\Hilb_h\to\RR^n$. The near-isometry condition in Lemma~\ref{lemma:concentration1} implies that $\evn$ is injective. Indeed, if $f\evn=0$, then $\|f\evn\|_\nn=0$, which implies
\[
0=\|f\evn\|_\nn^2\ge(1-\chi_h)\|f\|_{2,d_h^D}^2.
\]
Since $\chi_h<1/2$ by our choice of $n$, this gives $\|f\|_{2,d_h^D}=0$, hence $f=0$ as an element of $\Hilb_h$.

Now consider the sample evaluation space $\Hilb_h\evn=\{f\evn:f\in\Hilb_h\}\subseteq\RR^n$. We can define a linear functional $\ell_h:\Hilb_h\evn\to\RR$ by
\[
\ell_h(f\evn):=\hatLambda_h(f)=\frac1n\sum_{i=1}^n\wh_{h-1}^\ii f(s_h^\ii,\pi),\quad \forall f\evn\in\Hilb_h\evn,
\]
This is well defined. To see this, suppose $f\evn=g\evn$. Then $(f-g)\evn=0$ by linearity; and $f=g$ in $\Hilb_h$ by injectivity of this sample evaluation operator $\evn$, i.e., $\|f-g\|_{2,d_h^D}=0$. Therefore, we can apply generalized leverage (Assumption~\ref{asm:leverage}) to give
\[
|f(s,a)-g(s,a)|\le\sqrt{\kappa_h}\|f-g\|_{2,d_h^D}=0,\quad\forall (s,a)\in\Scal_h\times\Acal.
\]
Thus $f$ and $g$ also agree pointwise. In particular, this implies $f(s_h^\ii,\pi)=g(s_h^\ii,\pi)$ for all $i\in[n]$. Hence $\hatLambda_h(f)=\hatLambda_h(g)$. So the linear functional $\ell_h$ is well-defined.

Next, since $\Hilb_h\evn$ is a subspace of $\RR^n$, it is also finite-dimensional. Equip $\Hilb_h\evn$ with the empirical inner product $\langle z,z'\rangle_\nn=\frac1nz^\top z'$. By the finite-dimensional Riesz representation theorem, there exists some vector $a_h\in\Hilb_h\evn\subseteq \RR^n$ such that
\[
\ell_h(z)=\langle a_h,z\rangle_\nn,\quad\forall z\in\Hilb_h\evn.
\]
Taking $z=f\evn$, we have
\[
\hatLambda_h(f)=\ell_h(f\evn)=\langle a_h,f\evn\rangle_\nn=\frac1n\sum_{i=1}^n a_h^\ii f(s_h^\ii,a_h^\ii),\quad\forall f\in\Hilb_h.
\]
By setting $\wh_h^\nn:=a_h$, we would achieve empirical 0 matching loss, i.e.,
\[
\frac1n\sum_{i=1}^n\wh_h^\ii f(s_h^\ii,a_h^\ii)=\hatLambda_h(f)=\frac1n\sum_{i=1}^n\wh_{h-1}^\ii f(s_h^\ii,\pi).
\]
So the empirical matching loss over $\Hilb_h$ is zero. Since $\Fcal_h\subseteq\Hilb_h$, the empirical matching loss over $\Fcal_h$ is also zero. Thus the empirical minimizer used by \alg also achieves zero loss.
\end{proof}

The proof of Corollary~\ref{thm:random-design} is a direct consequence of Theorem~\ref{thm:tracking} (by leveraging the same procedure of telescoping as our fixed-design result in Theorem~\ref{thm:main}). As we will see in the following, the empirical-population tracking error is paired with the average Bellman residual, leading it to a higher-order term so that the dominant term becomes random-design control.

\begin{proof}[Proof of Corollary~\ref{thm:random-design}]

First, we make the following decomposition of the suboptimality gap $J(\pi)-\widehat{J}(\pi)$:
\begin{align*}
    |J(\pi)-\widehat{J}(\pi)|
    &=\left|Q^\pi(s_0,a_0)-\frac1n\sum_{i=1}^n\sum_{h=0}^H\wh_h^\ii r_h^\ii\right|\\
    &=\left|\frac1n\sum_{i=1}^n\wh_0^\ii Q^\pi(s_0^\ii,a_0^\ii)-\frac1n\sum_{i=1}^n\sum_{h=0}^H\wh_h^\ii r_h^\ii\right|\\
    &=\left|\frac1n\sum_{i=1}^n\sum_{h=0}^H\left(\wh_h^\ii Q^\pi(s_h^\ii,a_h^\ii)-\wh_{h+1}^\ii Q^\pi(s_{h+1}^\ii,a_{h+1}^\ii)\right)-\frac1n\sum_{i=1}^n\sum_{h=0}^H\wh_h^\ii r_h^\ii\right|\\
    &=\left|\frac1n\sum_{i=1}^n\sum_{h=0}^H\left(\wh_h^\ii Q^\pi(s_h^\ii,a_h^\ii)-\wh_h^\ii Q^\pi(s_{h+1}^\ii,\pi)-\wh_h^\ii r_h^\ii\right)\right|\\
    &\le \underbrace{\left|\frac1n\sum_{i=1}^n\sum_{h=0}^H \wstar_h (s_h^\ii,a_h^\ii)\varepsilon_h^\ii\right|}_\text{(I): random design control}+\underbrace{\left|\frac1n\sum_{i=1}^n\sum_{h=0}^H\left(\wh_h^\ii-\wstar_h(s_h^\ii,a_h^\ii)\right)\varepsilon_h^\ii\right|}_\text{(II): tracking error},
\end{align*}
where $\varepsilon_h^\ii:=Q^\pi(s_h^\ii,a_h^\ii)-r_h^\ii-Q^\pi(s_{h+1}^\ii,\pi)$ denotes the Bellman residual at current transition sample $(s_h^\ii,a_h^\ii,r_h^\ii,s_{h+1}^\ii)$. We use that $\wh_0\equiv 1$ and $\wh_{H+1}\equiv 0$; and the fourth equation is due to our empirical 0 matching loss in Proposition~\ref{prop:empirical-exact}.

The first term is the random-design control on population weight functional $\wstar_h$. For each fixed $h$, define $X_{h,i}:=\wstar_h(s_h^\ii,a_h^\ii)\varepsilon_h^\ii$. Conditioned on $\{(s_h^\ii,a_h^\ii)\}_{i=1}^n$, the variables $\{X_{h,i}\}_{i=1}^n$ are independent, mean-zero, and satisfy $|X_{h,i}|\le V_{\max}|\wstar_h(s_h^\ii,a_h^\ii)|$ since $|\varepsilon_h^\ii|\le|V_{\max}|$. Therefore, by standard Hoeffding's inequality, we have: with probability at least $1-\alpha$,
\[
\left|\frac1n\sum_{i=1}^nX_{h,i}\right|\le V_{\max}\sqrt{\frac{2\log(2/\alpha)}{n}}\|\wstar_h\evn\|_{[n]},
\]
where $\|\wstar_h\evn\|_\nn=\left(\tfrac1n\sum_{i=1}^n\wstar_h(s_h^\ii,a_h^\ii)^2\right)^{1/2}$. Using the near-isometry transfer in Lemma~\ref{lemma:concentration1} and $\chi_h\le 1/2$ (which is implied by our choice of $n$ in Corollary~\ref{thm:random-design}), we have
\[
\|\wstar_h\evn\|_\nn\le\sqrt{1+\chi_h}\|\wstar_h\|_{2,d_h^D}\le\sqrt{\frac32}\|\wstar_h\|_{2,d_h^D}\le 2\|\wstar_h\|_{2,d_h^D}.
\]
Take $\alpha=\delta/(2(H+1))$, then union bound over $h\in[H]$, we can control the first term by
\[
\text{(I)}\le 2V_{\max}\sqrt{\frac{2\log(4(H+1)/\delta)}{n}}\sum_{h=0}^H\|\wstar_h\|_{2,d_h^D}.
\]
Now for the tracking part, define $Y_{h,i}=\Delta_h^\ii\varepsilon_h^\ii$. Because $\wh_h^\ii$ depends only on the trajectory prefix up to $(s_h^\ii,a_h^\ii)$, and $\wstar_h(s_h^\ii,a_h^\ii)$ is also measurable with respect to $(s_h^\ii,a_h^\ii)$, the coefficient $\Delta_h^\ii$ is fixed when conditioning on the level-$h$ prefixes. Therefore, conditional on those prefixes, the variables $\{Y_{h,i}\}_{i=1}^n$ are again independent, mean-zero, and $|Y_{h,i}|\le V_{\max}|\Delta_h^\ii|$. Apply Hoeffding's inequality again, for each fixed $h$, with probability at least $1-\alpha$,
\[
\left|\frac1n\sum_{i=1}^nY_{h,i}\right|\le V_{\max}\sqrt{\frac{2\log(2/\alpha)}{n}}\|\Delta_h\|_\nn=V_{\max}\sqrt{\frac{2\log(2/\alpha)}{n}}\|\wh_h^\nn-\wstar_h\evn\|_\nn.
\]
Taking again $\alpha=\delta/(2(H+1))$ and union bound over $h$, we can control the second term by
\[
\text{(II)}\le V_{\max}\sqrt{\frac{2\log(4(H+1)/\delta)}{n}}\sum_{h=1}^H\|\wh_h^\nn-\wstar_h\evn\|_\nn,
\]
since $\wstar_0\equiv 1$ and $\wh_0^\ii=1$ for all $i\in[n]$. So now we can apply the tracking theorem (Theorem~\ref{thm:tracking}) to see that the second term is indeed a higher-order term, since both the average Bellman error and the tracking error exhibit the rate of $n^{-1/2}$ (hence an overall rate of $n^{-1}$). Combining $\text{(I)}$ and $\text{(II)}$ proves the corollary.
\end{proof}
\section{Further discussion on MIS} \label{app:mis}

\subsection{Issues with non-parametric weights in MIS} \label{app:double-sampling}
MIS methods often take a minimax form similar to our \alg, and here we provide further discussion on the issue with learning non-parametric weights in MIS. Taking MQL \citep{uehara2019minimax} as an example, whose population loss is (in the infinite-horizon discounted setting)
$$
\argmin_{f\in\Fcal} \max_{w\in\Wcal} \Lcal_{\mathsf{q}}(f, w):= \left|\EE_{d^D}[w(s,a) (f(s,a) - r - \gamma f(s',\pi))]\right|.
$$
When $w$ is not restricted to a parametric class $\Wcal$ but is a separate scalar for each data point, one can immediately see that the loss $\Lcal$ is ill-behaved: the original MIS derivations rely on the fact that $\max_{w\in\Wcal}\Lcal(Q^\pi, w) = 0$ \citep{uehara2019minimax, jiang2020minimax}, but when $w$ is non-parametric, $\Lcal(Q^\pi, w) \ne 0$ in general when the environment is stochastic, since $w$ can simply choose to be positive or negative depending on the sign of $Q^\pi(s,a) - r - \gamma Q^\pi(s',\pi)$. (This issue can be viewed as a version of the infamous double-sampling problem.) In contrast, our algorithm and analyses forbid the choice of $w$ to depend on the next-state randomness, which avoids this problem. Now, MQL learns $f$ using $w$ as discriminator, and its ``dual'' method such as MWL \citep{uehara2019minimax} learns $w$ using $f$ as discriminator, which is more similar to our method. We conjecture that MWL is subject to similar issues, though a clear counterexample remains to be found.

\subsection{Data-dependent bounds for MIS}
Consider MWL, whose population loss is (we assume initial state-action pair is fixed as $(s_0,a_0)$)
$$
\argmin_{w\in\Wcal} \sup_{f\in\Fcal} \Lcal_{\mathsf{w}}(w, f):= \left|f(s_0, a_0) + \frac{1}{1-\gamma} \EE_{d^D}[w(s,a)(\gamma f(s',\pi) - f(s,a))]\right|.
$$
The actual algorithm optimizes the empirical loss $\emp{\Lcal}_{\mathsf{w}}$ where $\EE_{d^D}$ is replaced with the empirical expectation $\emp{\EE}_{d^D}$, and forms the final prediction $\emp{J}(\pi)= \frac{1}{1-\gamma} \emp{\EE}_{d^D}[\emp{w} r]$ where $\emp{w}$ is the learned $w$.  The standard finite-sample guarantee for MWL pays for the statistical complexities of both $\Wcal$ and $\Fcal$. Below we show that the complexity of $\Fcal$ can be avoided in a data-dependent guarantee similar to our Theorem~\ref{thm:main}: for any fixed $w\in\Wcal$, we have
\begin{align*}
&~ \left|\frac{1}{1-\gamma}\emp{\EE}_{d^D}[w(s,a) r] - J(\pi) \right| \\
= &~ \left|\frac{1}{1-\gamma}\emp{\EE}_{d^D}[w(s,a) (Q^\pi(s,a) -  r - \gamma Q^\pi(s',\pi)] - Q^\pi(s_0,a_0) +  \frac{1}{1-\gamma}\emp{\EE}_{d^D}[w(s,a) (-Q^\pi(s,a) + \gamma Q^\pi(s',\pi))\right| \\
\le &~ \left|\frac{1}{1-\gamma}\emp{\EE}_{d^D}[w(s,a) (Q^\pi(s,a) -  r - \gamma Q^\pi(s',\pi)]\right| + \sup_{f\in\Fcal} \emp\Lcal_{\mathsf{w}}(w, f). \tag{$Q^\pi\in\Fcal$}
\end{align*}
The first term enjoys a ``fixed-design'' concentration bound similar to our proof of Theorem~\ref{thm:main}, but needs a union bound over $w\in\Wcal$ so that the bound holds for $\emp{w}$. The second term is the empirically observed loss, similar to our $\emp{\Lcal}_h$ term in Theorem~\ref{thm:main}.


\end{document}